\newcommand{\rank}{\mathrm{rank}}
\newcommand{\diag}{\mathrm{diag}}
\newcommand{\ccal}{\mathcal{C}}
\newcommand{\pcal}{\mathcal{P}}
\newtheorem{definition}{Definition}
\newtheorem{theorem}{Theorem}
\newtheorem{lemma}{Lemma}
\newtheorem{assumption}{Assumption}
\newtheorem{corollary}{Corollary}
\begin{document}

\def\spacingset#1{\renewcommand{\baselinestretch}%
{#1}\small\normalsize} \spacingset{1}

\newcommand{\norm}[1]{\left\lVert#1\right\rVert}
\newcommand{\abs}[1]{\left|#1\right|}


\title{\bf Informative core identification in complex networks}

  \author{\\
  Ruizhong Miao and Tianxi Li\\
 \\
  Department of Statistics\\
    University of Virginia}
 \maketitle

\bigskip
\begin{abstract}
In network analysis, the core structure of modeling interest is usually hidden in a larger network in which most structures are not informative. The noise and bias introduced by the non-informative component in networks can obscure the salient structure and limit many network modeling procedures' effectiveness. This paper introduces a novel core-periphery model for the non-informative periphery structure of networks without imposing a specific form for the informative core structure. We propose spectral algorithms for core identification as a data preprocessing step for general downstream network analysis tasks based on the model. The algorithm enjoys a strong theoretical guarantee of accuracy and is scalable for large networks. We evaluate the proposed method by extensive simulation studies demonstrating various advantages over many traditional core-periphery methods. The method is applied to extract the informative core structure from a citation network and give more informative results in the downstream hierarchical community detection.
\end{abstract}


\newpage
\spacingset{1.5} 

\section{Introduction}\label{sec:intro}

Network data, representing interactions and relationships between units, have become ubiquitous with the rapid development of science and technology. Analyzing such complex and structurally novel data has resulted in a rich body of new ideas and tools in physics, mathematics, statistics, computer science, and social sciences. In particular, given that complex network structures are typically noisy and complicated, treating the network as a random instantiation of a probabilistic model has been widely used to learn the structural properties while ignoring unnecessary noisy details. This approach can be traced back as early as the work of  \citet{erdos1959random}. Later work of \cite{aldous1981representations,hoover1979relations} further set up foundations and frameworks for more flexible random network modeling. More recently, significant progress has been achieved to make network analysis more computationally efficient, scientifically interpretable with theoretical guarantees \citep{albert2002statistical, hoff2002latent, bickel2009nonparametric, zhao2012consistency, newman2016equivalence, gao2017achieving, athreya2017statistical, mukherjee2018mean}. 

Though these methods have been used to solve many significant problems in different fields, empirically, they sometimes fail to learn structural information effectively. This is because most network models assume a particular type of structure of interest. However, one issue that complicates matters in practice is the scarcity of interesting or informative structures in large-scale networks. In other words, the presumed structure of interest may only be valid for a subnetwork, while the rest of the network may be noninformative. For example, it is observed by \citet{ugander2013subgraph} that the first a few moments in 100 Facebook subnetworks are very similar to the Erd\"{o}s-Renyi model. Moreover, \citet{gao2017testing2} tested these networks, observing that most of them show no evident difference from purely random connections and admit no interesting structure. For another example, preprocessing was applied in \citet{wang2016discussion,li2020network,li2020hierarchical} to remove a subset of nodes before the community detection algorithms were applied. Such preprocessing is reported as a crucial step for successful community analysis. In these analyses,  the $k$-core pruning algorithm \citep{seidman1983network} was used to remove some low-degree nodes, and the networks under study were assumed to have a core-periphery structure, a natural framework for our current problem. For an illustration, in \Cref{fig:CP_Example_11}, we plot the top eigenvalues of a random network generated by the model in \Cref{fig:CP_const} (details can be found in Section~\ref{sec:proposedModel}), and periphery nodes follow our proposed model. The network model has rank 3. Hence, when the signal-to-noise ratio is manageable, we would observe a large eigengap between the 3rd and the 4th eigenvalues. As we increase the number of periphery nodes, however, the eigengap vanishes, and the model looks like rank 1, obscuring the informative structure. An effective preprocessing method should correctly identify the core and filter out the peripheries.

The core-periphery structure has been studied in network literature for long. For example, \citet{borgatti2000models} define the structure as a special case of the stochastic block model \citep{holland1983stochastic}. This definition of core-periphery is used by \citet{zhang2015identification, priebe2019two} as well as a related problem called ``planted clique problem" \citep{alon1998finding,dekel2014finding}. Under this definition, the network core is a densely connected Erd\"{o}s-R\'{e}nyi network, which is too restrictive to be interesting settings for any downstream analysis. Meanwhile, this definition heavily relies on the density gap between the core and the periphery \citep{zhang2015identification, kojaku2018core} which may not be true in many applications. \cite{naik2019sparse} recently propose another core-periphery model. The core structure is more general than the Erd\"{o}s-R\'{e}nyi but still follows a restrictive parametric form. Moreover, the model can only generate networks with node degrees at least as dense as the square root of the network size, which is too dense to model most real-world networks. On the other hand, algorithm-based methods \citep{lee2014density, della2013profiling, barucca2016centrality,cucuringu2016detection,rombach2017core} typically assign a ``coreness" score to each node based on certain topological assumptions. This class of methods is not well-understood in their statistical properties. Another related research problem is the submatrix localization problem \citep{butucea2015sharp,deshpande2015improved,hajek2017information,cai2017computational}. The objective is to find $K$ densely connected subgraphs planted in a large Erd\"{o}s-R\'{e}nyi graph in this type of problem, and the $K$ subgraphs are usually assumed to be Erd\"{o}s-R\'{e}nyi graphs, which is again too restrictive in practice.

This paper aims to bridge the gap between the theoretically predicted effectiveness of network modeling and the empirical expectation in data analysis by proposing a principled and computationally efficient preprocessing method of extracting the informative structure from the non-informative background noise. We introduce a core-periphery model for informative and non-informative structures. The novelty of our model comes in two folds. Firstly, unlike traditional definitions, our distinction between the core and periphery components is whether the component has informative connection patterns. Secondly, our model does not assume a specific model for the core component. These two substantive distinctions highlight the advantages of our method. Since we do not constrain our core structure to a specific network model, our framework admits the generality needed as a preprocessing step for any downstream network analysis. Meanwhile, our core-periphery definition emphasizes what we care about the most -- the informative structure for network modeling. Therefore, our assumption can be phrased as an ``informative-core-noninformative-periphery" structure.

Under the proposed model, we develop spectral algorithms to identify the core structure with theoretically provable guarantees. In particular, we will show that our algorithms can exactly identify the core component even on sparse networks -- the so-called ``strong consistency" guarantee. The strong consistency is crucial in our context (compared with its ``weak consistency" cousin). This is because we design our method to be a general preprocessing step both in practice and theory. With strong consistency, the theoretical analysis for any downstream modeling of the core component remains valid by conditioning on the success of our method. On the contrary, such a seamless transition would not be available when only weak consistency is achieved. 

The rest of the paper is organized as follows. We first propose our core-periphery model in Section~\ref{sec:proposedModel} and then introduce the spectral methods for core identification under the proposed model in Section~\ref{secsec:algorithm}. Section~\ref{sec:theory} focuses on the theoretical properties of the algorithms with respect to the accuracy of core identification. Extensive evaluations are included in Section~\ref{sec:simulation}, where we demonstrate the advantage of our method against several benchmark methods for this problem. In Section~\ref{sec:data}, we demonstrate our method by extracting informative core structure from a citation network to improve downstream hierarchical community detection. We conclude the paper with discussions in Section~\ref{sec:conc}. All the proofs of theoretical results and additional simulation examples are included in the appendix.

\section{Methodology}\label{sec:methodology}

{\bfseries Notations.} We use capital boldface letters such as $\bm{M}$ to denote matrices. Given a matrix $\bm{M}$, $\bm{M}_{i,*}$, $\bm{M}_{*,j}$, and $\bm{M}_{ij}$ are the $i$-th row, $j$-th column, and $(i,j)$-th entry, respectively. Let $\norm{ \bm{M} }_F$, $\norm{ \bm{M} }_2$, $\norm{ \bm{M} }_{2,\infty}$ be the Frobenius norm, the spectral norm, the two-to-infinity norm (maximum Euclidean norm of rows) of $\bm{M}$, respectively. In particular, we use $\bm{I}_d$ to denote the $d \times d$ identity matrix, and $\bm{1}_d$ to denote the $d \times 1$ vector whose entries are all $1$. Let $\rank(\bm{M})$ be the rank of $\bm{M}$, and $\bm{M}^t$ be the transpose of $\bm{M}$.  Let $[ l ]$ be the index set $\{1,2,...,l\}$. Let $\mathbb{O}_{p_1,p_2}$ be the set of $p_1 \times p_2$ matrices with orthonormal columns, and let $\mathbb{O}_{p}$ be the shorthand for $\mathbb{O}_{p,p}$. For any two positive sequences $\{a_n\}$ and $\{b_n\}$, we say $a_n \preceq b_n$ if there exists a positive constant $C$ such that $a_n \le C b_n$ for sufficiently large $n$; $a_n \succeq b_n$ if $-a_n \preceq b_n$; $a_n \simeq b_n$ if $a_n \succeq b_n$ and $a_n \preceq b_n$; $a_n \succ b_n$ if for an arbitrarily large $C>0$, $a_n > C b_n$ for sufficiently large $n$.

\subsection{A core-periphery model based on informative component}\label{sec:proposedModel}

Assume the network size to be $n$. We will focus on undirected and unweighted networks without self-loops. Such a network can be represented by an $n\times n$ symmetric binary adjacency matrix $\bm{A}$ such that $\bm{A}_{ij}$ is 1 if and only if node $i$ and $j$ are connected. 
We will embed our discussion in the following probabilistic framework for $\bm{A}$, which can be seen as a conditional version of the Aldous-Hoover representation when the network nodes are exchangeable \citep{aldous1981representations,hoover1979relations}. Specifically, we assume that there exists an underlying $n\times n$ probability matrix $\bm{P}$ such that $\bm{A}_{ij} \sim \text{Bernoulli}(\bm{P}_{ij})$, for $1 \le i < j \le n$ independently. We denote by $\bm{E}$ the difference between $\bm{A}$ and $\bm{P}$, i.e. $\bm{A} = \bm{P} + \bm{E}$. The elements $\{\bm{P}_{ij}\}$ are called edge probabilities or connection probabilities. The matrix $\bm{P}$ fully specifies the structural information of the network.

In our context, the periphery component should not admit structures that may be interesting for modeling. Though whether a particular type of structure is interesting may depend on specific applications, we believe the widely regarded \emph{uninteresting} pattern is relatively easy to define. The following core-periphery structure is defined according to one such pattern for the periphery. 

\begin{definition}[The ER-type core-periphery structure]\label{def:core_periphery1}
The nodes in the network can be partitioned into a core set $\ccal$ and a periphery set $\pcal$, where 
$$\pcal = \{i \in [n]| \bm{P}_{ij} = \bm{P}_{ik}, \text{ for all } j,k \in [n], j \ne i, k \ne i\}.$$
and $\ccal = [n]/ \pcal$.
\end{definition}
Note that due to symmetry of $\bm{P}$, \Cref{def:core_periphery1} indicates that all edges involving periphery nodes are generated randomly with the same probability resembling the  Erd\"{o}s-R\'{e}nyi (ER) model \citep{erdos1959random}. The subnetwork of the core,  in contrast, can follow any connection pattern as long as it is different from the periphery. Such generality on the core structure renders the flexibility to use our model as a data preprocessing step for any downstream analysis. In the special case when the core subnetwork is also an ER model but with a different density from the periphery part, the model reduces to the block model core-periphery structure used in \citet{borgatti2000models}, \citet{zhang2015identification}, and \citet{priebe2019two}. \Cref{fig:CP_const} shows one example of the core-periphery structure following \Cref{def:core_periphery1}.

The ER-type periphery is arguably the most basic form of non-informative structure. It also indicates that the periphery nodes should have similar degrees. In many settings, even if the nodes have heterogeneous degrees, their connection patterns may not be interesting either. One way to define such variation of the uninteresting connection only depends on two nodes separably, as defined next.

\begin{definition}[The configuration-type core-periphery structure]\label{def:core_periphery2}
Let $d_i$ be the expected degree of node $i$. 
The nodes in the network can be partitioned into a core set $\ccal$ and a periphery set $\pcal$, where 
\begin{equation}\label{eq:configuration-probability}
\pcal = \{i \in [n]| \bm{P}_{ij} = \frac{d_i d_j}{\sum_{k=1}^{n}d_k}, \text{ for all } j \in {\cal V}, j \ne i\}.
\end{equation}
and $\ccal = [n]/ \pcal$.
\end{definition}

The periphery connection pattern under \Cref{def:core_periphery2} essentially assumes $\bm{P}_{ij} \propto d_id_j$ for any pair involving at least one periphery node. Such a pattern resembles the configuration model \citep{bollobas1980probabilistic,chung2002average,newman2018configuration}, where the connection probability between two nodes is based on the degree of the two nodes. \Cref{fig:CP_config} illustrates this definition. Compared with the ER-type periphery, the periphery also exhibits a heterogeneous connection pattern. This model can adopt arbitrary degree distributions for the periphery nodes. 

\begin{figure}
\begin{centering}
\begin{subfigure}{.29\textwidth}
  \centering
  \includegraphics[width=1\textwidth]{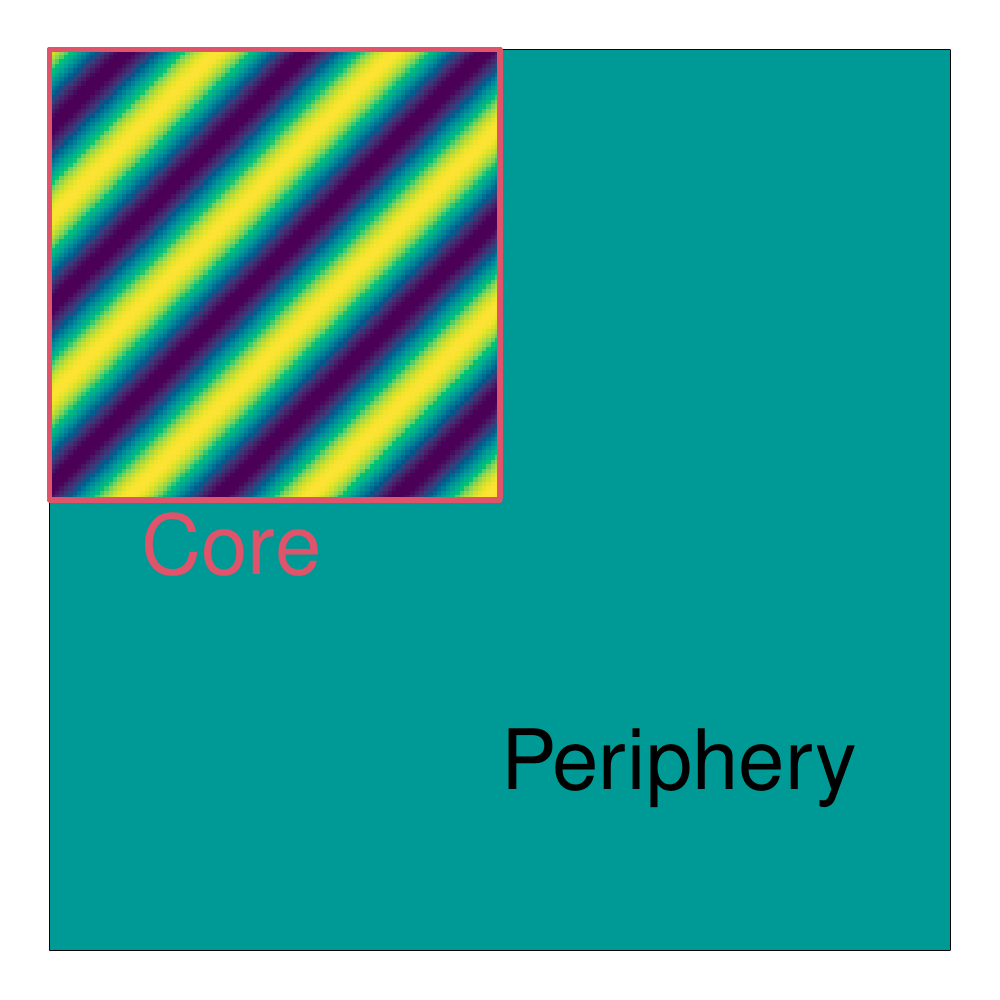}
  \caption{ER-type}
  \label{fig:CP_const}
\end{subfigure}%
\begin{subfigure}{.29\textwidth}
  \centering
  \includegraphics[width=1\textwidth]{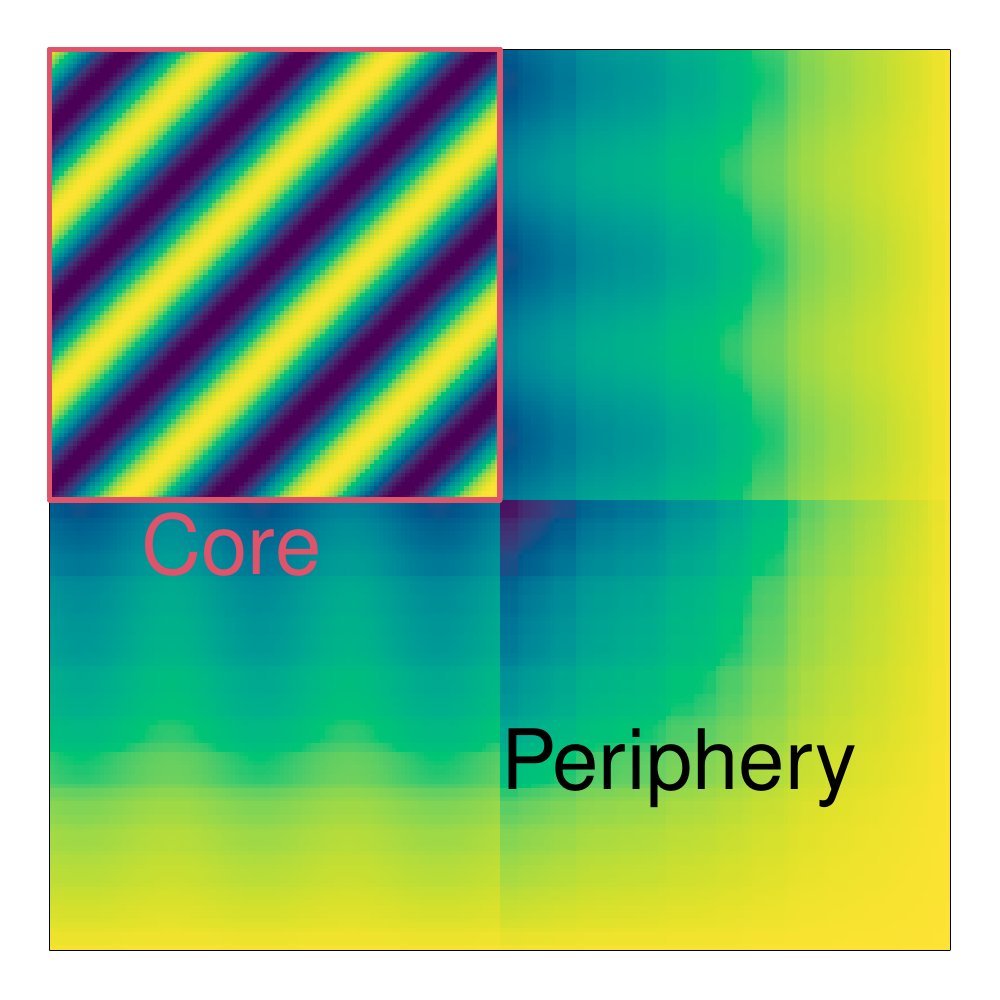}
  \caption{Configuration-type}
  \label{fig:CP_config}
\end{subfigure}%
\begin{subfigure}{.41\textwidth}
  \centering
  \includegraphics[width=1\textwidth]{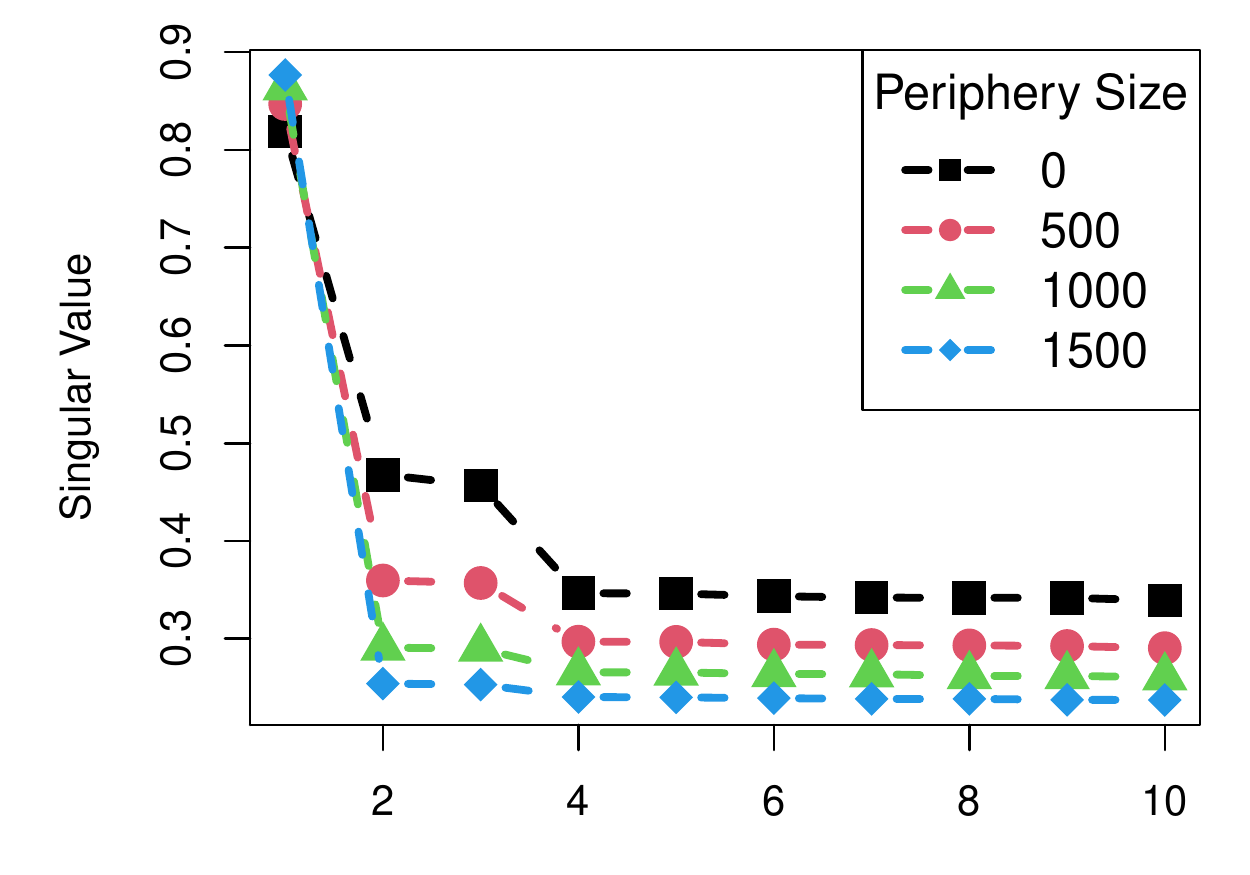}
  \caption{Impacts on eigengap}
  \label{fig:CP_Example_11}
\end{subfigure}
 \caption{Illustrations of our core-periphery models and the impacts of including the periphery component in the analysis. (a) The ER-type core-periphery model, where the expected degrees of the periphery nodes are constant. (b) The configuration-type core-periphery structure, where the expected degrees of the periphery nodes are randomly sampled from a uniform distribution. (c) The impact on the eigengap of the model by including periphery nodes following the ER-type model. The core model has rank 3, but including too many periphery nodes would overwhelm the signal, so all eigenvalues except the largest one become negligible.  }
  \end{centering}
\end{figure}



\subsection{Spectral algorithms for core identification}\label{secsec:algorithm}

We proceed to introduce our algorithms to identify the core (and periphery) components under the models of \Cref{def:core_periphery1} and \Cref{def:core_periphery2}. The likelihood-based procedures will not be applicable in the current context because we do not assume any specific model for the core subnetwork. Instead, we will resort to spectral methods for our purpose. Spectral methods have been used extensively in fitting various network models \citep{rohe2011spectral,sussman2012consistent,jin2015fast,lei2015consistency,qin2013regularized,ma2020universal,lei2020consistency,li2020community,wang2020spectral}, which also has the advantage of computational efficiency easy implementation. The crucial step to design such an algorithm is to find the desired spectral properties to leverage. Next, we will describe our algorithms for the ER-type model and configuration-type model separately. 

Under the ER-type model (\Cref{def:core_periphery1}), for any periphery node $i$, $\bm{P}_{i,*}$ is a vector of the same value except for the diagonal entry; for any core node $i$, the entries in $\bm{P}_{i,*}$ exhibit a large variation. Therefore, the core and periphery may be split according to the variation of entries in $\bm{P}_{i,*}$.  Define the centering matrix $\bm{H}$ to be $\bm{I}_n-\frac{1}{n}\bm{1}_n\bm{1}_n^t$. Then ${|| \bm{P}_{i,*}\bm{H} ||}_2^2$ is the squared total variation of the entries in $\bm{P}_{i,*}$. In particular, the norm ${|| \bm{P}_{i,*}\bm{H} ||}_2$ is almost zero for $i \in {\cal P}$, since $\bm{P}_{i,*}$ is a constant vector except on the $i$th coordinate. The periphery nodes can thus be identified for small ${|| \bm{P}_{i,*}\bm{H} ||}_2$ values.

In practice, when we only observe $\bm{A}$ instead of $\bm{P}$, the above strategy would not work due to the large perturbation of $\bm{A}$ from $\bm{P}$. The solution to this difficulty is denoising $\bm{A}$ by an estimator $\hat{\bm{P}}$ and applying the above procedure to $\hat{\bm{P}}$. Notice that $\rank(\bm{P}) \le \rank(\bm{P}^{\ccal}) + 1$ where $\bm{P}^{\ccal}$ is the model for the core subnetwork. Similar properties can be obtained for many reasonable definitions of stable rank. On the other hand, as studied in \citet{chatterjee2015matrix}, almost all interesting network models give approximately low-rank structure. These motivate us to consider $\bm{P}$ as \emph{approximately low-rank} (to be formally defined in our theory) and use some singular value truncating/thresholding estimator as $\hat{\bm{P}}$.  The simplest estimator would be the universal singular value thresholding method of \citet{chatterjee2015matrix}. However, theoretically and empirically, using an adaptive way to cut off the singular values of $\bm{A}$ to a certain rank turns out to be more effective. Specifically, given a positive integer $r$, we use the rank-$r$ truncated SVD of $\bm{A}$ as $\hat{\bm{P}}$. Our algorithm for Erd\"{o}s-Renyi periphery defined in \Cref{def:core_periphery1} is summarized in \Cref{alg:cp_detection1}. In the algorithm, we treat the approximating rank $r$ as given. In practice,  The $r$ will be selected according to data-driven methods. In this paper, we always use the cross-validation method of \citet{li2020network} to select a proper $r$, which can be see as a procedure the select the best low-rank approximation for link predictions.

\begin{algorithm}[tb]
   \caption{Spectral algorithm for core identification from the ER-type periphery} 
   \label{alg:cp_detection1}
\begin{algorithmic}
   \STATE {\bfseries Input:} The adjacency matrix $\bm{A}$, the core size $N_{\cal C}$ and approximating rank $r$.
   \begin{enumerate}
   \item Find the low-rank approximation of $\bm{A}$ through rank $r$ truncated SVD. Denote the resulting matrix by $\hat{\bm{P}}$.
   \item Compute the score $S_i = {|| \hat{\bm{P}}_{i,*} \bm{H} ||}_2$, for $i \in [n]$.
   \item Sort the scores $S_1, S_2, ..., S_n$.
   \item For each $i \in [n]$, classify node $i$ as a core node if $S_i$ is among the top-$N_{\cal C}$ scores; otherwise classify node $i$ as a periphery node.
   \end{enumerate}
\end{algorithmic}
\end{algorithm}

Under the configuration-type core-periphery model (\Cref{def:core_periphery2}), a similar strategy can be applied with an additional modification. The key ingredient is a degree-correction step to neutralize the impacts of heterogeneous degrees. According to the periphery connection probabilities in \eqref{eq:configuration-probability}, for any $i \in \pcal$, we have
$$\bm{P}_{ij}/d_j = \frac{d_i}{\sum_k d_k}, \text{~~ for any~~} j \ne i.$$
Hence, normalizing the columns by the corresponding degrees would result in a the matrix in which the row for each periphery node is a constant, except for the diagonal entry. Define $\bm{D} = \diag(d_1, \cdots, d_n)$. The column correction step can be written as $\bm{P}\bm{D}^{-1}$.  After this degree-correction step, the same idea in \Cref{alg:cp_detection1} can be applied here and we will use ${|| \bm{P}_{i,*}\bm{D}^{-1}\bm{H} ||}_2$ to separate the core nodes from the periphery nodes. In practice, $\bm{P}$ is again substituted by its estimate $\hat{\bm{P}}$, and $\bm{D}$ is replaced by its sample version $\hat{\bm{D}}$. The details are summarized in \Cref{alg:cp_detection2}.

\begin{algorithm}[tb]
   \caption{Spectral algorithm for core identification from the configuration-type periphery}
   \label{alg:cp_detection2}
\begin{algorithmic}
   \STATE {\bfseries Input:} The adjacency matrix $\bm{A}$, the core size $N_{\cal C}$ and approximating rank $r$.
   \begin{enumerate}
   \item Find the low-rank approximation of $\bm{A}$ through rank $r$ truncated SVD. Denote the resulting matrix by $\hat{\bm{P}}$.
   \item Compute $\hat{d}_i = \sum_{j=1}^{n}\bm{A}_{ij}$, and let $\hat{\bm{D}} = \text{diag}\{\hat{d}_1, \hat{d}_2, ..., \hat{d}_n\}$.
   \item Compute $S_i' = {|| \hat{\bm{P}}_{i,*} \hat{\bm{D}}^{-1} \bm{H} ||}_2$, for $i \in [n]$.
   \item Sort scores $S_1', S_2', ..., S_n'$.
   \item For each $i \in [n]$, classify node $i$ as a core node if $S_i'$ is among the top-$N_{\cal C}$ scores; otherwise classify node $i$ as a periphery node.
   \end{enumerate}
\end{algorithmic}
\end{algorithm}

As can be seen, the major computational burden of Algorithm~\ref{alg:cp_detection1} and \ref{alg:cp_detection2} is on the SVD of $\bm{A}$, which is highly efficient. Thus both of the algorithms are scalable to large networks. Moreover, in the next section, we will show that these algorithms can accurately identify the core nodes even on sparse networks.

\section{Theoretical properties}\label{sec:theory}

This section will introduce a few theoretical results about the accuracy of core identification by our spectral algorithms. We will start from the ER-type model, and then the same set of theoretical properties will be extended to the configuration-type model. 

\subsection{Theory under the ER-type model}\label{secsec:ER-theory}

The success of \Cref{alg:cp_detection1} depends on the magnitude of $\norm{ \bm{P}_{i,*}\bm{H} }_2$ for core nodes. To quantify this magnitude, additional notations have to be introduced. First, define 
$$h(n) = \min_{i \in {\cal C}} \norm{ \bm{P}_{i,*} \bm{H} }_2, \text{~~and~~} p^* = \max_{1 \le i,j \le n} \bm{P}_{ij}.$$

Our algorithms also relies on a good estimate of the probability matrix $\hat{\bm{P}}$. As mentioned in the previous section, we will use the rank-$r$ truncated SVD of the observed adjacency matrix $\bm{A}$ as  $\hat{\bm{P}}$. Suppose $\bm{P}$ and $\bm{A}$ admit the following eigen-decompositions:
\begin{equation}\label{eq:svd_P}
\bm{P} = \begin{bmatrix}
\bm{U} & \bm{U}_{\bot}
\end{bmatrix}
\begin{bmatrix}
\bm{\Lambda} & \bm{0} \\
\bm{0} & \bm{\Lambda}_{\bot}
\end{bmatrix}
\begin{bmatrix}
\bm{U}^t \\
\bm{U}_{\bot}^t
\end{bmatrix} = \bm{U} \bm{\Lambda} \bm{U}^t + \bm{U}_{\bot} \bm{\Lambda}_{\bot} \bm{U}_{\bot}^t,
\end{equation}
\begin{equation}\label{eq:svd_A}
\bm{A} = \begin{bmatrix}
\hat{\bm{U}} & \hat{\bm{U}}_{\bot}
\end{bmatrix}
\begin{bmatrix}
\hat{\bm{\Lambda}} & \bm{0} \\
\bm{0} & \hat{\bm{\Lambda}}_{\bot}
\end{bmatrix}
\begin{bmatrix}
\hat{\bm{U}}^t \\
\hat{\bm{U}}_{\bot}^t
\end{bmatrix}
= \hat{\bm{U}} \hat{\bm{\Lambda}} \hat{\bm{U}}^t + \hat{\bm{U}}_{\bot} \hat{\bm{\Lambda}}_{\bot} \hat{\bm{U}}_{\bot}^t,
\end{equation}
where $\bm{\Lambda} = \text{diag}\{ \lambda_1, \lambda_2, ..., \lambda_r \}$ and $\bm{\Lambda}_{\bot} = \text{diag}\{ \lambda_{r+1}, \lambda_{r+2}, ..., \lambda_n \}$ consist of the eigenvalues of $\bm{P}$ sorted in decreasing order. $\bm{U} \in \mathbb{O}_{n,r}$ and $\bm{U}_{\bot} \in  \mathbb{O}_{n,n-r}$ contain corresponding eigenvectors as columns. The matrices $\hat{\bm{\Lambda}}$, $\hat{\bm{\Lambda}}_{\bot}$, $\hat{\bm{U}}_{\bot}$ and $\hat{\bm{U}}_{\bot}$ are similarly defined for $\bm{A}$. Our estimator of $\bm{P}$ is $\hat{\bm{P}}=\hat{\bm{U}} \hat{\bm{\Lambda}} \hat{\bm{U}}^t$. For such a low-rank approximation to work well, we will impose the following assumptions:

\begin{assumption}[Approximate low-rankness]\label{assumption:low_rank}
$| \lambda_r | \succeq \frac{n p^*}{\sqrt{r}}$, and $| \lambda_{r+1} | \preceq \sqrt{p^* \log{n}}$.
\end{assumption}

\begin{assumption}[Incoherence]\label{assumption:incoherence}
$\norm{ \bm{U} }_{2,\infty} \le \mu_0 \sqrt{\frac{r}{n}}$, for a scalar $\mu_0$ that may depend on $n$.
\end{assumption}

\Cref{assumption:low_rank} above is about the gap between the $r$th and $(r+1)$th eigenvalues, which is needed for low-rank approximation to be reasonable. Notice that the condition implicitly requires that $p^* \succeq \frac{r \log{n}}{n^2}$, which eliminates extremely sparse network models such as bounded-degree networks. However, as can be seen later on, such a requirement is trivial and will be overwritten by a stronger density requirement for a valid network concentration. The incoherence condition ensures that the entries of $\bm{P}$ spread out evenly across all nodes. Such an assumption is widely used in matrix completion and random matrix literature \citep{candes2009exact,chen2015incoherence,fan2018eigenvector,cape2019two,abbe2020entrywise}, and is generally considered necessary for highly accurate entrywise or row/column-wise recovery of random matrices.


\begin{theorem}\label{cor:strong_consistency_1}
Assume the network $\bm{A}$ is generated from the ER-type model in \Cref{def:core_periphery1}, under \Cref{assumption:low_rank} and \Cref{assumption:incoherence}. \Cref{alg:cp_detection1} is used to identify the core nodes with the correct $N_{C}$ and $r$. Furthermore, suppose $p^* \succeq \max\left\{ \frac{ \mu_0^2 r \log{n} }{n}, \frac{\mu_0^2 r^2}{n} \right\} $, and $\left| \lambda_1 / \lambda_r \right|$ are bounded. If
\begin{equation}\label{eq:cor_strong_consistency_1}
h(n) \succeq \mu_0 \sqrt{ r (\log{n} + r) p^* } + \mu_0^2 r \sqrt{ p^* } ,
\end{equation}
then, for sufficiently large $n$, \Cref{alg:cp_detection1} exactly identifies the core and periphery nodes with probability at least $1-(B(r)+2)n^{-\gamma}$ for some positive constant $\gamma$, where $B(r) = 10 \min\{r, 1+\log_2(|\lambda_1/\lambda_r|)\}$.
\end{theorem}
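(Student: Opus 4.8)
The plan is to reduce the exact-recovery claim to a single uniform bound on the row-wise estimation error $\norm{\hat{\bm P}-\bm P}_{2,\infty}$, and then to establish that bound by entrywise eigenvector perturbation theory under the incoherence assumption. \emph{Step 1: reduction.} Since $\bm H$ is an orthogonal projection, $\norm{\bm H}_2=1$, so for every node $i$ the reverse triangle inequality gives $\abs{S_i-\norm{\bm P_{i,*}\bm H}_2}\le\norm{(\hat{\bm P}-\bm P)_{i,*}\bm H}_2\le\norm{(\hat{\bm P}-\bm P)_{i,*}}_2\le\norm{\hat{\bm P}-\bm P}_{2,\infty}=:\epsilon_n$. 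For a periphery node $i$, $\bm P_{i,*}$ is constant off the diagonal, so $\bm P_{i,*}\bm H$ equals $-c_i(\bm e_i^t-\tfrac1n\bm 1_n^t)$ for some $c_i\le p^*$ and $\norm{\bm P_{i,*}\bm H}_2\le p^*$; hence $S_i\le p^*+\epsilon_n$. For a core node $i$, $S_i\ge h(n)-\epsilon_n$. Thus \Cref{alg:cp_detection1} returns $\ccal$ and $\pcal$ exactly whenever $h(n)>p^*+2\epsilon_n$. Since $p^*\le\sqrt{p^*}\le\mu_0\sqrt{r(\log n+r)p^*}$ (using $\mu_0\ge1$, $r\ge1$), it suffices to show that, on an event of the stated probability, $\epsilon_n\preceq\mu_0\sqrt{r(\log n+r)p^*}+\mu_0^2 r\sqrt{p^*}$; then \eqref{eq:cor_strong_consistency_1} with a large enough absolute constant closes the argument.

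\emph{Step 2: spectral preliminaries.} I would write $\bm A=\bm P+\bm E$ and split $\bm P=\tilde{\bm P}+\bm U_\bot\bm\Lambda_\bot\bm U_\bot^t$ with $\tilde{\bm P}=\bm U\bm\Lambda\bm U^t$ exactly rank $r$, so that $\bm A=\tilde{\bm P}+\bm N$ with effective noise $\bm N=\bm E+\bm U_\bot\bm\Lambda_\bot\bm U_\bot^t$. The density condition $p^*\succeq\mu_0^2 r\log n/n$ forces $np^*\succeq\log n$, so a standard network concentration inequality gives $\norm{\bm E}_2\preceq\sqrt{np^*}$ on an event of probability $\ge1-n^{-\gamma}$; combined with $\abs{\lambda_{r+1}}\preceq\sqrt{p^*\log n}$ from \Cref{assumption:low_rank} this gives $\norm{\bm N}_2\preceq\sqrt{np^*}$. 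Since $\abs{\lambda_r}\succeq np^*/\sqrt r$ we have $\norm{\bm N}_2/\abs{\lambda_r}\preceq\sqrt{r/(np^*)}$, which the second density condition $p^*\succeq\mu_0^2 r^2/n$ makes small. A Davis--Kahan ($\sin\Theta$) argument then yields $\norm{\hat{\bm U}\hat{\bm U}^t-\bm U\bm U^t}_2\preceq\sqrt{r/(np^*)}$ and, for the Procrustes rotation $\bm W$, $\norm{\hat{\bm U}-\bm U\bm W}_2\preceq\sqrt{r/(np^*)}$.

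\emph{Step 3: the two-to-infinity bound (the crux).} The heart of the proof is a row-wise bound on $\hat{\bm U}$. I would use a leave-one-out construction --- replace the $i$-th row and column of $\bm A$ by their expectations so the resulting $\hat{\bm U}^{(i)}$ is independent of $\bm E_{i,*}$, bound $\bm E_{i,*}\hat{\bm U}^{(i)}$ by a vector Bernstein inequality, and return to $\hat{\bm U}$ via $\sin\Theta$ --- together with the incoherence $\norm{\bm U}_{2,\infty}\le\mu_0\sqrt{r/n}$ and the bounded ratio $\abs{\lambda_1/\lambda_r}$, to obtain, on a high-probability event,
\[
\norm{\hat{\bm U}-\bm U\bm W}_{2,\infty}\ \preceq\ \frac{1}{\abs{\lambda_r}}\left(\norm{\bm E\bm U}_{2,\infty}+\norm{\bm E}_2\,\norm{\bm U}_{2,\infty}\right)+(\text{lower order}),
\]
as well as $\norm{\hat{\bm U}}_{2,\infty}\preceq\mu_0\sqrt{r/n}$. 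Here $\norm{\bm E\bm U}_{2,\infty}=\max_i\norm{\bm E_{i,*}\bm U}_2$: each $\bm E_{i,*}\bm U=\sum_{j}\bm E_{ij}\bm U_{j,*}$ is a sum of independent mean-zero vectors of total variance $\le p^* r$ and summand norm $\le\norm{\bm U}_{2,\infty}$, so Bernstein gives $\norm{\bm E\bm U}_{2,\infty}\preceq\sqrt{p^*(r+\log n)}+\mu_0\sqrt{r/n}\log n\preceq\sqrt{p^*(r+\log n)}$ under the density assumption. I would then pass from $\hat{\bm U}$ to $\hat{\bm P}$ through the telescoping of $\hat{\bm U}\hat{\bm\Lambda}\hat{\bm U}^t-\bm U\bm\Lambda\bm U^t$ into three terms, each a product of $\hat{\bm U}-\bm U\bm W$ or $\hat{\bm\Lambda}-\bm W^t\bm\Lambda\bm W$ with factors bounded by $\norm{\bm U}_{2,\infty}$, $\norm{\hat{\bm U}}_{2,\infty}$, $\norm{\hat{\bm\Lambda}}_2\preceq np^*$, and $\norm{\hat{\bm\Lambda}-\bm W^t\bm\Lambda\bm W}_2\preceq\norm{\bm N}_2$, and add the approximation error $\norm{\bm U_\bot\bm\Lambda_\bot\bm U_\bot^t}_{2,\infty}\le\abs{\lambda_{r+1}}\preceq\sqrt{p^*\log n}$. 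The $\norm{\bm E\bm U}_{2,\infty}$-driven contribution, amplified by the bounded condition number and by selecting a row of $\bm U$, produces the $\mu_0\sqrt{r(\log n+r)p^*}$ term; the $\mu_0^2 r\sqrt{p^*}$ term comes from a higher-order cross term in which $\norm{\bm U}_{2,\infty}$ is used twice --- once to select the row and once through propagating incoherence into $\hat{\bm U}$ via the factor $\norm{\bm E}_2\norm{\bm U}_{2,\infty}/\abs{\lambda_r}$.

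\emph{Step 4: probability bookkeeping and the main obstacle.} Because the top-$r$ eigenvalues of $\bm P$ need not be mutually comparable, the entrywise argument of Step 3 must be run after partitioning the signal block $[\abs{\lambda_r},\abs{\lambda_1}]$ into $\min\{r,\,1+\log_2\abs{\lambda_1/\lambda_r}\}$ dyadic sub-blocks, on each of which the relative gap is $\Theta(1)$; each sub-block application, together with the concentration events for $\norm{\bm E}_2$ and for $\norm{\bm E\bm U}_{2,\infty}$, fails with probability $\preceq n^{-\gamma}$, so a union bound gives total failure probability $\le(B(r)+2)n^{-\gamma}$ with $B(r)=10\min\{r,1+\log_2\abs{\lambda_1/\lambda_r}\}$. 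Combining Steps 1--4 gives the theorem for $n$ large. I expect the main obstacle to be Step 3: ordinary $\sin\Theta$ estimates are far too weak in the $2,\infty$ norm, so the leave-one-out decoupling is essential, and one has to be careful that the incoherence factor $\mu_0\sqrt{r/n}$ appears exactly where needed and that the condition-number amplification stays bounded; handling the non-separated top eigenvalues via the dyadic-block decomposition, which is what produces the $B(r)$ factor, is an additional complication.
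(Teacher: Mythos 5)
Your proposal is correct and follows essentially the same route as the paper: the identical reduction of exact recovery to the single condition $\norm{\hat{\bm{P}}\bm{H}-\bm{P}\bm{H}}_{2,\infty}\le\frac{1}{2}(h(n)-p^*)$, the same splitting off of the residual $|\lambda_{r+1}|$, and the same reliance on an incoherence-based two-to-infinity eigenspace perturbation bound that is then simplified under the density and bounded-condition-number assumptions. The only real difference is that the paper imports the crucial $2,\infty$ bound (including the eigenvalue-block decomposition responsible for the $B(r)$ factor) as a black-box lemma from \citet{lei2019unified} combined with the \citet{lei2015consistency} concentration of $\norm{\bm{E}}_2$, whereas you propose to re-derive an equivalent bound from scratch via leave-one-out decoupling and vector Bernstein.
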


We present \Cref{cor:strong_consistency_1} under the approximately low-rank condition of \Cref{assumption:low_rank} for conciseness. The assumption can be further relaxed. The more general version of the theorem is included in \Cref{appendix:proof_strong_consistency_1}. Notice that we do not assume that the density of the core subnetwork is denser than the periphery. Nor do we have to assume that the core size is in the same order as the periphery size, though the sizes' impacts are implicitly considered in $h(n)$. Such generality gives our method significant advantages in practice, as demonstrated later in Section~\ref{sec:simulation} and \ref{sec:data}.

To illustrate condition \eqref{eq:cor_strong_consistency_1}, consider the stochastic block model (SBM) as an example of the core structure under the ER-type model. Specifically, we write the model as a $K$-block model with the first $K-1$ clusters as the core and the last cluster as the periphery where $K$ if a fixed integer. Assume all the $K$ cluster sizes are in the same order. Let $p^* \bm{B}$ be the symmetric $K \times K$ matrix of connection probabilities between the $K$ blocks,  where $\bm{B}$ is a fixed matrix with maximum entry $1$, and all entries in the last row and column being equal. Let $\bm{Z}$ be a $n \times K$ membership matrix where $\bm{Z}_{ik}=1$ if and only if node $i$ belongs to block $k$. Ignoring the no-self-loop constraint for simplicity, we have $\bm{P} = p^*\bm{Z}\bm{B}\bm{Z}^t$. In this case, $h(n)$ is at the order of $p^* \sqrt{n}$, and the conditions in \eqref{eq:cor_strong_consistency_1} becomes $p^* \succeq \frac{\log{n}}{n}$, corresponding to the requirement of average expected node degree $\succeq \log n$. In this setting, if we treat splitting the first $K-1$ blocks and the last block as a bicluster problem, our requirement corroborates the network sparsity requirement of \citet{abbe2020entrywise} for the spectral clustering algorithm under the two-block SBM.

In practice, the number of core nodes, $N_{\cal C}$, is often unknown. However, under a slightly stronger condition than \Cref{cor:strong_consistency_1}, we can calculate a threshold such that the correct $N_{\ccal}$ can be recovered by cutting off the scores in \Cref{alg:cp_detection1}. In particular, define $\hat{p} = \frac{2}{n^2-n}\sum_{i<j} \bm{A}_{ij}$ and replace the $N_{\ccal}$ in Step 4 of \Cref{alg:cp_detection1} by  
\begin{equation}\label{eq:Nhat-1}
\hat{N}_{\ccal} = |\{i: S_i > \sqrt{\hat{p}^{1-\epsilon}\log{n}}\}|
\end{equation}
for some small constant $\epsilon$. In all of our experiments, we use $\epsilon=0.01$. The same type of performance as \eqref{eq:cor_strong_consistency_1} can still be theoretically guaranteed by this thresholding strategy.
\begin{corollary}\label{prop:threshold_1}
Under the conditions of \Cref{cor:strong_consistency_1}, suppose $\mu_0$ and $r$ are bounded. Furthermore, assume
$$\min_{1 \le i,j \le n} \bm{P}_{ij} \simeq \max_{1 \le i,j \le n} \bm{P}_{ij} = p^*, $$
and 
$$h(n) \succ \sqrt{ {p^*}^{(1-\epsilon)} \log{n}}$$
 for the constant $\epsilon$ in \eqref{eq:Nhat-1}. If the $\hat{N}_{\ccal}$ defined by \eqref{eq:Nhat-1} is used in  \Cref{alg:cp_detection1}, with sufficiently large $n$, the core and periphery can be exactly identified with probability at least $1-(B(r)+4)n^{-\gamma}$ for some positive constant $\gamma$.
\end{corollary}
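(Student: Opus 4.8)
The plan is to reduce \Cref{prop:threshold_1} to \Cref{cor:strong_consistency_1} by showing that, under the extra assumptions, the data-driven threshold $\tau_n := \sqrt{\hat p^{\,1-\epsilon}\log n}$ almost surely lands strictly between the periphery scores and the core scores. Concretely, if with high probability every periphery node $i\in\pcal$ satisfies $S_i < \tau_n$ while every core node $i\in\ccal$ satisfies $S_i > \tau_n$, then $\hat N_{\ccal}$ as defined in \eqref{eq:Nhat-1} equals the true $N_{\ccal}$, the selection rule in Step 4 coincides with the oracle one, and the exact recovery conclusion of \Cref{cor:strong_consistency_1} carries over verbatim. So the body of the argument is two one-sided bounds on the scores plus a concentration statement for $\hat p$.

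First I would control $\hat p$. Since $\hat p = \frac{2}{n^2-n}\sum_{i<j}\bm A_{ij}$ is an average of independent Bernoullis with mean $\bar p := \frac{2}{n^2-n}\sum_{i<j}\bm P_{ij}$, and the new assumption $\min_{ij}\bm P_{ij}\simeq\max_{ij}\bm P_{ij}=p^*$ forces $\bar p\simeq p^*$, a Bernstein/Chernoff bound gives $\hat p\simeq p^*$ with probability at least $1-2n^{-\gamma}$ (this uses $n^2 p^*\succeq\log n$, which is implied by the density assumption of \Cref{cor:strong_consistency_1}). On that event $\tau_n\simeq\sqrt{(p^*)^{1-\epsilon}\log n}$ up to constants, which is the quantity appearing in the two hypotheses $h(n)\succ\sqrt{(p^*)^{1-\epsilon}\log n}$ and — for the periphery side — what we must compare against the actual periphery scores $S_i=\norm{\hat{\bm P}_{i,*}\bm H}_2$.

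For the periphery side I would reuse the entrywise/row-wise control of $\hat{\bm P}-\bm P$ already established in the proof of \Cref{cor:strong_consistency_1}: for $i\in\pcal$, $\bm P_{i,*}\bm H$ is (essentially) zero, so $S_i = \norm{(\hat{\bm P}-\bm P)_{i,*}\bm H}_2 + o(\cdot) \le \norm{(\hat{\bm P}-\bm P)_{i,*}}_2$, and the row-wise bound from that proof shows this is $\preceq \mu_0\sqrt{r(\log n+r)p^*}+\mu_0^2 r\sqrt{p^*}$; with $\mu_0,r$ bounded this is $\preceq\sqrt{p^*\log n}$, which is $\prec\sqrt{(p^*)^{1-\epsilon}\log n}\simeq\tau_n$ because ${p^*}^{\epsilon}\to 0$. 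Hence $\max_{i\in\pcal}S_i<\tau_n$ w.h.p. For the core side, the proof of \Cref{cor:strong_consistency_1} shows $S_i\ge h(n) - (\text{same }\preceq\text{ perturbation term})$ for $i\in\ccal$; since $h(n)$ dominates that perturbation term (this is precisely \eqref{eq:cor_strong_consistency_1}) and, by the new hypothesis, also dominates $\tau_n$, we get $\min_{i\in\ccal}S_i > \tau_n$ w.h.p. Combining, with probability at least $1-(B(r)+4)n^{-\gamma}$ (the extra $2n^{-\gamma}$ over \Cref{cor:strong_consistency_1} absorbing the $\hat p$ concentration event) the threshold separates the two groups and the conclusion follows.

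The main obstacle is the periphery side: one needs the row-wise perturbation $\norm{(\hat{\bm P}-\bm P)_{i,*}}_2$ — not merely the operator norm $\norm{\hat{\bm P}-\bm P}_2$ — to be genuinely smaller than $\tau_n\simeq\sqrt{(p^*)^{1-\epsilon}\log n}$, and the gain here is only the factor $(p^*)^{\epsilon/2}$, so the argument is tight and must lean on the $\ell_{2,\infty}$ eigenvector perturbation machinery (incoherence + the Abbe-type entrywise bounds) already invoked for \Cref{cor:strong_consistency_1} rather than on crude spectral-norm bounds. One also has to be slightly careful that $\tau_n$ is random: all comparisons should be carried out on the event $\hat p\simeq p^*$, and the stated $h(n)\succ\sqrt{(p^*)^{1-\epsilon}\log n}$ should be read with enough slack (the $\succ$ rather than $\succeq$) to survive the constant-factor fluctuation of $\hat p$.
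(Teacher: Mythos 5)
Your proposal is correct and follows essentially the same route as the paper's proof: bound the periphery scores by $p^*$ plus the $\norm{\cdot}_{2,\infty}$ perturbation from \Cref{lemma:combined_succinct} (giving $\preceq\sqrt{p^*\log n}$ under bounded $\mu_0,r$), lower-bound the core scores by $h(n)$ minus the same perturbation, and use a Hoeffding-type concentration of $\hat p$ around $p^*$ to show the random threshold $\sqrt{\hat p^{\,1-\epsilon}\log n}$ sits strictly between the two groups, with the extra $2n^{-\gamma}$ accounting for the $\hat p$ event. Your observations that the row-wise (not operator-norm) bound is essential and that the separation rests on the factor $(p^*)^{-\epsilon/2}\to\infty$ match the paper's argument exactly.
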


We conclude this section by providing an upper bound for the number of misidentified core nodes under weaker assumptions. 
\begin{theorem}\label{thm:weak_consistency_1}
Assume the network $\bm{A}$ is generated from the ER-type model in \Cref{def:core_periphery1}, and \Cref{alg:cp_detection1} is used to identify the core nodes with the correct $N_{\cal C}$. Suppose $h(n) > p^*$. Denote the number of misclassified core nodes by $M$. For a sufficiently large $n$, we have
\begin{equation}\label{eq:weak-consistency-ER}
M \preceq \max\{r, \rank(\bm{P})\} \cdot \frac{ {\left( \max\{\sqrt{n p^*},\sqrt{\log{n}}\} + | \lambda_{r+1} | \right)}^2 }{ {\left(h(n)-p^*\right)}^2 }
\end{equation}
with probability at least $1-n^{-\gamma}$ for some positive constant $\gamma$.
\end{theorem}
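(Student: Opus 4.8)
The plan is to split the statement into two essentially independent pieces: (i) a purely deterministic bound of the form $M \preceq \norm{\hat{\bm P}-\bm P}_F^2 / (h(n)-p^*)^2$, and (ii) a high-probability Frobenius-norm bound $\norm{\hat{\bm P}-\bm P}_F^2 \preceq \max\{r,\rank(\bm P)\}\cdot\big(\max\{\sqrt{np^*},\sqrt{\log n}\}+\abs{\lambda_{r+1}}\big)^2$. For (i), write $s_i := \norm{\bm P_{i,*}\bm H}_2$ for the population version of the score $S_i$ and $\delta_i := \abs{S_i - s_i}$. Since $\bm H$ is an orthogonal projection, the reverse triangle inequality gives $\delta_i \le \norm{(\hat{\bm P}-\bm P)_{i,*}\bm H}_2 \le \norm{(\hat{\bm P}-\bm P)_{i,*}}_2$, so $\sum_{i=1}^{n}\delta_i^2 \le \norm{\hat{\bm P}-\bm P}_F^2$. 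A short computation using $\bm P_{ii}=0$ and \Cref{def:core_periphery1} shows that $s_j = q_j\sqrt{(n-1)/n}\le p^*$ for every periphery node $j$ (with $q_j$ its common periphery probability), while $s_i \ge h(n)$ for every core node $i$ by the definition of $h(n)$.

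For the misclassification count, let $\mathcal T$ be the set of $N_{\ccal}$ nodes that \Cref{alg:cp_detection1} labels as core, so $M = \abs{\ccal\setminus\mathcal T} = \abs{\mathcal T\setminus\ccal}$. Assume $M\ge 1$. By the definition of ``top-$N_{\ccal}$ scores'', $S_j \ge S_i$ for every $j\in\mathcal T\setminus\ccal$ and every $i\in\ccal\setminus\mathcal T$; combining with the population bounds yields $p^* + \delta_j \ge S_j \ge S_i \ge h(n)-\delta_i$, i.e.\ $\delta_i+\delta_j \ge h(n)-p^* > 0$. This forces a dichotomy: either $\delta_i \ge (h(n)-p^*)/2$ for every $i\in\ccal\setminus\mathcal T$, or there is some $i^*\in\ccal\setminus\mathcal T$ with $\delta_{i^*}<(h(n)-p^*)/2$, in which case $\delta_j > (h(n)-p^*)/2$ for every $j\in\mathcal T\setminus\ccal$. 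In either case there are at least $M$ indices with $\delta^2 \ge (h(n)-p^*)^2/4$, whence $M(h(n)-p^*)^2/4 \le \sum_i \delta_i^2 \le \norm{\hat{\bm P}-\bm P}_F^2$, which is (i).

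For (ii), observe that $\hat{\bm P}-\bm P$ has rank at most $r+\rank(\bm P)\le 2\max\{r,\rank(\bm P)\}$, so $\norm{\hat{\bm P}-\bm P}_F^2 \le 2\max\{r,\rank(\bm P)\}\,\norm{\hat{\bm P}-\bm P}_2^2$. Writing $\bm P_{(r)}=\bm U\bm\Lambda\bm U^t$ for the rank-$r$ truncation of $\bm P$, the optimality of $\hat{\bm P}$ as a rank-$r$ approximation of $\bm A$ gives $\norm{\hat{\bm P}-\bm A}_2 \le \norm{\bm P_{(r)}-\bm A}_2 \le \norm{\bm P_{(r)}-\bm P}_2 + \norm{\bm E}_2 = \abs{\lambda_{r+1}} + \norm{\bm E}_2$, hence $\norm{\hat{\bm P}-\bm P}_2 \le \norm{\hat{\bm P}-\bm A}_2 + \norm{\bm E}_2 \le 2\norm{\bm E}_2 + \abs{\lambda_{r+1}}$. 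It then remains to invoke a standard operator-norm concentration bound for the noise matrix of an inhomogeneous Erd\"{o}s--R\'{e}nyi graph, $\norm{\bm E}_2 \preceq \max\{\sqrt{np^*},\sqrt{\log n}\}$ with probability at least $1-n^{-\gamma}$; plugging this into the previous displays gives (ii) on that event, and together with (i) the theorem follows.

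The only non-elementary ingredient, and hence the main point to get right, is this concentration bound for $\norm{\bm E}_2$: in the dense part of the range the $\sqrt{np^*}$ term is the classical bound, but near the connectivity threshold it is the $\sqrt{\log n}$ term that survives, so one must cite a version valid uniformly over the sparsity regimes permitted here (the theorem assumes only $h(n)>p^*$, not \Cref{assumption:low_rank}--\Cref{assumption:incoherence}), which is why the conservative two-regime form appears in \eqref{eq:weak-consistency-ER}. Everything else, namely the reverse triangle inequality, the rank bound converting spectral to Frobenius norm, Weyl's inequality, and the combinatorial dichotomy, is routine.
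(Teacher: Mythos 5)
Your proposal is correct and follows essentially the same route as the paper's proof: bound $M$ by $\norm{\hat{\bm P}\bm H-\bm P\bm H}_F^2/(h(n)-p^*)^2$, convert the Frobenius norm to the spectral norm via the rank bound $\rank(\hat{\bm P}-\bm P)\le r+\rank(\bm P)$, control $\norm{\hat{\bm P}-\bm P}_2$ by $2\norm{\bm E}_2+\abs{\lambda_{r+1}}$ using Eckart--Young and Weyl, and invoke the Lei--Rinaldo concentration bound $\norm{\bm E}_2\preceq\max\{\sqrt{np^*},\sqrt{\log n}\}$. Your explicit dichotomy argument for the counting step is in fact a welcome elaboration of the paper's one-line claim that ``each misclassification necessarily involves a squared deviation of order at least $(h(n)-p^*)^2$.''
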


For illustration, consider the SBM example after Theorem~\ref{cor:strong_consistency_1} again with $p^* \ge \log{n}$. In this case, \eqref{eq:weak-consistency-ER} indicates that the misidentified number is upper bounded by $K/p^* \le Kn/\log{n} = o(n)$. Such a vanishing proportion of misidentified core nodes is also called the ``weak consistency". However, compared with the strong consistency of Theorem~\ref{cor:strong_consistency_1} , the weak consistency is less useful in our scenario. This is because, as a general data preprocessing step, having strong consistency in our method ensures that the downstream theoretical analysis can still go through as if the core is already given. The weak consistency, in contrast, loses this possibility, and the downstream analysis has to consider the potential errors of the core identification and the potential dependence introduced by this preprocessing step.

\subsection{Theory under the configuration-type model}\label{secsec:configuration-theory}

Next, we consider the configuration-type model following \Cref{def:core_periphery2}. Recall that for a periphery node $i$, $\bm{P}_{i,*} \bm{D}^{-1}$ is a constant vector except for the diagonal entry. Therefore, the proof can be done by applying the same strategy of last section on the degree corrected version of $\bm{P}$. Define
$$ h'(n) = \min_{i \in {\cal C}} {|| \bm{P}_{i,*} \bm{D}^{-1} \bm{H} ||}_2 . $$
Under the configuration-type model, the quantity $h'(n)$ has a similar role to the $h(n)$ for the ER-type model. 
\begin{theorem}\label{cor:strong_consistency_2}
Assume the network $\bm{A}$ is generated from the configuration-type model in \Cref{def:core_periphery2}, under \Cref{assumption:low_rank} and \Cref{assumption:incoherence}. \Cref{alg:cp_detection2} is used to identify the core nodes with the correct $N_{\cal C}$ and $r$. Let $d_{\min} = \min_{1 \le i \le n} \sum_{j=1}^{n}\bm{P}_{ij}$, and suppose $d_{\min} \succ \log{n}$, $p^* \succ \max\left\{ \frac{\mu_0^2 r \log{n} }{n}, \frac{\mu_0^2 r^2}{n} \right\} $, and $\left| \lambda_1 / \lambda_r \right|$ is bounded. If
\begin{equation}\label{eq:strong_consistency_2}
h'(n) \succ \frac{ 1 }{ d_{\min} }\left( \mu_0 \sqrt{ r (\log{n} + r) p^* } + \mu_0^2 r \sqrt{ p^* } \right) + \norm{\bm{P}\bm{D}^{-1}}_{2,\infty} \sqrt{\frac{ \log{n} }{d_{\min}}} ,
\end{equation}
then, for sufficiently large $n$, \Cref{alg:cp_detection2} exactly identifies the core and periphery nodes with probability at least $1-(B(r)+4)n^{-\gamma}$, where $B(r) = 10 \min\{r, 1+\log_2(|\lambda_1/\lambda_r|)\}$.
\end{theorem}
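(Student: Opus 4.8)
The plan is to mirror the proof of \Cref{cor:strong_consistency_1} after reducing to a ``degree-corrected'' version of the problem. The algorithm computes scores $S_i' = \norm{\hat{\bm{P}}_{i,*}\hat{\bm{D}}^{-1}\bm{H}}_2$, while the population counterpart that would give exact recovery is $\norm{\bm{P}_{i,*}\bm{D}^{-1}\bm{H}}_2$, which is (essentially) zero for periphery nodes and at least $h'(n)$ for core nodes. Thus, as in the ER case, it suffices to show that $\max_{i\in[n]} \big| S_i' - \norm{\bm{P}_{i,*}\bm{D}^{-1}\bm{H}}_2 \big|$ is strictly smaller than $h'(n)/2$ with the stated probability; then the top-$N_{\ccal}$ scores are exactly the core nodes. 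By the triangle inequality this perturbation is bounded by $\max_i \norm{(\hat{\bm{P}}_{i,*}\hat{\bm{D}}^{-1} - \bm{P}_{i,*}\bm{D}^{-1})\bm{H}}_2 \le \max_i \norm{\hat{\bm{P}}_{i,*}\hat{\bm{D}}^{-1} - \bm{P}_{i,*}\bm{D}^{-1}}_2$, since $\norm{\bm{H}}_2 = 1$. So the whole theorem rests on an entrywise/row-wise control of the degree-corrected estimation error.

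Next I would split that error additively as
\begin{equation*}
\hat{\bm{P}}_{i,*}\hat{\bm{D}}^{-1} - \bm{P}_{i,*}\bm{D}^{-1}
= (\hat{\bm{P}}_{i,*} - \bm{P}_{i,*})\hat{\bm{D}}^{-1} + \bm{P}_{i,*}(\hat{\bm{D}}^{-1} - \bm{D}^{-1}).
\end{equation*}
For the first term, I would reuse the two-to-infinity bound on $\hat{\bm{P}} - \bm{P}$ already established (implicitly) in the proof of \Cref{cor:strong_consistency_1}: under \Cref{assumption:low_rank}, \Cref{assumption:incoherence}, and the density condition $p^*\succ\max\{\mu_0^2 r\log n/n,\mu_0^2 r^2/n\}$, one has $\norm{\hat{\bm{P}}-\bm{P}}_{2,\infty}\preceq \mu_0\sqrt{r(\log n+r)p^*}+\mu_0^2 r\sqrt{p^*}$ on the high-probability event of \Cref{thm:weak_consistency_1}'s concentration (the $B(r)n^{-\gamma}$ part of the failure probability comes from here). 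Multiplying by $\hat{\bm{D}}^{-1}$ contributes a factor $1/d_{\min}$ once we know $\hat{d}_i \simeq d_i$ for all $i$, which follows from a Bernstein/Chernoff bound on degrees together with $d_{\min}\succ\log n$ (this is one of the ``$+2$'' or extra $n^{-\gamma}$ terms in the failure probability, matching the $B(r)+4$ in the statement versus $B(r)+2$ in \Cref{cor:strong_consistency_1}). That yields the first summand $\frac{1}{d_{\min}}(\mu_0\sqrt{r(\log n+r)p^*}+\mu_0^2 r\sqrt{p^*})$ in \eqref{eq:strong_consistency_2}. For the second term, write $\bm{P}_{i,*}(\hat{\bm{D}}^{-1}-\bm{D}^{-1})$ entrywise as $\bm{P}_{ij}(\hat d_j^{-1} - d_j^{-1}) = -\bm{P}_{ij} d_j^{-1}\hat d_j^{-1}(\hat d_j - d_j)$; since $\hat d_j - d_j$ is a centered sum of Bernoullis with variance $\preceq d_j$, a Bernstein bound gives $|\hat d_j - d_j|\preceq \sqrt{d_j\log n}$ uniformly, so the $j$-th coordinate is $\preceq (\bm{P}_{ij}/d_j)\sqrt{\log n / d_j}\le (\bm{P}_{ij}/d_j)\sqrt{\log n/d_{\min}}$; taking the Euclidean norm over $j$ gives $\preceq \norm{\bm{P}_{i,*}\bm{D}^{-1}}_2\sqrt{\log n/d_{\min}}\le \norm{\bm{P}\bm{D}^{-1}}_{2,\infty}\sqrt{\log n/d_{\min}}$, which is the second summand of \eqref{eq:strong_consistency_2}. (One must also handle the diagonal entry, but $\bm{H}$ kills constant shifts and the diagonal contributes only a single coordinate of size $O(1/d_{\min})$, negligible against the stated bound.)

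Combining, on the intersection of the three high-probability events (row-wise control of $\hat{\bm{P}}-\bm{P}$; uniform degree concentration $\hat d_j\simeq d_j$; uniform $|\hat d_j - d_j|\preceq\sqrt{d_j\log n}$), we get $\max_i\norm{\hat{\bm{P}}_{i,*}\hat{\bm{D}}^{-1}-\bm{P}_{i,*}\bm{D}^{-1}}_2$ bounded by a constant times the right-hand side of \eqref{eq:strong_consistency_2}, which by hypothesis is $\prec h'(n)$, hence eventually $< h'(n)/2$; exact recovery follows. A union bound over the failure events, each of probability $\preceq n^{-\gamma}$ (with the $\hat{\bm{P}}$ event costing $B(r)n^{-\gamma}$ and the degree events costing a bounded number of $n^{-\gamma}$ terms), gives the claimed $1-(B(r)+4)n^{-\gamma}$.

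I expect the main obstacle to be the second summand, i.e.\ propagating the degree-estimation error $\hat{\bm{D}}-\bm{D}$ through the column normalization in a way that is uniform over all $n$ rows simultaneously and that produces exactly the $\norm{\bm{P}\bm{D}^{-1}}_{2,\infty}\sqrt{\log n/d_{\min}}$ form — in particular, being careful that the relevant fluctuation is in the \emph{columns} (the $d_j$'s), shared across rows, so a single Bernstein bound per column plus a union bound over $j\in[n]$ suffices rather than a naive row-by-row argument, and that the ratio $\bm{P}_{ij}/d_j$ (not $\bm{P}_{ij}$ alone) is what appears, which is why the bound is stated in terms of $\norm{\bm{P}\bm{D}^{-1}}_{2,\infty}$. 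A secondary subtlety is ensuring the ``approximately zero'' population score for periphery nodes remains negligible after degree correction and estimation; this follows because $\bm{P}_{i,*}\bm{D}^{-1}$ differs from a constant vector only in its $i$-th coordinate (by \eqref{eq:configuration-probability}), and that coordinate has magnitude $O(1/d_{\min})$, which is dominated by the right side of \eqref{eq:strong_consistency_2} under $d_{\min}\succ\log n$.
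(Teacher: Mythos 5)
Your proposal is correct and follows essentially the same route as the paper: reduce exact recovery to a two-to-infinity bound on $\hat{\bm{P}}\hat{\bm{D}}^{-1}-\bm{P}\bm{D}^{-1}$, split it into an estimation term of order $\norm{\hat{\bm{P}}-\bm{P}}_{2,\infty}/d_{\min}$ (reusing the ER-type lemma) plus a degree-fluctuation term of order $\norm{\bm{P}\bm{D}^{-1}}_{2,\infty}\sqrt{\log n/d_{\min}}$, and union-bound the events. The only differences are cosmetic — the paper decomposes as $(\bm{P}-\hat{\bm{P}})\bm{D}^{-1}+\hat{\bm{P}}\hat{\bm{D}}^{-1}(\hat{\bm{D}}\bm{D}^{-1}-\bm{I})$ with a self-bounding step where you use $(\hat{\bm{P}}-\bm{P})\hat{\bm{D}}^{-1}+\bm{P}(\hat{\bm{D}}^{-1}-\bm{D}^{-1})$, and your stated $O(1/d_{\min})$ for the periphery population score is an overestimate (the correct magnitude is $O(d_{\max}/(nd_{\min}))\le \mu_0^2 r\sqrt{p^*}/d_{\min}$, which is what actually makes it dominated by the right-hand side of \eqref{eq:strong_consistency_2}).
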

Again, a more general version of the theorem is provided in the \Cref{appendix:proof_strong_consistency_2}. To illustrate the condition \eqref{eq:strong_consistency_2}, we consider the example when the degree-corrected stochastic block model (DC-SBM) \citep{karrer2011stochastic} is true core model. Specifically, assume that the whole network follows the DC-SBM with the first $K-1$ clusters being the core while the last cluster being the periphery. Suppose all clusters have equal size, and $K$ is fixed. Let $z_i \in \{1, \cdots, K\}$ be the cluster label of node $i$. The model can be parametrized by a sequence of node popularity parameters $\theta_i, 1\le i\le n$ and a $K\times K$ matrix $\rho \bm{B}$ where $\bm{B}$ is a fixed symmetric matrix with the last row and column containing only 1's and $\rho$ depends on $n$. The connection probability of this DC-SBM is given by $P_{ij}=\theta_i\theta_j\rho \bm{B}_{z_iz_j}$. To ensure the identifiability of the model, we use the constraint of \citet{zhao2012consistency}: $\sum_{z_i=k}\theta_i = n/K.$ Furthermore, assume that $\bm{B}$ satisfies $\sum_{k'}\bm{B}_{kk'} = K, 1\le k\le K-1$, it can be verified that this model satisfies  \Cref{def:core_periphery2}. Under this model, in the simplified setting such that $\mu_0$ is bounded, $r=K$, and $\theta_i \simeq 1$ for all $i$, the condition \eqref{eq:strong_consistency_2} reduces to the degree requirement of $d_{\min} \succ \log{n}$.

Similar to the case of the ER-type model, when $N_{\cal C}$ is unknown, a threshold to cut off scores can be used to determine the core-periphery separation under slightly stronger conditions. Recall that $\hat{p} = \frac{2}{n^2-n}\sum_{i<j} \bm{A}_{ij}$. We can replace the $N_{\ccal}$ in Step 5 of \Cref{alg:cp_detection2} by  
\begin{equation}\label{eq:Nhat-2}
\hat{N}_{\ccal}' = | \{ i: S_i' > \frac{ \sqrt{ \log{n} }}{ n \sqrt{ \hat{p}^{1+\epsilon} } } \} |
\end{equation}
for some small constant $\epsilon$. In all of our experiments, we use $\epsilon=0.01$.
\begin{corollary}\label{prop:threshold_2}
Under the conditions of \Cref{cor:strong_consistency_2}, suppose $\mu_0$ and $r$ are bounded. Furthermore, assume
$$\min_{1 \le i,j \le n} \bm{P}_{ij} \simeq \max_{1 \le i,j \le n} \bm{P}_{ij} = p^*, $$
and 
$$ h'(n) \succ \frac{ \sqrt{ \log{n} }}{ n \sqrt{ {p^*}^{1+\epsilon} } } $$
 for the constant $\epsilon$ in \eqref{eq:Nhat-2}. If the $\hat{N}_{\ccal}'$ defined by \eqref{eq:Nhat-2} is used in  \Cref{alg:cp_detection2}, with a sufficiently large $n$, the core and periphery nodes can be exactly identified with probability at least $1-(B(r)+6)n^{-\gamma}$ for some positive constant $\gamma$.
\end{corollary}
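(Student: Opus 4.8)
The plan is to reduce \Cref{prop:threshold_2} to \Cref{cor:strong_consistency_2} by showing that the data-driven cutoff $\tau_n := \frac{\sqrt{\log n}}{n\sqrt{\hat{p}^{1+\epsilon}}}$ used in \eqref{eq:Nhat-2} lies strictly between every periphery score and every core score with high probability. Once this is established, $\{i : S_i' > \tau_n\} = \ccal$, hence $\hat N_{\ccal}' = N_{\ccal}$, so running \Cref{alg:cp_detection2} with $\hat N_{\ccal}'$ in Step~5 produces exactly the partition it would produce with the true core size; the conclusion then follows from \Cref{cor:strong_consistency_2} itself.

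First I would pin down the order of $\tau_n$. Writing $\bar p = \frac{2}{n^2-n}\sum_{i<j}\bm{P}_{ij}$, the extra assumption $\min_{i,j}\bm{P}_{ij} \simeq \max_{i,j}\bm{P}_{ij} = p^*$ forces $\bm{P}_{ij}\simeq p^*$ for all $i,j$, hence $\bar p \simeq p^*$ and $d_i \simeq n p^*$ for every $i$, so that $d_i/\sum_k d_k \simeq 1/n$. Since $\hat p$ is an average of $n(n-1)/2$ independent Bernoulli variables of total variance $\simeq n^2 p^*$, a Bernstein bound together with the density lower bound $p^* \succ \mu_0^2 r \log n / n$ gives $\hat p \simeq p^*$, so $\tau_n \simeq \frac{\sqrt{\log n}}{n\sqrt{{p^*}^{1+\epsilon}}}$, on an event of probability at least $1 - 2n^{-\gamma}$.

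Next I would reuse the row-wise control established inside the proof of \Cref{cor:strong_consistency_2}: on the event of that theorem one has the uniform estimate $\max_i \norm{\hat{\bm{P}}_{i,*}\hat{\bm{D}}^{-1}\bm{H} - \bm{P}_{i,*}\bm{D}^{-1}\bm{H}}_2 \preceq \delta_n$, where $\delta_n$ is a fixed multiple of the right-hand side of \eqref{eq:strong_consistency_2} (this is also where the degree concentration $\hat d_i \simeq d_i$, valid because $d_{\min}\succ\log n$, enters). For a core node $i\in\ccal$ this gives $S_i' \ge h'(n)-\delta_n \ge \tfrac{1}{2} h'(n)$ for large $n$, since \eqref{eq:strong_consistency_2} forces $\delta_n = o(h'(n))$; combined with the new hypothesis $h'(n) \succ \frac{\sqrt{\log n}}{n\sqrt{{p^*}^{1+\epsilon}}}$, every core score then exceeds $\tau_n$. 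For a periphery node $i\in\pcal$, $\bm{P}_{i,*}\bm{D}^{-1}$ is constant off its $i$-th coordinate with value $d_i/\sum_k d_k \simeq 1/n$, so $\norm{\bm{P}_{i,*}\bm{D}^{-1}\bm{H}}_2 \preceq 1/n$ and hence $S_i' \preceq 1/n + \delta_n$. With $\mu_0$ and $r$ bounded and $d_{\min}\simeq n p^*$, the right-hand side of \eqref{eq:strong_consistency_2} simplifies to $\delta_n \simeq \frac{\sqrt{\log n}}{n\sqrt{p^*}}$, and $1/n \prec \tau_n$ because $p^*\le 1$ and $\log n\to\infty$; so every periphery score stays below $\tau_n$. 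A union bound over the event of \Cref{cor:strong_consistency_2} (probability $\ge 1-(B(r)+4)n^{-\gamma}$) and the two-sided concentration of $\hat p$ (probability $\ge 1-2n^{-\gamma}$) then delivers the claimed $1-(B(r)+6)n^{-\gamma}$.

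The step I expect to be the crux is the periphery side of the separation --- not the true periphery score, which is trivially of order $1/n$, but the estimation error $\delta_n$: one must verify $\delta_n < \tau_n$, and since $\delta_n \simeq \frac{\sqrt{\log n}}{n\sqrt{p^*}}$ while $\tau_n \simeq \frac{\sqrt{\log n}}{n\sqrt{{p^*}^{1+\epsilon}}}$, one has $\tau_n/\delta_n \simeq {p^*}^{-\epsilon/2}\ge 1$. This is precisely why \eqref{eq:Nhat-2} inflates $\hat p$ by the exponent $\epsilon$, and closing the inequality requires tracking the explicit constants coming from the concentration bounds (equivalently, one reads the conclusion off in the genuinely sparse regime $p^*\to 0$, where the margin ${p^*}^{-\epsilon/2}$ diverges). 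The remaining ingredients --- the concentration of $\hat p$ and of the node degrees, and the core-side inequality --- are routine once \Cref{cor:strong_consistency_2} is in hand.
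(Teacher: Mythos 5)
Your proposal is correct and follows essentially the same route as the paper's proof: bound each periphery score by the true score of order $d_{\max}/((n-1)d_{\min})$ plus the row-wise estimation error from the strong-consistency analysis, bound each core score from below by $h'(n)$ minus that same error, simplify both to order $\sqrt{\log n}/(n\sqrt{p^*})$ under the near-constant-$\bm{P}$ assumption, and combine with the Hoeffding/Bernstein concentration $\hat p \simeq p^*$ to conclude that the threshold $\sqrt{\log n}/(n\sqrt{\hat p^{\,1+\epsilon}})$ separates the two groups, with the extra $2n^{-\gamma}$ accounting for the concentration of $\hat p$. Your remark about the crux — that the separation $\tau_n/\delta_n \simeq {p^*}^{-\epsilon/2}$ only diverges when $p^*\to 0$ — is an accurate reading of the same (implicit) step in the paper's argument.
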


Finally ,the following result is still available under weaker conditions.
\begin{theorem}\label{thm:weak_consistency_2}
Assume the network $\bm{A}$ is generated from the configuration-type model in \Cref{def:core_periphery2}, and \Cref{alg:cp_detection2} is used to identify the core nodes with the correct $N_{\cal C}$. Suppose $d_{\min} \succ \log{n}$, and $h'(n) > \frac{d_{\max}}{(n-1)d_{\min}}$. Denote the number of misclassified core nodes by $M'$. Then,
$$M' \preceq \max\{r, \rank(\bm{P})\} \cdot \frac{ np^* + \lambda_{r+1}^2 + \norm{\bm{P}\bm{D}^{-1}}_2^2 \cdot d_{\min} \cdot \log{n} }{ d_{\min}^2 \left[h'(n) - \frac{d_{\max}}{(n-1)d_{\min}} \right]^2 }$$
with probability at least $1-\frac{3}{n^{\gamma}}$ for some positive constant $\gamma$.
\end{theorem}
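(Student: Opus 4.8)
The plan is to run the same perturbation-and-counting scheme that underlies \Cref{cor:strong_consistency_2}, but --- since we no longer wish to invoke incoherence --- to replace the row-wise $(2,\infty)$ control by a coarser Frobenius-norm budget, which is enough to \emph{count} misclassified nodes even though it cannot rule them out. Write $\bm{R} = \bm{P}\bm{D}^{-1}\bm{H}$ and $\hat{\bm{R}} = \hat{\bm{P}}\hat{\bm{D}}^{-1}\bm{H}$, so that the scores produced by \Cref{alg:cp_detection2} are $S_i' = \norm{\hat{\bm{R}}_{i,*}}_2$. As noted in \Cref{secsec:configuration-theory}, for $i\in\pcal$ the row $\bm{P}_{i,*}\bm{D}^{-1}$ equals $\frac{d_i}{\sum_k d_k}$ off its $i$-th coordinate (which is $0$), so $\norm{\bm{R}_{i,*}}_2 \le d_i/\sum_k d_k \le d_{\max}/((n-1)d_{\min})$, while $\norm{\bm{R}_{i,*}}_2 \ge h'(n)$ for $i\in\ccal$ by the definition of $h'(n)$. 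The reverse triangle inequality gives $\abs{S_i' - \norm{\bm{R}_{i,*}}_2} \le \norm{(\hat{\bm{R}}-\bm{R})_{i,*}}_2$ for every $i$.

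\emph{Counting step.} Since the correct $N_{\ccal}$ is used, $M'$ equals the number of periphery nodes assigned to the estimated core, and (if $M'\ge1$) any misclassified core node $i$ and misclassified periphery node $j$ satisfy $S_i' \le S_j'$ because $\hat\ccal$ collects the $N_{\ccal}$ largest scores. Feeding in the two score bounds, $\norm{(\hat{\bm{R}}-\bm{R})_{i,*}}_2 + \norm{(\hat{\bm{R}}-\bm{R})_{j,*}}_2 \ge h'(n) - d_{\max}/((n-1)d_{\min}) =: 2\tau$, which is positive by hypothesis, so at least one of those two rows has perturbation $\ge\tau$. A one-line case split (according to whether some misclassified core node has perturbation below $\tau$) then yields $M' \le \#\{k\in[n]:\norm{(\hat{\bm{R}}-\bm{R})_{k,*}}_2 \ge \tau\} \le \tau^{-2}\norm{\hat{\bm{R}}-\bm{R}}_F^2$. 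Since $\norm{\bm{H}}_2 \le 1$, it then suffices to bound $\norm{\hat{\bm{P}}\hat{\bm{D}}^{-1} - \bm{P}\bm{D}^{-1}}_F$.

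\emph{Perturbation step.} Split $\hat{\bm{P}}\hat{\bm{D}}^{-1} - \bm{P}\bm{D}^{-1} = (\hat{\bm{P}} - \bm{P})\hat{\bm{D}}^{-1} + \bm{P}(\hat{\bm{D}}^{-1} - \bm{D}^{-1})$ and work on the event where (a) $\norm{\bm{E}}_2 \preceq \sqrt{np^*}$ (network concentration; valid since $d_{\min}\succ\log n$ and $d_{\min}\le np^*$ force $np^*\succ\log n$) and (b) $\abs{\hat d_i - d_i}\preceq\sqrt{d_i\log n}$ and $\hat d_i \simeq d_i$ for all $i$ (Bernstein plus a union bound, where again $d_{\min}\succ\log n$ makes the relative error vanish). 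For the first piece, Weyl's inequality applied to the rank-$r$ truncation gives $\norm{\hat{\bm{P}}-\bm{P}}_2 \le \abs{\lambda_{r+1}} + 2\norm{\bm{E}}_2$ together with $\rank(\hat{\bm{P}}-\bm{P})\le r+\rank(\bm{P})$, hence $\norm{\hat{\bm{P}}-\bm{P}}_F \preceq \sqrt{\max\{r,\rank(\bm{P})\}}\,(\abs{\lambda_{r+1}} + \sqrt{np^*})$, and multiplying by $\norm{\hat{\bm{D}}^{-1}}_2 \preceq d_{\min}^{-1}$ controls it. For the second piece I would rewrite it as $(\bm{P}\bm{D}^{-1})\diag\!\big((d_i-\hat d_i)/\hat d_i\big)$ and bound it by $\norm{\bm{P}\bm{D}^{-1}}_F \cdot \max_i\abs{d_i-\hat d_i}/\hat d_i \preceq \sqrt{\rank(\bm{P})}\,\norm{\bm{P}\bm{D}^{-1}}_2 \cdot \sqrt{\log n/d_{\min}}$. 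Squaring, adding, and inserting into the counting bound gives exactly $M' \preceq \max\{r,\rank(\bm{P})\}\cdot\frac{np^* + \lambda_{r+1}^2 + \norm{\bm{P}\bm{D}^{-1}}_2^2 d_{\min}\log n}{d_{\min}^2(h'(n) - d_{\max}/((n-1)d_{\min}))^2}$, with the complement of the two good events contributing the $3n^{-\gamma}$.

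The step I expect to be delicate is the degree-correction term $\bm{P}(\hat{\bm{D}}^{-1}-\bm{D}^{-1})$: the lazy bound $\norm{\bm{P}\bm{D}^{-1}}_2\norm{(\bm{D}-\hat{\bm{D}})\hat{\bm{D}}^{-1}}_F$ is off by a factor $\sqrt{n}$, because the Frobenius norm of an $n\times n$ diagonal noise matrix aggregates $n$ independent fluctuations. The fix is to keep the diagonal noise on the \emph{right}, bound it in operator norm by $\max_i\abs{\hat d_i-d_i}/\hat d_i \preceq \sqrt{\log n/d_{\min}}$, and let $\bm{P}\bm{D}^{-1}$ enter only through its Frobenius norm, which is $\sqrt{\rank(\bm{P})}\,\norm{\bm{P}\bm{D}^{-1}}_2$ rather than $\sqrt{n}\,\norm{\bm{P}\bm{D}^{-1}}_2$. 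After that, the remaining care is pure bookkeeping: tracking the powers of $d_{\min}$ (one factor from $\hat{\bm{D}}^{-1}$ in each piece, squared on passing through the counting bound) and observing that the scalar hypothesis $h'(n) > d_{\max}/((n-1)d_{\min})$ is precisely what keeps $\tau>0$ so the bound is non-vacuous; otherwise the argument is parallel to that of \Cref{thm:weak_consistency_1}.
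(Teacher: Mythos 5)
Your proposal is correct and follows essentially the same route as the paper: convert the Frobenius-norm perturbation budget of $\hat{\bm{P}}\hat{\bm{D}}^{-1}\bm{H}$ into a count of misclassified rows via the gap $h'(n)-d_{\max}/((n-1)d_{\min})$, control the spectral part with the Lei--Rinaldo concentration bound plus Weyl, and control the degree correction with the $\max_i\abs{\hat d_i/d_i-1}\preceq\sqrt{\log n/d_{\min}}$ event. Your algebraic split $(\hat{\bm{P}}-\bm{P})\hat{\bm{D}}^{-1}+\bm{P}(\hat{\bm{D}}^{-1}-\bm{D}^{-1})$ differs cosmetically from the paper's self-bounding decomposition, and your explicit two-case counting argument is in fact more carefully spelled out than the paper's one-line assertion, but the ingredients and the resulting bound are identical.
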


\section{Simulation examples}\label{sec:simulation}

In this section, we evaluate the performance of our proposed algorithm on finite-size synthetic networks. We will demonstrate the effectiveness and the advantage of our method under a few different core models and density gaps between the core the periphery.

In generating our networks, we always set the first $N_{\ccal}$ nodes to core. To demonstrate the flexibility with respect to the core structure, we set the core component according to the graphon models \citep{aldous1981representations}. Specifically, the core submatrix $\bm{P}^{\ccal}$ is generated in the following way. Given a graphon function $g:[0,1] \times [0,1] \to [0,1]$, we first generate $N_{\ccal}$ i.i.d. random variables $\xi_i \sim \text{Uniform}[0,1], i = 1, \cdots, N_{\ccal}$, and then $\bm{P}^{\ccal}$ is set as
\begin{equation}\label{eq:graphon}
P^{\ccal}_{ij} = g(\xi_i, \xi_j), 1\le i,j \le N_{\ccal}
\end{equation}
We use three graphon functions defined in \citet{zhang2017estimating} as our simulation examples.   The first one gives the simplest SBM for $\bm{P}^{\ccal}$ with blockwise constant structure; The second one still has a low-rank $\bm{P}^{\ccal}$, but does not have the nice block structure; The third model is even more complicated and generates a full-rank $\bm{P}^{\ccal}$ -- this is a setting to verify the validity of our low-rank approximation strategy when the model is full-rank. The three models are summarized in \Cref{tab:graphons} and the heatmaps of the $\bm{P}^{\ccal}$ in the three models are shown in \Cref{fig:sim_graphon_Const,fig:sim_graphon_Config}.  Given $\bm{P}^{\ccal}$, we fill in the other positions of $\bm{P}$ by periphery probabilities. For the ER-type model, we simply fill in a constant value. For the configuration-type model, the construction involves multiple steps. Let $\theta^{\cal C}_i = \sum_{j=1}^{N_{\cal C}} \bm{P}^{\cal C}_{ij}$, and  sample $\theta^{\cal P}_i, i=1,2,...,N_{\cal P}$ from a uniform distribution between $0.5\min_{i\in \ccal}\theta_i$ and $1.5\max_{i\in \ccal}\theta_i$. Then, let $\bm{\theta} = \{\theta^{\cal C}_1, \theta^{\cal C}_2, ..., \theta^{\cal C}_{N_{\cal C}}, \theta^{\cal P}_1, \theta^{\cal P}_2, ..., \theta^{\cal P}_{N_{\cal P}} \} $. The edge probability involving periphery node is set as $\bm{P}_{ij} = \frac{\bm{\theta}_i \bm{\theta}_j}{ \sum_{k=1}^{N_{\cal C}} \theta^{\cal C}_k }$. It is not difficult to see that from this procedure, $d_i = \sum_{j=1}^{n} \bm{P}_{ij} = \frac{ \bm{\theta}_i \sum_{j=1}^{n} \bm{\theta}_{j} }{\sum_{k=1}^{N_{\cal C}} \theta^{\cal C}_k}$, and $\bm{P}_{ij} = \frac{ d_i d_j }{ \sum_{k_1}^{n} d_k }$ for $i \in {\cal P}$, matching \Cref{def:core_periphery2}.

We then rescale the generated probability matrix, so the average edge density is around $0.02$. In different configurations, we vary the average degrees of core and periphery nodes to demonstrate the effects of varying density ratio between the two components. We focus on the settings where the core has an equal or higher density than the periphery \footnote{Our methods perform well even if the core is sparser than the periphery. However, such a setting may be less realistic, so it is not included.}. The core size and periphery size are both $1000$ in this section. In \Cref{appendix:additional_simulation}, we also include results for setting of imbalanced sizes.

\begin{table}
\caption{Graphons for simulating network cores.}
\label{tab:graphons}
\centering
\begin{tabular}{c | c}
Graphon function $g(\mu, \nu)$ & Rank \\
\hline
$k/7$, if $\mu,\nu \in ((k-1)/6, k/6)$; $0.3/7$ otherwise. & $6$ \\ 
$\sin[5\pi(\mu+\nu-1)+1]/2+0.5$ & $3$ \\
$1/\{ 1+\exp{[15(0.8|\mu-\nu|)^{4/5}-0.1]} \}$ & Full \\
\end{tabular}
\end{table}

Several benchmark core-periphery identification methods are included in the evaluation. The first two methods are degree thresholding (Degree) and PageRank \citep{page1999pagerank} thresholding (PageRank). These two centrality measures are shown to be competitive for identifying the core component in the study of \citep{barucca2016centrality,rombach2017core}. Theoretically, it is shown by \citet{zhang2015identification} that under the SBM core-periphery model, the degree thresholding is optimal in favorable configurations. Another commonly used method is thresholding by the local clustering coefficient \citep{watts1998collective} (Local CC). The $k$-core pruning (k-core) algorithm \citep{seidman1983network} is also included in our evaluation. It can be seen as a more adaptive version than the degree thresholding and is shown to effectively extract meaningful subnetworks in \citet{wang2016discussion,li2020network,li2020hierarchical}. The final method is from \citet{priebe2019two}, where the Adjacency Spectral Embedding (ASE) \citet{sussman2012consistent} is used to capture the core-periphery structure when both affinity and core-periphery structures are present.

To fully characterize the core identification performance, we consider the tradeoff between the true positive rate (TPR) and the false positive rate (FPR), define as
$$\text{TPR} = \frac{\#\{\text{Correctly identified nodes}\}}{\#\{\text{Identified nodes}\}}~~\text{and}~~\text{TPR} = \frac{\#\{\text{Incorrectly identified  nodes}\}}{\#\{\text{Identified nodes}\}}.$$
These two metrics can be shown by the receiver operating characteristic (ROC). For each thresholding-based method, the full ROC curve is obtained if by varying the threshold. The k-core pruning is applied with $k$ increasing from 0 to the large integer, producing a sequence of points in the ROC space. The ASE, however, only gives a single point in the ROC space. For our method, we also include the single points based on our recommended threshold selection methods in Corollary~\ref{prop:threshold_1} and \ref{prop:threshold_2}, denoted by ``$\ast$". Empirically, we also found that applying $k$-means algorithm with $k=2$ to the log-transformed scores works well in our simulation, and we mark the point obtained this way by ``+" on the ROC curves. 

\Cref{fig:sim_graphon_Const} shows the results under the ER-type model. As can be seen, the easiest setting is when the core is much denser than the periphery. In this setting, most of the methods are reasonably good, and though our method is the most effective one, the advantage not moderate. As the density between the core and the periphery becomes more similar, the problem becomes more difficult, and some of the benchmarks become close to random guess. However, our method still maintains good performance, and the advantage over other methods becomes more significant. This is expected since many of the benchmarks rely on the density gap between the two components while our method does not. By comparing the results across different core models, one can see that the benchmark methods may perform well under one model but fails under another. In contrast, our method remains the best one in all settings, thanks to our model's generality. Finally, the thresholds given by our theory ($\ast$) and $k$-means clustering (+) render good model selections in the ROC space.

\begin{figure}[H]
\centering
\begin{subfigure}{0.96\textwidth}
  \centering
  \includegraphics[width=1\textwidth]{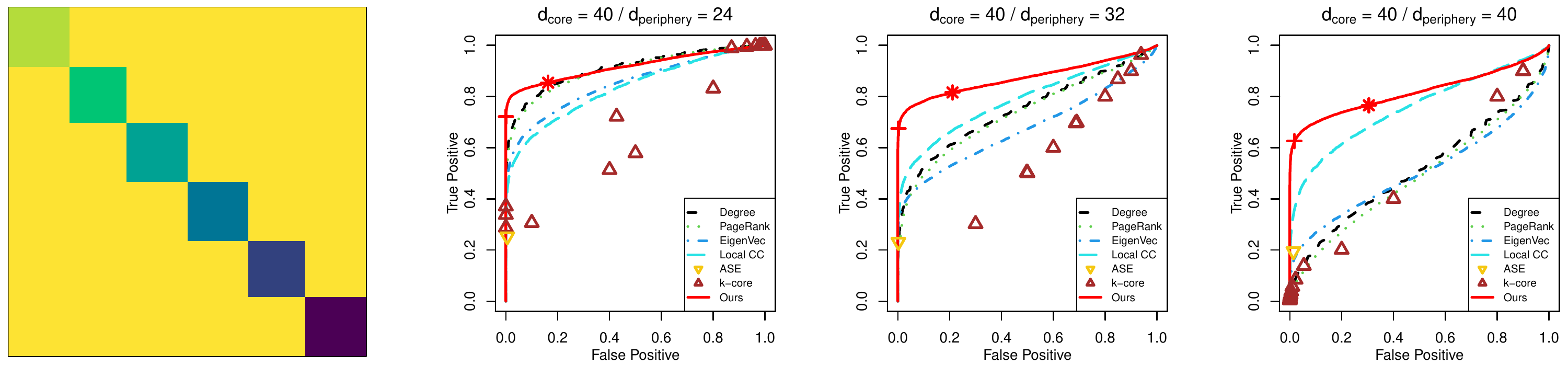}
  \caption{Graphon 1}
  \label{fig:graphon_1}
\end{subfigure} \\
\begin{subfigure}{0.96\textwidth}
  \centering
  \includegraphics[width=1\textwidth]{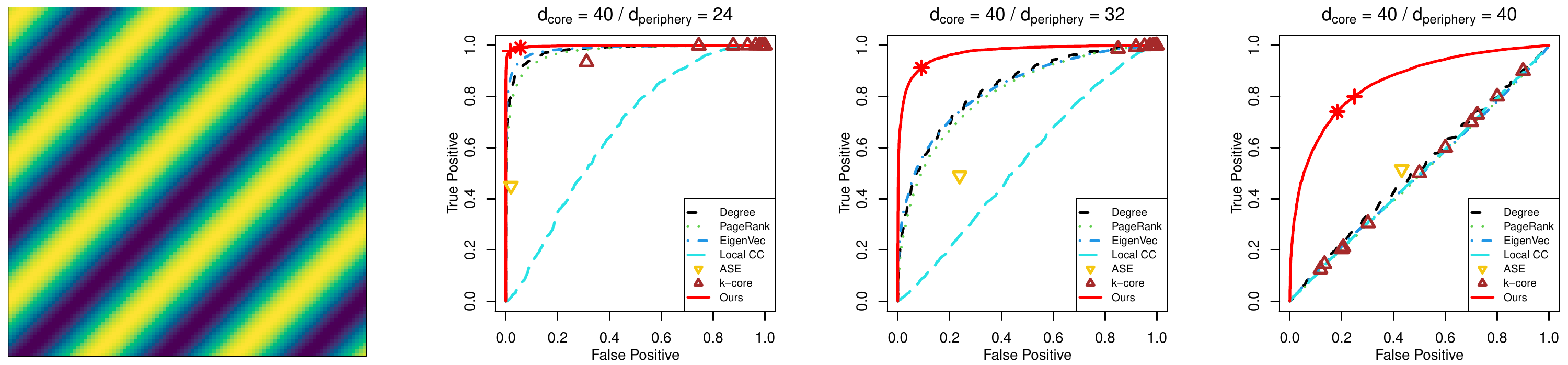}
  \caption{Graphon 2}
  \label{fig:graphon_2}
\end{subfigure} \\
\begin{subfigure}{0.96\textwidth}
  \centering
  \includegraphics[width=1\textwidth]{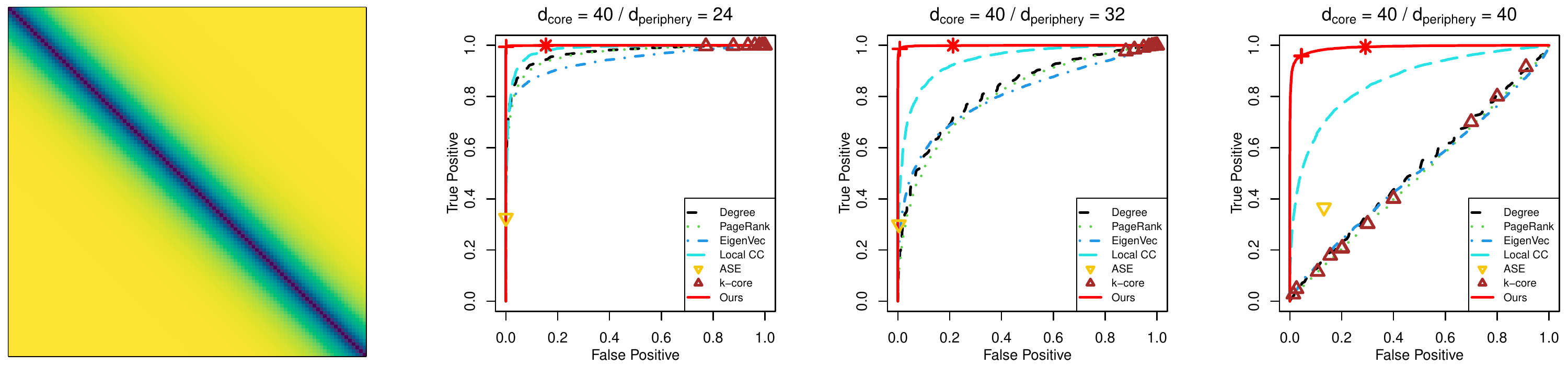}
  \caption{Graphon 3}
  \label{fig:graphon_3}
\end{subfigure} \\
\caption{Simulation results under ER-type core-periphery model where $N_{\ccal} = N_{\pcal} = 1000$. The left figures are the core graphon functions, and the corresponding ROC curves are shown on the right, under different degree-gaps between core and periphery. The point ``$\ast$" gives the model selection based on Corollary~\ref{prop:threshold_1}, and ``+"  indicates the model selection by $k$-means clustering with $k=2$.}
\label{fig:sim_graphon_Const}
\end{figure}

\begin{figure}[H]
\centering
\begin{subfigure}{0.96\textwidth}
  \centering
  \includegraphics[width=1\textwidth]{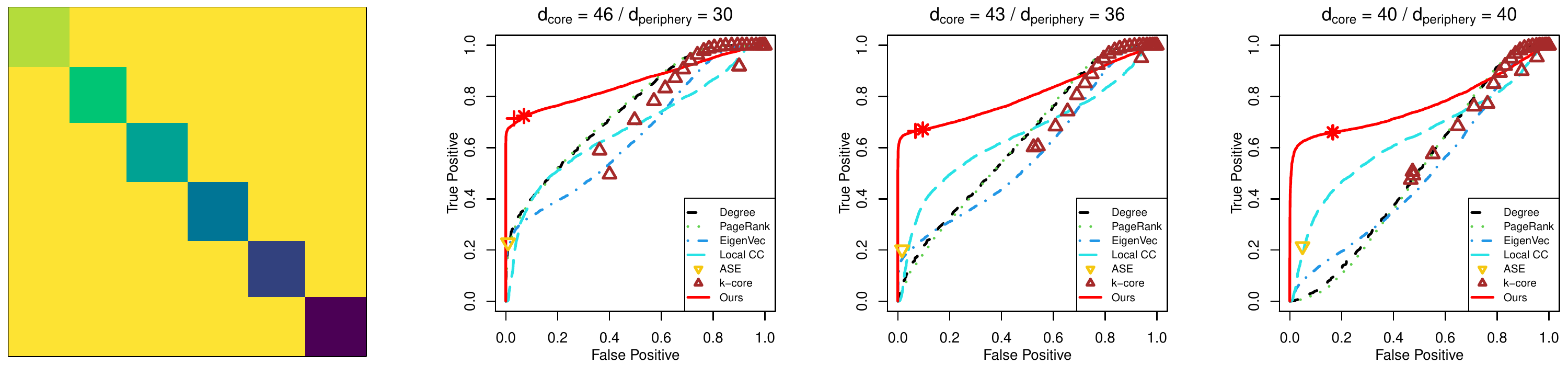}
  \caption{Graphon 1}
  \label{fig:graphon_1}
\end{subfigure} \\
\begin{subfigure}{0.96\textwidth}
  \centering
  \includegraphics[width=1\textwidth]{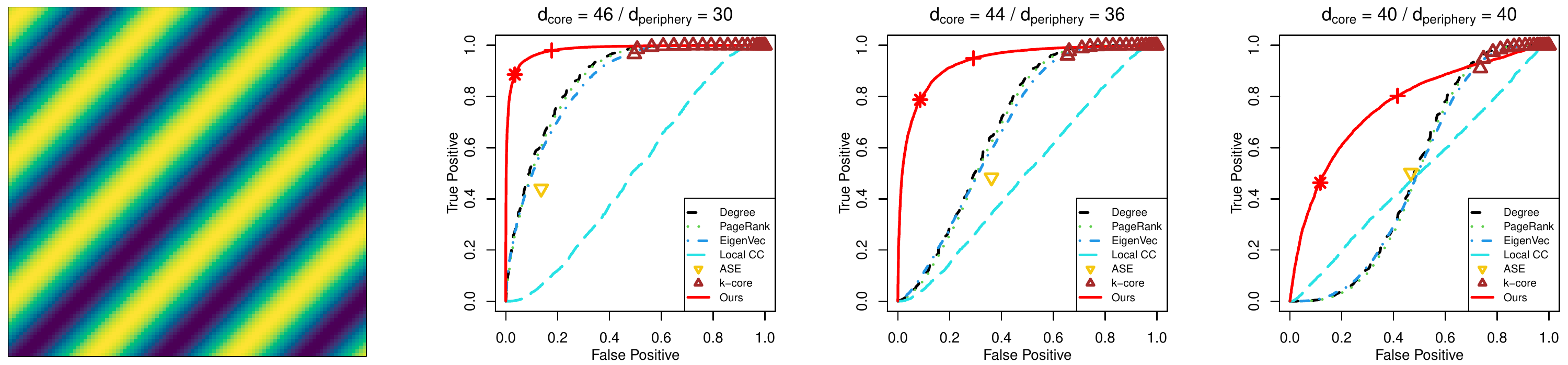}
  \caption{Graphon 2}
  \label{fig:graphon_2}
\end{subfigure} \\
\begin{subfigure}{0.96\textwidth}
  \centering
  \includegraphics[width=1\textwidth]{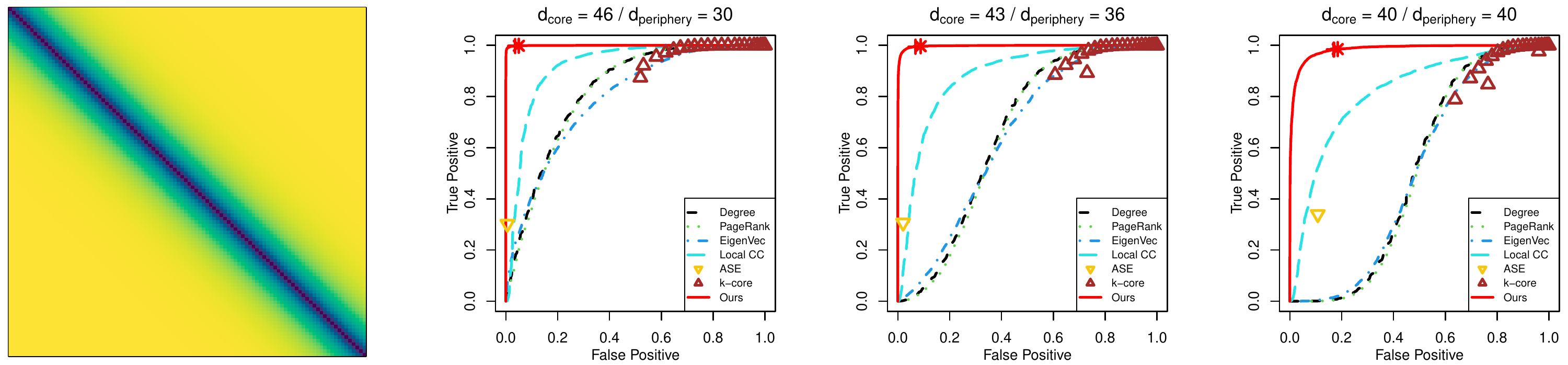}
  \caption{Graphon 3}
  \label{fig:graphon_3}
\end{subfigure} \\
\caption{Simulation results under configuration-type core-periphery model where $N_{\ccal} = N_{\pcal} = 1000$. The left figures are the core graphon functions, and the corresponding ROC curves are shown on the right, under different degree-gaps between core and periphery. The point ``$\ast$" gives the model selection based on Corollary~\ref{prop:threshold_2}, and ``+"  indicates the model selection by $k$-means clustering with $k=2$.}
\label{fig:sim_graphon_Config}
\end{figure}

\Cref{fig:sim_graphon_Config} shows the results under the configuration-type model. The pattern is very similar to that of \Cref{fig:sim_graphon_Const}.  Overall, the simulation examples show that our methods outperform the benchmark methods in the core identification accuracy across various core models and varying core-periphery degree gaps.

\section{Core extraction in a statistics citation network}\label{sec:data}
We illustrate the impact of our core extraction method in downstream community analysis for the paper citation network collected by \citet{ji2016coauthorship}. Each node of the network is a paper, and two nodes are connected if one paper cited the other (ignoring the citation direction).  We focus on the largest connected component of the network. This network has $2248$ nodes and the average node degree is $4.95$. In \Cref{fig:statcite_plot}, we plot the whole citation network, and the core component extracted by \Cref{alg:cp_detection1} and \Cref{alg:cp_detection2}, with two different core sizes. The core sizes are selected to match that of the $k$-core algorithm, for easy comparison between the two approaches.

\begin{figure}
\centering
\begin{subfigure}{0.45\textwidth}
  \centering
  \includegraphics[width=1\textwidth]{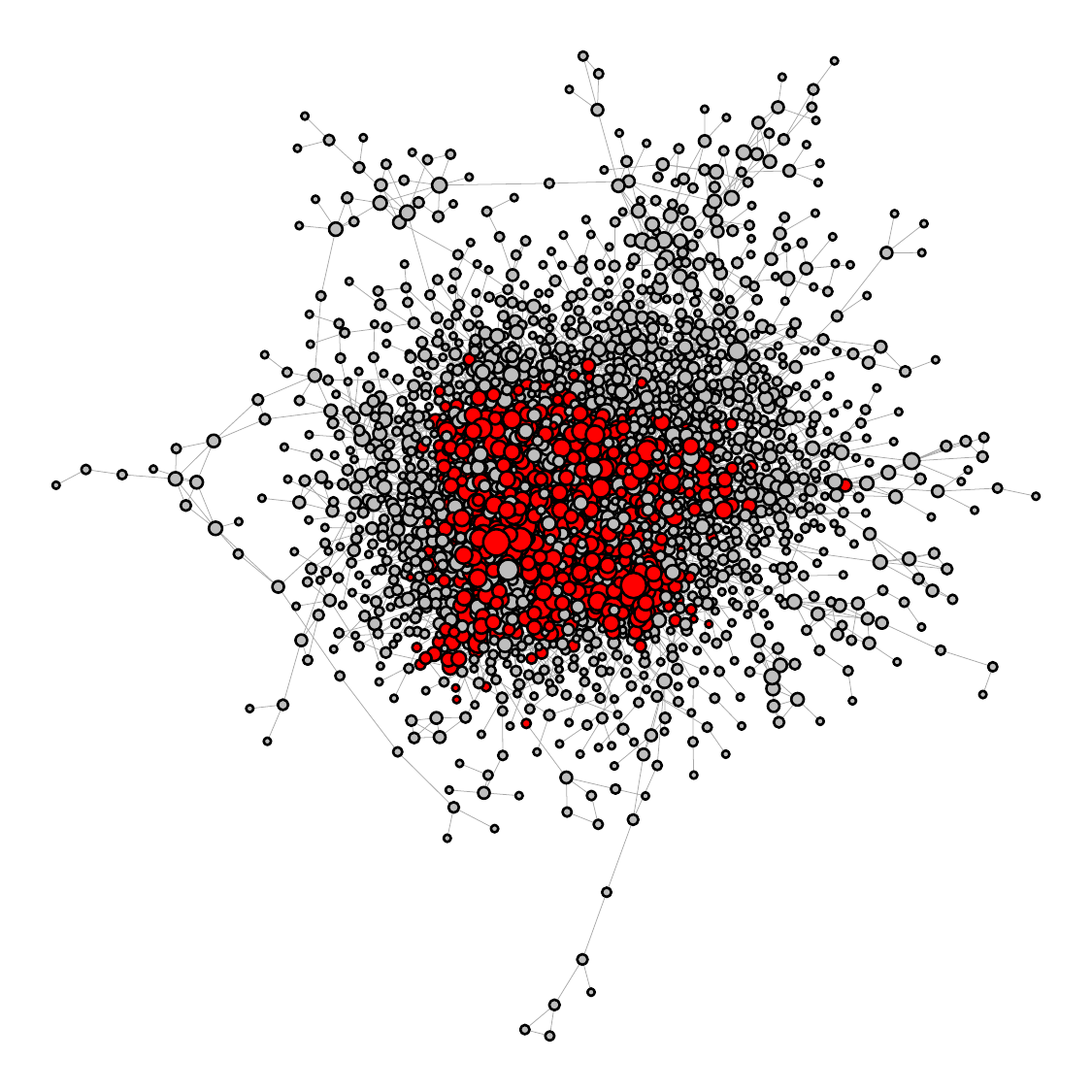}
  \caption{ER-type, $N_{\cal C} = 635$.}
  \label{fig:statcite_ER635}
\end{subfigure}
\begin{subfigure}{0.45\textwidth}
  \centering
  \includegraphics[width=1\textwidth]{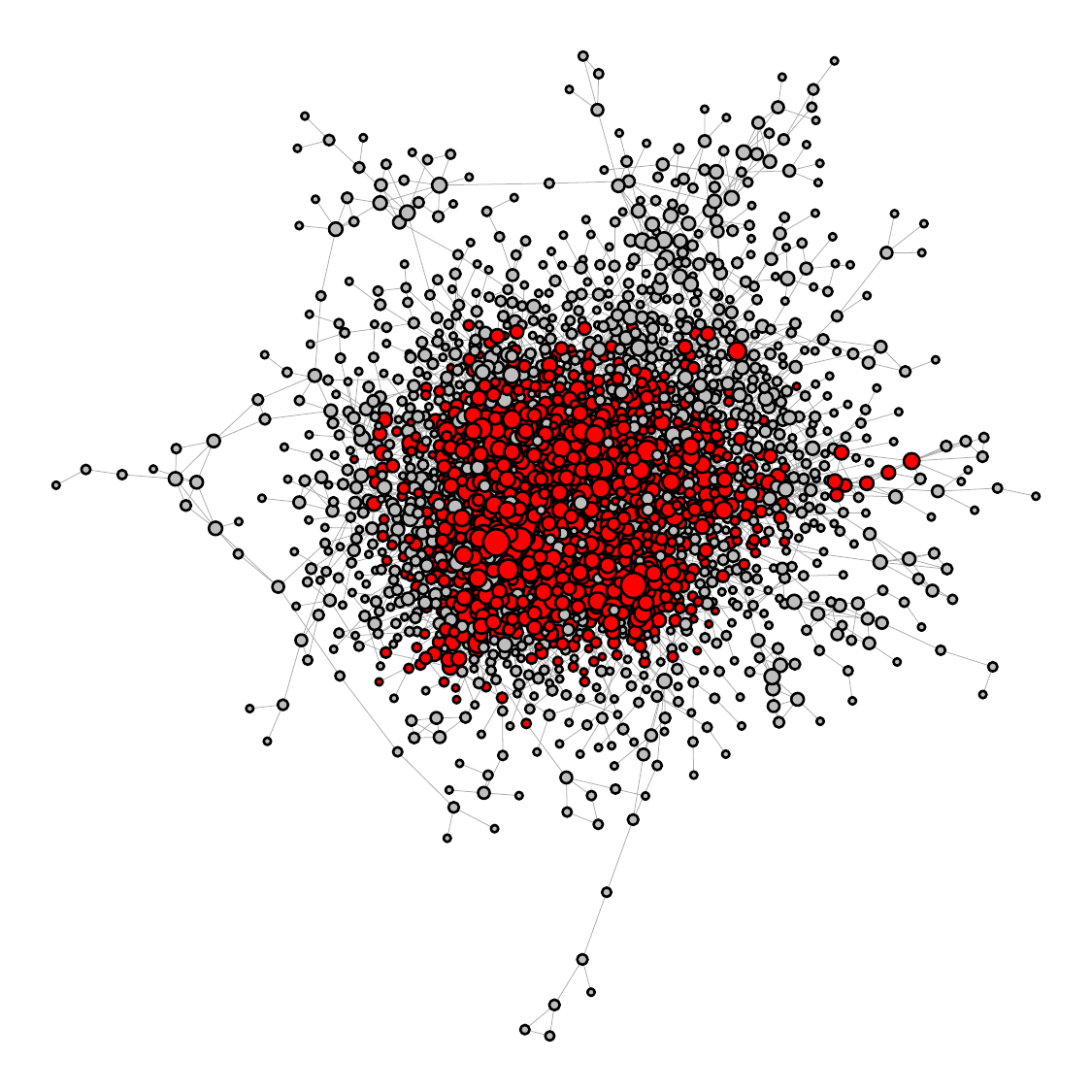}
  \caption{ER-type, $N_{\cal C} = 1103$.}
  \label{fig:statcite_ER1103}
\end{subfigure}
\begin{subfigure}{0.45\textwidth}
  \centering
  \includegraphics[width=1\textwidth]{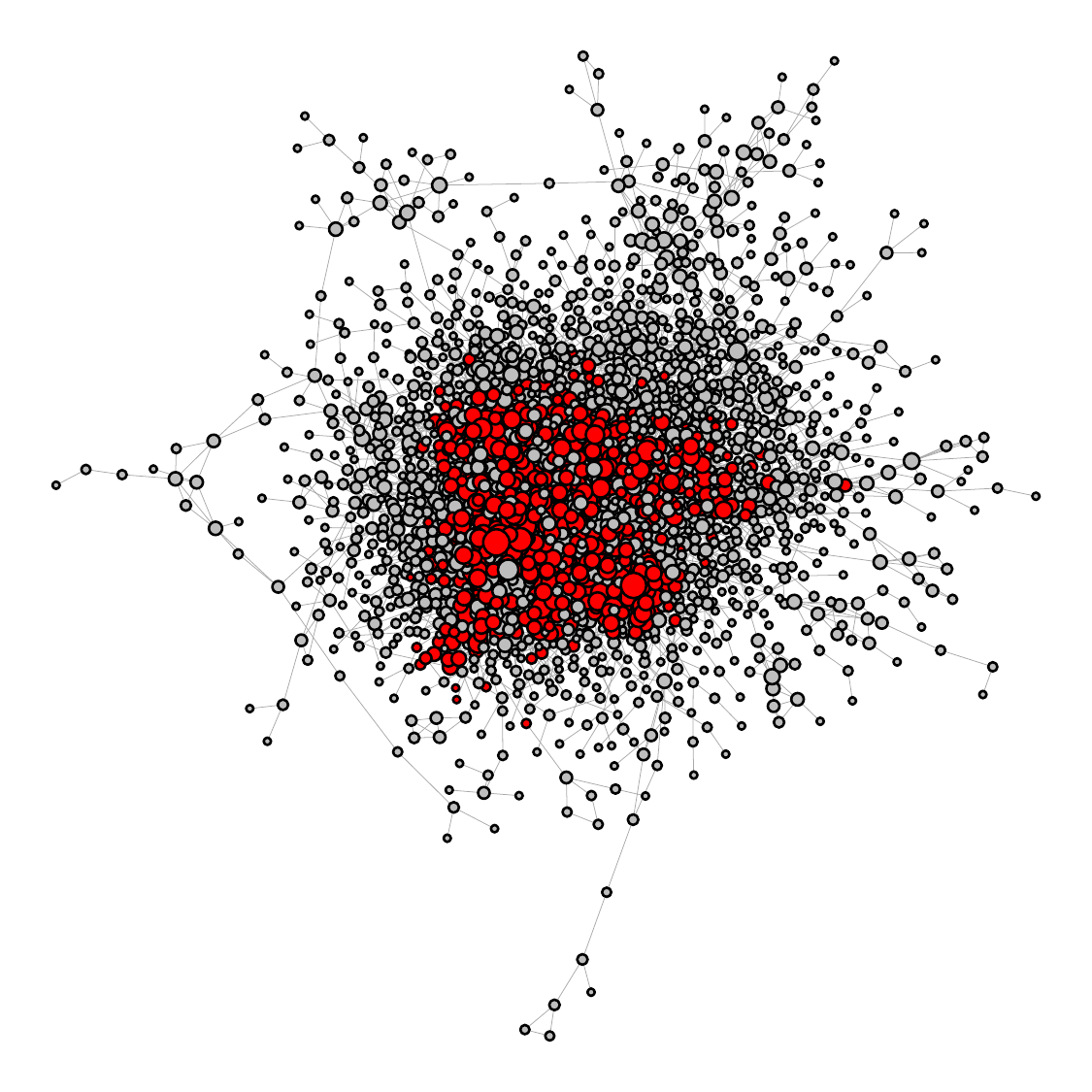}
  \caption{Configuration-type, $N_{\cal C} = 635$}
  \label{fig:statcite_Config635}
\end{subfigure}
\begin{subfigure}{0.45\textwidth}
  \centering
  \includegraphics[width=1\textwidth]{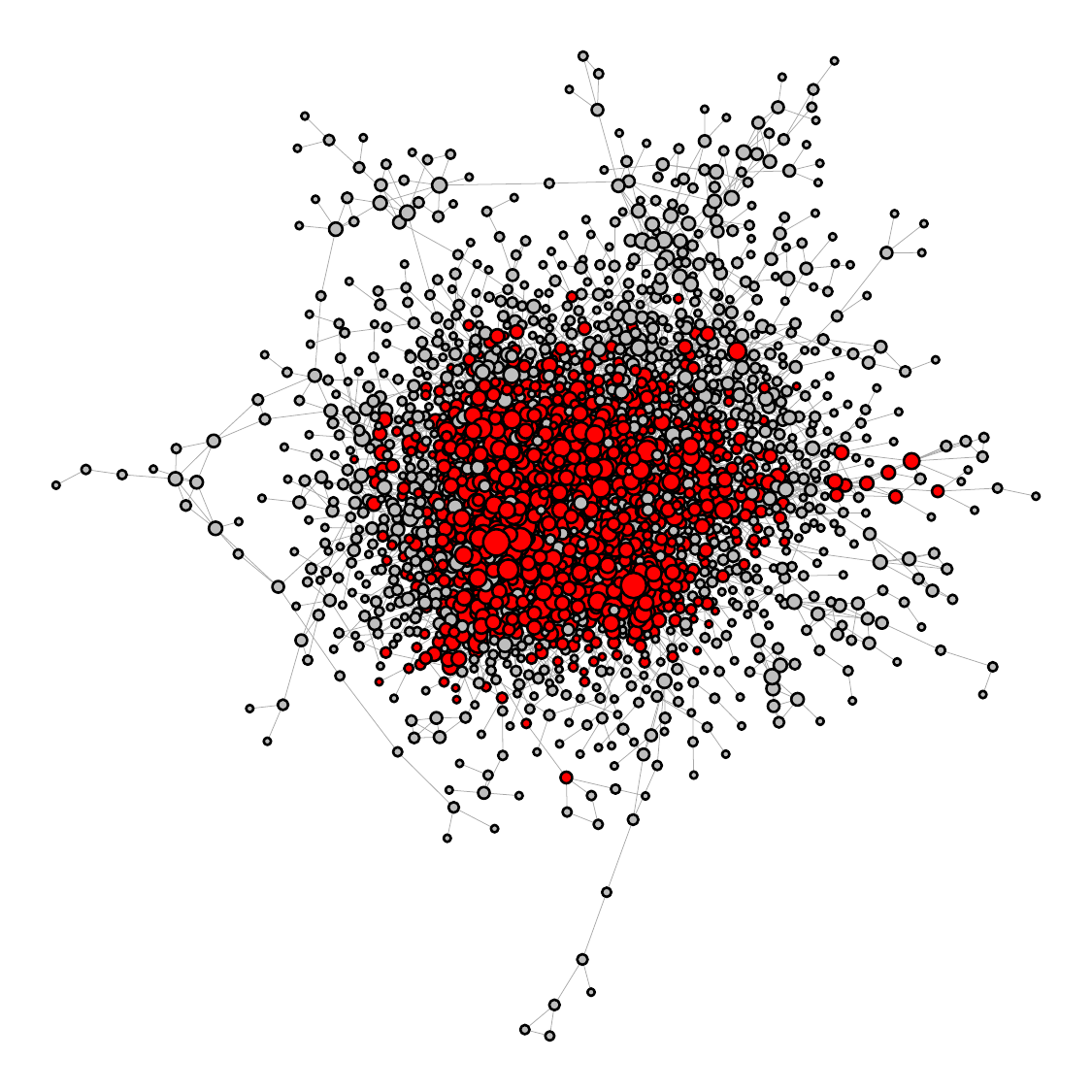}
  \caption{Configuration-type, $N_{\cal C} = 1103$}
  \label{fig:statcite_Config1103}
\end{subfigure}
\caption{Plots of the citation network, and the core components are highlighted in red.}
\label{fig:statcite_plot}
\end{figure}

In the analysis of \citet{wang2016discussion},  the $4$-core pruning is applied to the network, resulting in a core of $635$ nodes for their downstream analysis.  In this example, we compare several methods in Section~\ref{sec:simulation} and evaluate the performance by comparing the validity of the hierarchical community detection results on the extracted cores. For fair comparisons, we follow \citet{wang2016discussion} to use either  $3$-core and $4$-core pruning algorithms to obtain cores of size 1103 and 635, respectively. We then use other algorithms to extract cores of the same sizes. In addition to our methods, the other benchmark methods applicable for this task include degree centrality, eigenvector centrality, PageRank centrality, and local clustering coefficient. 

The hierarchical community detection (HCD) algorithm from \citet{li2020hierarchical} is then applied to the extracted cores. The HCD simultaneously detects the community membership and the hierarchical relation between the communities in the form of a binary tree. According to \citet{li2020hierarchical}, this hierarchical relationship can be transformed into a similarity matrix $\bm{S}$ where $\bm{S}_{kk'}$ measures the similarity between community $k$ and $k'$ along the hierarchy. 

We want to evaluate the meaningfulness of the hierarchical relationships in a quantitative way by comparing the hierarchical similarity $\bm{S}$ (based on the citation network) with the content similarity based on text data. In particular, the abstracts of all papers are available from \citet{wang2016discussion}. We represent each abstract as a term-frequency vector and apply the standard text mining processing such as stemming and stop words (including punctuations and numbers) removing. The term frequency-inverse document frequency (TF- IDF) weighting \citep{rajaraman2011mining} is then applied to each word. We remove words that appear in less than $1\%$ of the papers, and $966$ words remain after processing. The cosine similarity between each pair of papers is calculated, and a community level similarity matrix $\bm{T}$ is constructed where  $\bm{T}_{kk'}$ is the average cosine similarity between papers from community $k$ and community $k'$. We then calculate the Spearman correlation between $\bm{S}$ and $\bm{T}$ as a metric to measure how well the hierarchical structure discovered by HCD from the network matches the similarity derived from the abstracts. The results for cores extracted by different methods are summarized in \Cref{tab:sim_mat_cor}.

\begin{table}
\caption{Correlation between $\bm{S}$ and $\bm{T}$.}
\label{tab:sim_mat_cor}
\centering
\begin{tabular}{c | c | c }
\multirow{2}{*}{Methods} &
      \multicolumn{2}{c}{ Correlation } \\
    	& $N_{\cal C} = 635$ 	& $N_{\cal C} = 1103$ \\
\hline
\hline
Degree	& $0.099$ 	& $0.089$ \\
$k$-core	& $0.167$ 	& $0.108$ \\
PageRank	& $0.013$ 	& $0.106$ \\
EigenVec	& $0.143$ 	& $0.050$ \\
Local CC	& $0.058$ 	& $0.045$ \\
Ours (ER)	& $\bm{0.340}$ 	& $\bm{0.164}$ \\
Ours (Config)	& $\bm{0.350}$ 	& $\bm{0.155}$ \\
\end{tabular}
\end{table}

It can be seen that the cores extracted by both of our two models render significantly more meaningful hierarchies than the other benchmarks. The difference between the ER-type model and the configuration-type model is negligible. Also, applying HCD to the two cores from the ER-type model and the configuration-type model leads to the same hierarchical structure, with some marginal differences.

\Cref{fig:statcite_config_hierarchy} shows the extracted core by the configuration-type model with $N_{\cal C}=635$, and the corresponding hierarchical structure given by the HCD algorithm. It turns out that the community labels are also very interpretable. Since each cluster is a group papers, we list the most frequent keywords of the papers in each cluster in \Cref{tab:keywords}. The keywords in each group are highly coherent. 

\begin{figure}
\centering
  \includegraphics[width=0.8\textwidth]{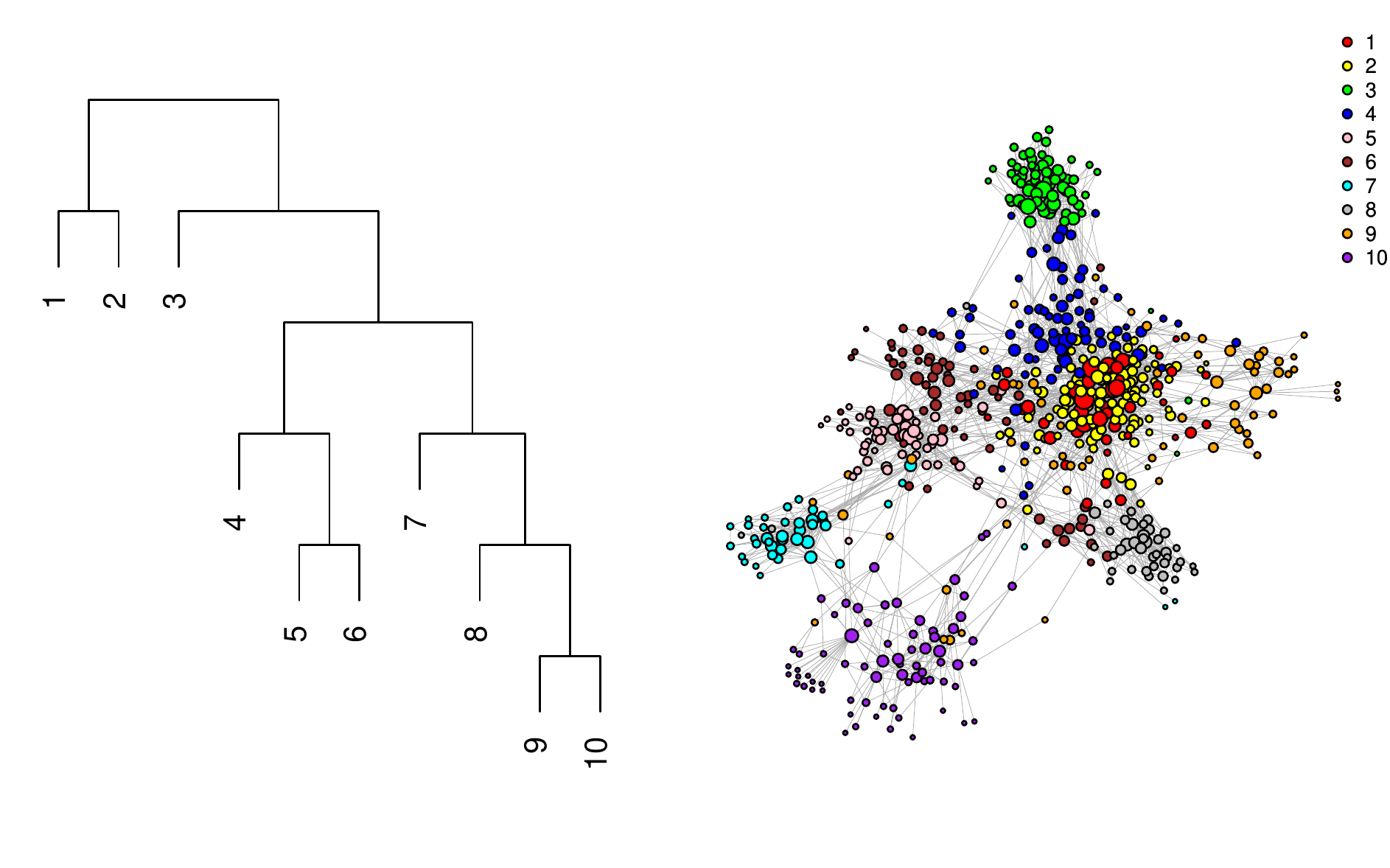}
  \caption{Hierarchical community structure of the core. The core has $N_{\cal C}=635$, and the configuration-type model is used.}
  \label{fig:statcite_config_hierarchy}
\end{figure}

\begin{table}\small
\caption{The most frequent keywords for each cluster in the hierachy.}
\label{tab:keywords}
\centering
\begin{tabular}{c | c}
Cluster & Most frequent keywords \\
\hline
\hline
1 & \makecell{ lasso, variable selection, smoothly clipped absolute deviation,\\ model selection, asymptotic normality, sparsity } \\ 
\hline
2 & \makecell{ lasso, variable selection, oracle property, sparsity,\\ regularization, model selection, smoothly clipped absolute deviation } \\ 
\hline
3 & \makecell{ false discovery rate, multiple testing, multiple comparisons,\\ familywise error rate, p-value, stepdown procedure} \\ 
\hline
4 & \makecell{ sparsity, lasso, regularization, covariance matrix,\\ high dimensional data, model selection, thresholding } \\ 
\hline
5 & \makecell{ functional data, smoothing, principal component,\\ eigenfunction, eigenvalue, functional regression } \\ 
\hline
6 & \makecell{ nonparametric regression, generalized estimating equation, functional data,\\ longitudinal data, partially linear model, semiparametric model } \\ 
\hline
7 & \makecell{ mixture model, nonparametric bayes, dirichlet process,\\ hierarchical model, stick breaking } \\ 
\hline
8 & \makecell{ sliced inverse regression, central subspace, sliced average variance estimation,\\ dimension reduction, nonparametric regression } \\ 
\hline
9 & \makecell{ classification, model selection, oracle inequality,\\ support vector machine, aggregation, sparsity, statistical learning } \\ 
\hline
10 & \makecell{ markov chain monte carlo, bayesian inference, gaussian markov random field,\\ gaussian process, generalized linear mixed model, kriging, spatial statistics } \\
\end{tabular}
\end{table}

\section{Discussion}\label{sec:conc}

We have proposed a core-periphery model for extracting informative structures from networks and proposed two efficient algorithms for core identification under the model. Our model does not assume a specific form for the core component, so it can be used for preprocessing for downstream network modeling in general. The proposed algorithms have theoretical guarantees of correctly identifying the core component under mild conditions. The strong consistency property is advantageous for our model since conditioning on the core extract success, any downstream network theoretical analyses will remain valid on the core part.

There are several possible extensions to pursue following the proposed framework. For example, what are the other generally uninteresting structures in network model cases, and would they be incorporated in the same framework? Another interesting question is how to generalize the current framework to more complicated data structures for network modeling settings such as the multiplex networks and dynamic networks. Such extensions may require delicate definitions of uninteresting structures in the new scenarios and potentially new model fitting tools.

\section*{Acknowledgements}

This work was supported in part by the NSF grant DMS-2015298 and the Quantitative Collaborative grant from the College of Arts \& Sciences at the University of Virginia. The authors want to thank Lihua Lei for his helpful suggestions.

\bibliography{CorePeriphery}

\newpage

\begin{appendix}

\section{ Proofs }

\subsection{ Proofs under the ER-type model}\label{appendix:proof_strong_consistency_1}

Let $\bm{U}$, $\bm{\Lambda}$, $\hat{\bm{U}}$, $\hat{\bm{\Lambda}}$ be defined as in \eqref{eq:svd_P} and \eqref{eq:svd_A}. We introduce the following additional notations to be used:
\begin{itemize}
\item $p^* = \max_{1 \le i,j \le n} P_{ij}$.
\item $\Delta = |\lambda_r| - |\lambda_{r+1}|$.
\item $\kappa = \min\{| \lambda_1 / \lambda_r |, 2r\}$.
\item $R = (\gamma+1)\log{n} + r$, where $\gamma > 0$.
\item $g=\sqrt{d_{\max}} + \frac{R}{\alpha\log{R}}$, where $\alpha \in (0,1)$.
\item $B(r) = 10 \min\{r, 1+\log_2(|\lambda_1/\lambda_r|)\}$.
\item $d_{\min} = \min_{1 \le i \le n} \sum_{j=1}^{n}P_{ij}$.
\item $d_{\max} = \max_{1 \le i \le n} \sum_{j=1}^{n}P_{ij}$.
\end{itemize}
As preparation, the following lemmas will be used in our proofs. 
\begin{lemma}[\citet{lei2019unified}]\label{lemma:Lei2019_svd_bound}
If $\Delta \succeq \kappa g $ and $| \lambda_{r} | \succeq  \frac{np^*}{\sqrt{n}\norm{ \bm{U} }_{2,\infty}} $, we have
\begin{multline}\label{eq:lemma_Lei2019_svd_bound}
\norm{ \bm{U} \bm{\Lambda} \bm{U}^t - \hat{\bm{U}} \hat{\bm{\Lambda}} \hat{\bm{U}}^t }_{2,\infty}
\preceq \sqrt{n} \left( \frac{\kappa g}{\Delta} \norm{ \bm{U} }_{2,\infty} + \frac{\sqrt{R p^*}}{|\lambda_r|} \right) \norm{ \bm{U} }_{2,\infty} |\lambda_1| \\
+ \sqrt{n} \norm{ \bm{U} }_{2,\infty}^2 \norm{ \bm{E} }_2 + \sqrt{n} {\left( \frac{\kappa g}{\Delta} \norm{ \bm{U} }_{2,\infty} + \frac{\sqrt{R p^*}}{|\lambda_r|} \right)}^2 \left( |\lambda_1| + \norm{ \bm{E} }_2 \right) ,
\end{multline}
with probability $1-(B(r)+1)n^{-\gamma}$.
\end{lemma}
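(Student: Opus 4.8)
The statement is the entrywise (two-to-infinity) perturbation bound of \citet{lei2019unified}, specialized to our rank-$r$ truncated-SVD estimator, so my plan is to reconstruct its proof by the leave-one-out argument. Throughout, write $\bm{P}_r=\bm{U}\bm{\Lambda}\bm{U}^t$ and $\hat{\bm{P}}=\hat{\bm{U}}\hat{\bm{\Lambda}}\hat{\bm{U}}^t$ for the rank-$r$ truncations of $\bm{P}$ and $\bm{A}=\bm{P}+\bm{E}$, and let $\bm{O}=\mathrm{sgn}(\hat{\bm{U}}^t\bm{U})\in\mathbb{O}_r$ be the Procrustes alignment between the two eigenbases. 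The target is $\norm{\hat{\bm{P}}-\bm{P}_r}_{2,\infty}=\max_m\norm{(\hat{\bm{P}}-\bm{P}_r)_{m,*}}_2$, and the whole difficulty is that the $m$-th eigenvector row $\hat{\bm{U}}_{m,*}$ depends on the $m$-th row of the noise $\bm{E}_{m,*}$, which blocks a direct concentration argument for each row.

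First I would establish the two global, spectral-norm ingredients. A combinatorial bound for centered adjacency matrices (the bounded-degree argument of Le--Levina--Vershynin and Lei--Rinaldo) gives $\norm{\bm{E}}_2\preceq g\simeq\sqrt{d_{\max}}$ on an event of probability $1-n^{-\gamma}$, and Weyl's inequality then controls the spectrum so that $|\hat\lambda_r|\simeq|\lambda_r|$ under the gap hypothesis $\Delta\succeq\kappa g$. A Davis--Kahan/$\sin\Theta$ bound yields $\norm{\hat{\bm{U}}\bm{O}-\bm{U}}_2\preceq\kappa g/\Delta$; the factor $\kappa=\min\{|\lambda_1/\lambda_r|,2r\}$ and a dyadic decomposition over eigenvalue scales are precisely what produce the $B(r)$ appearing in the failure probability $1-(B(r)+1)n^{-\gamma}$.

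The engine of the argument is the leave-one-out coupling. For each $m$, define $\bm{A}^{(m)}$ to be $\bm{A}$ with its $m$-th row and column reset to $\bm{P}_{m,*}$ (so $\bm{E}^{(m)}$ zeros out row and column $m$), and let $\hat{\bm{U}}^{(m)}$ be its top-$r$ eigenvectors with alignment $\bm{O}^{(m)}$. Since $\bm{A}^{(m)}$ is independent of $\bm{E}_{m,*}$, a Bernstein inequality gives $\norm{\bm{E}_{m,*}\hat{\bm{U}}^{(m)}}_2\preceq\sqrt{R p^*}$ uniformly after a union bound over $m\in[n]$, with $R=(\gamma+1)\log n+r$; this is the source of the $\sqrt{R p^*}/|\lambda_r|$ terms. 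Because $\bm{A}-\bm{A}^{(m)}$ has rank at most $2$ and norm controlled by $\norm{\bm{E}_{m,*}}$, a second Davis--Kahan step bounds the coupling error $\norm{\hat{\bm{U}}\bm{O}-\hat{\bm{U}}^{(m)}\bm{O}^{(m)}}_2$, letting me replace $\hat{\bm{U}}$ by the independent $\hat{\bm{U}}^{(m)}$ inside the row-wise estimate at the cost of a higher-order term. I would then expand $(\hat{\bm{P}}-\bm{P}_r)_{m,*}$ as a linear-in-$\bm{E}$ term plus quadratic remainders: the linear part splits into a contribution handled by $\norm{\hat{\bm{U}}\bm{O}-\bm{U}}_2$ and the incoherence $\norm{\bm{U}}_{2,\infty}\le\mu_0\sqrt{r/n}$, giving the $\sqrt{n}(\kappa g/\Delta)\norm{\bm{U}}_{2,\infty}^2|\lambda_1|$ and $\sqrt{n}\norm{\bm{U}}_{2,\infty}^2\norm{\bm{E}}_2$ terms, and a contribution $\bm{E}_{m,*}\hat{\bm{U}}^{(m)}$ handled by the concentration above; the quadratic remainders assemble into the squared-bracket factor times $(|\lambda_1|+\norm{\bm{E}}_2)$ in \eqref{eq:lemma_Lei2019_svd_bound}. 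Taking the maximum over $m$ and a union bound over the $n$ leave-one-out events and the dyadic levels yields the stated probability.

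The main obstacle, as flagged, is breaking the circular dependence between $\hat{\bm{U}}_{m,*}$ and $\bm{E}_{m,*}$ and then propagating the resulting coupling error through the nonlinear map $\bm{U}\mapsto\bm{U}\bm{\Lambda}\bm{U}^t$ while keeping the incoherence factor $\norm{\bm{U}}_{2,\infty}$ explicit in every term. The delicate bookkeeping is ensuring the quadratic remainders genuinely scale with the squared bracket and do not dominate the linear contributions — which is exactly where the second, leave-one-out Davis--Kahan bound on $\norm{\hat{\bm{U}}\bm{O}-\hat{\bm{U}}^{(m)}\bm{O}^{(m)}}_2$ is indispensable.
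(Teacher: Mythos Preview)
Your proposal correctly identifies the leave-one-out machinery that underlies the result and sketches the standard ingredients (Davis--Kahan with the dyadic eigenvalue decomposition producing $B(r)$, the leave-one-out auxiliary $\bm{A}^{(m)}$, row-wise Bernstein giving $\sqrt{Rp^*}$, and the coupling bound between $\hat{\bm{U}}$ and $\hat{\bm{U}}^{(m)}$). This is indeed how \citet{lei2019unified} proves such bounds.

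The paper, however, does not reprove any of this. Its entire proof of the lemma is a one-line citation: it states that the bound follows by combining Corollary~3.6 and the result in Section~7.4 of \citet{lei2019unified}. So the ``approach'' taken by the paper is simply to invoke the relevant pieces of that reference and specialize them to the present setting, whereas you are proposing to re-derive the underlying argument from scratch. Your route is not wrong --- it is essentially reconstructing what Lei's paper does --- but it is far more labor-intensive than what is needed here, and the bookkeeping you flag (tracking the quadratic remainders, the second Davis--Kahan step on the coupling error) is precisely the delicate part of Lei's original proof rather than something new to be verified for this lemma. For the purposes of this paper, citing the result and checking that the hypotheses $\Delta\succeq\kappa g$ and $|\lambda_r|\succeq np^*/(\sqrt{n}\norm{\bm{U}}_{2,\infty})$ match Lei's conditions is sufficient.
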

\begin{proof}
This lemma can be proved by combining the Corollary 3.6 and the result in Section 7.4 from \citet{lei2019unified}.
\end{proof}

\begin{lemma}[Theorem 5.2 of \citet{lei2015consistency}]\label{thm:lei_rinaldo}
For $c_0 > 0$ and $\gamma > 0$ there exists a constant $C = C(\gamma, c_0)$ such that
\begin{equation}\label{eq:lemma_lei_rinaldo}
\norm{ \bm{E} }_2 \le C\max\{ \sqrt{n p^*}, \sqrt{c_0 \log{n}}\}
\end{equation}
with probability at least $1-n^{-\gamma}$.
\end{lemma}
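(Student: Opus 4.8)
The plan is to prove the bound directly from first principles, treating $\bm{E} = \bm{A}-\bm{P}$ as a symmetric matrix whose upper-triangular entries $\{\bm{E}_{ij}\}_{i<j}$ are independent, mean-zero, bounded by $1$ in absolute value, with variances satisfying $\e \bm{E}_{ij}^2 = \bm{P}_{ij}(1-\bm{P}_{ij}) \le p^*$ (the diagonal vanishes since there are no self-loops). Since $\bm{E}$ is symmetric, $\norm{\bm{E}}_2 = \sup_{x \in S^{n-1}} |x^t \bm{E} x|$, and I would discretize this supremum by a standard $\varepsilon$-net argument: taking a net $T$ of the sphere consisting of vectors whose entries are integer multiples of $1/\sqrt{n}$ (the Feige--Ofek grid) yields $|T| \le e^{c_1 n}$ together with the bound $\norm{\bm{E}}_2 \le (1-2\varepsilon)^{-1}\max_{x,y \in T} x^t \bm{E} y$. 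The task then reduces to a uniform tail bound on the bilinear forms $x^t \bm{E} y = \sum_{i,j} x_i y_j \bm{E}_{ij}$ over $T\times T$, where the exponential tail must overcome the net cardinality $e^{2c_1 n}$.

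The key difficulty is that a naive Bernstein bound applied to $x^t\bm{E}y$ and then union-bounded over the net fails in the sparse regime: when $x,y$ concentrate on few coordinates a single term $x_iy_j\bm{E}_{ij}$ can be of order one, far larger than the total variance budget $\simeq p^*$, so the tail cannot beat $e^{2c_1 n}$ unless $t \simeq n$, which is useless. I would resolve this by splitting each bilinear form, following Feige--Ofek, at the threshold $\sqrt{p^*/n}$ into a \emph{light} part (pairs with $|x_iy_j| \le \sqrt{p^*/n}$) and a \emph{heavy} part. For the light part each summand is bounded by $2\sqrt{p^*/n}$ while the total variance is at most $2p^*$, so Bernstein's inequality gives $\p(\text{light} > t) \le \exp\!\big(-c\min\{t^2/p^*,\, t\sqrt{n/p^*}\}\big)$; choosing $t = C\max\{\sqrt{np^*},\sqrt{\log n}\}$ forces the exponent to be at least of order $Cn$, so that after a union bound over $|T|^2 \le e^{2c_1 n}$ and for $C=C(\gamma,c_0)$ large, the light contribution is $O(\max\{\sqrt{np^*},\sqrt{\log n}\})$ uniformly on $T$ with probability at least $1-\tfrac12 n^{-\gamma}$.

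The heavy part is the main obstacle and requires combinatorial rather than analytic control. I would bound $\sum_{\text{heavy}}|x_iy_j||\bm{E}_{ij}| \le \sum_{\text{heavy}} |x_iy_j| \bm{A}_{ij} + \sum_{\text{heavy}}|x_iy_j| \bm{P}_{ij}$, where the second sum is $O(\sqrt{np^*})$ deterministically: since $|x_iy_j| > \sqrt{p^*/n}$ on heavy pairs, one has $\sum_{\text{heavy}}|x_iy_j| \le \sqrt{n/p^*}\sum_{i,j}(x_iy_j)^2 = \sqrt{n/p^*}$, and multiplying by $p^*$ gives $\sqrt{np^*}$. For the first sum one needs the \emph{bounded-discrepancy} property of $\bm{A}$: with probability at least $1-\tfrac12 n^{-\gamma}$, simultaneously over all index sets $I,J \subseteq [n]$, the edge count $e(I,J) = \sum_{i \in I,\, j \in J} \bm{A}_{ij}$ is controlled, namely $e(I,J) \le C'\big(p^*|I||J| + \max\{|I|,|J|\}\log n\big)$. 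This is obtained from a Chernoff/Bernstein bound on each $e(I,J)$ together with a union bound over the $\le 4^n$ subset pairs, and it is precisely here that the $\sqrt{\log n}$ term in the statement and the dependence of $C$ on $c_0$ enter, since $c_0$ scales the subset-union-bound exponent. A dyadic blocking argument over the magnitudes of $x_i,y_j$ on the discretized grid then converts this edge-count estimate into $\sum_{\text{heavy}}|x_iy_j|\bm{A}_{ij} = O(\sqrt{np^*} + \sqrt{\log n})$ uniformly over $T \times T$.

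Combining the light and heavy estimates on the common high-probability event and multiplying by the discretization factor $(1-2\varepsilon)^{-1}$ yields $\norm{\bm{E}}_2 \le C\max\{\sqrt{np^*},\sqrt{\log n}\}$ with probability at least $1-n^{-\gamma}$, for a constant $C=C(\gamma,c_0)$. I expect the heavy-pair discrepancy step to be by far the most delicate: controlling edge counts uniformly over exponentially many subset pairs while retaining the correct $\sqrt{np^*}$ scaling all the way down to the sparsity boundary $np^* \simeq \log n$ is exactly what separates this sharp bound from the softer matrix-Bernstein bound, which would lose an extra $\sqrt{\log n}$ factor in the dense regime.
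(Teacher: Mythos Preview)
The paper does not supply its own proof of this lemma: it is stated verbatim as Theorem~5.2 of \citet{lei2015consistency} and used as a black box, with no argument given in the appendix. So there is no ``paper's proof'' to compare against beyond the citation.

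That said, your sketch is faithful to how the cited result is actually established. Lei and Rinaldo's Theorem~5.2 is proved precisely by the Feige--Ofek combinatorial route you outline: discretize the sphere to a grid net, split each bilinear form $x^t\bm{E}y$ into light pairs ($|x_iy_j|\le\sqrt{p^*/n}$) handled by Bernstein plus a union bound over the net, and heavy pairs controlled through a uniform edge-count (bounded-discrepancy) property of $\bm{A}$ over all subset pairs, combined with a dyadic decomposition of the coordinate magnitudes. Your identification of the heavy-pair discrepancy step as the delicate part, and of matrix-Bernstein as the cruder alternative that loses a $\sqrt{\log n}$ factor, is accurate. One small point: in the original Lei--Rinaldo formulation the parameter $c_0$ enters as a lower bound $np^*\ge c_0\log n$ rather than inside the $\max$, so the role you assign to $c_0$ (scaling the subset-union-bound exponent) is a reasonable reinterpretation but not literally how it appears there; the version stated in this paper is a mild repackaging of the original.
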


Combining \Cref{lemma:Lei2019_svd_bound} and \Cref{thm:lei_rinaldo} would leads to a concentration bound of low-rank approximation with respect to the $\norm{\cdot}_{2,\infty}$.
\begin{lemma}\label{lemma:combined}
Suppose $n p^* \succeq \log{n}$, $\Delta \succeq \kappa g $, and $| \lambda_{r} | \succeq  \frac{np^*}{\sqrt{n}\norm{ \bm{U} }_{2,\infty}} $. Then, with probability at least $1-(B(r)+2) n^{-\gamma}$, we have
\begin{multline}\label{eq:lemma_combined_eq}
\norm{ \bm{U} \bm{\Lambda} \bm{U}^t - \hat{\bm{U}} \hat{\bm{\Lambda}} \hat{\bm{U}}^t }_{2,\infty} \preceq \sqrt{n} \bigg( \frac{\kappa g}{\Delta} \norm{ \bm{U} }_{2,\infty}^2 |\lambda_1| + \frac{\sqrt{R p^*}}{|\lambda_r|} \norm{ \bm{U} }_{2,\infty} |\lambda_1| \\
+ \norm{ \bm{U} }_{2,\infty}^2 \sqrt{n p^*} + {\left(\frac{\kappa g}{\Delta} \right)}^2 \norm{ \bm{U} }_{2,\infty}^2 (|\lambda_1| + \sqrt{np^*}) + \frac{R p^*}{\lambda_r^2} (|\lambda_1| + \sqrt{np^*}) \bigg).
\end{multline}
Furthermore, if \Cref{assumption:incoherence} holds, \eqref{eq:lemma_combined_eq} becomes
\begin{multline}\label{eq:lemma_combined_incoherent_eq}
\norm{ \bm{U} \bm{\Lambda} \bm{U}^t - \hat{\bm{U}} \hat{\bm{\Lambda}} \hat{\bm{U}}^t }_{2,\infty} \preceq \frac{\mu_0^2 r \kappa g}{\sqrt{n} \Delta} |\lambda_1| + \mu_0 |\lambda_1| \frac{\sqrt{r R p^*}}{|\lambda_r|} + \mu_0^2 r \sqrt{p^*} \\
+ {\left(\frac{\kappa g}{\Delta} \right)}^2 \mu_0^2 \frac{r}{\sqrt{n}} (|\lambda_1| + \sqrt{np^*}) + \frac{R p^*}{\lambda_r^2} (\sqrt{n}|\lambda_1| + n\sqrt{p^*}) .
\end{multline}
\end{lemma}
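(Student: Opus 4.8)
The plan is to chain the two cited concentration results and then do bookkeeping. First I would invoke \Cref{lemma:Lei2019_svd_bound}: the hypotheses $\Delta \succeq \kappa g$ and $|\lambda_r| \succeq np^*/(\sqrt{n}\norm{\bm{U}}_{2,\infty})$ are exactly those required there, so the bound \eqref{eq:lemma_Lei2019_svd_bound} on $\norm{\bm{U}\bm{\Lambda}\bm{U}^t - \hat{\bm{U}}\hat{\bm{\Lambda}}\hat{\bm{U}}^t}_{2,\infty}$ holds on an event $\mathcal{E}_1$ of probability at least $1-(B(r)+1)n^{-\gamma}$. Independently, \Cref{thm:lei_rinaldo}, applied with $c_0$ chosen so that the failure probability is $n^{-\gamma}$ and with the same $\gamma$ (a free parameter in both results), gives $\norm{\bm{E}}_2 \le C\max\{\sqrt{np^*},\sqrt{c_0\log n}\}$ on an event $\mathcal{E}_2$ of probability at least $1-n^{-\gamma}$; on $\mathcal{E}_2$ the hypothesis $np^* \succeq \log n$ collapses the maximum to $\sqrt{np^*}$ up to a constant, so $\norm{\bm{E}}_2 \preceq \sqrt{np^*}$. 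A union bound places us on $\mathcal{E}_1 \cap \mathcal{E}_2$, of probability at least $1-(B(r)+2)n^{-\gamma}$, as claimed.

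On $\mathcal{E}_1 \cap \mathcal{E}_2$ I would substitute $\norm{\bm{E}}_2 \preceq \sqrt{np^*}$ into every occurrence of $\norm{\bm{E}}_2$ in \eqref{eq:lemma_Lei2019_svd_bound}. The first summand expands (distributing $\norm{\bm{U}}_{2,\infty}|\lambda_1|$) into the first two terms of \eqref{eq:lemma_combined_eq}; the middle summand $\sqrt{n}\norm{\bm{U}}_{2,\infty}^2\norm{\bm{E}}_2$ becomes the third term; and in the last summand I would bound the squared bracket by $\left(\tfrac{\kappa g}{\Delta}\norm{\bm{U}}_{2,\infty} + \tfrac{\sqrt{Rp^*}}{|\lambda_r|}\right)^2 \le 2\left(\tfrac{\kappa^2 g^2}{\Delta^2}\norm{\bm{U}}_{2,\infty}^2 + \tfrac{Rp^*}{\lambda_r^2}\right)$ and $|\lambda_1|+\norm{\bm{E}}_2 \preceq |\lambda_1|+\sqrt{np^*}$, which produces the remaining two terms. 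Absorbing all absolute constants into $\preceq$ yields \eqref{eq:lemma_combined_eq}.

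For the incoherent version I would plug \Cref{assumption:incoherence}, namely $\norm{\bm{U}}_{2,\infty} \le \mu_0\sqrt{r/n}$ (so $\norm{\bm{U}}_{2,\infty}^2 \le \mu_0^2 r/n$), into \eqref{eq:lemma_combined_eq} term by term: in each term the leading $\sqrt{n}$ combines with the power of $\norm{\bm{U}}_{2,\infty}$ present to give the stated powers of $\mu_0$, $r$, and $\sqrt{n}$ (e.g.\ $\sqrt{n}\cdot\tfrac{\kappa g}{\Delta}\cdot\tfrac{\mu_0^2 r}{n}|\lambda_1| = \tfrac{\mu_0^2 r\kappa g}{\sqrt{n}\Delta}|\lambda_1|$, $\sqrt{n}\cdot\tfrac{\mu_0^2 r}{n}\sqrt{np^*} = \mu_0^2 r\sqrt{p^*}$, $\sqrt{n}\cdot\tfrac{Rp^*}{\lambda_r^2}(|\lambda_1|+\sqrt{np^*}) = \tfrac{Rp^*}{\lambda_r^2}(\sqrt{n}|\lambda_1|+n\sqrt{p^*})$), giving exactly the five terms of \eqref{eq:lemma_combined_incoherent_eq}.

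There is no deep obstacle: the only points requiring care are confirming that the side conditions needed by \Cref{lemma:Lei2019_svd_bound} are precisely the stated hypotheses, using $np^*\succeq\log n$ to simplify the operator-norm bound of \Cref{thm:lei_rinaldo}, and expanding the squared bracket via $(a+b)^2 \preceq a^2+b^2$ before re-collecting terms under $\preceq$; this last step is the bulk of the computation but is entirely routine.
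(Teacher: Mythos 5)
Your proposal is correct and follows exactly the paper's route: the paper's own proof is a one-line instruction to plug the Lei--Rinaldo bound \eqref{eq:lemma_lei_rinaldo} into \eqref{eq:lemma_Lei2019_svd_bound}, take a union bound, and then substitute $\norm{\bm{U}}_{2,\infty}\le\mu_0\sqrt{r/n}$, which is precisely what you do. Your expanded bookkeeping (using $np^*\succeq\log n$ to reduce the spectral-norm bound to $\sqrt{np^*}$ and $(a+b)^2\preceq a^2+b^2$ to split the squared bracket) matches all five terms and the stated probability, so there is nothing to add.
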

\begin{proof}
Plugging \eqref{eq:lemma_lei_rinaldo} into \eqref{eq:lemma_Lei2019_svd_bound}, and applying union bound, we get \eqref{eq:lemma_combined_eq}. Plugging $\norm{\bm{U}}_{2,\infty} \le \mu_0\sqrt{\frac{r}{n}}$ into \eqref{eq:lemma_combined_eq}, we get \eqref{eq:lemma_combined_incoherent_eq}.

\end{proof}

We now introduce the following theorem that includes \Cref{cor:strong_consistency_1} as a special case. 
\begin{theorem}\label{theorem:strong_consistency_1}
Assume the network $\bm{A}$ is generated from the ER-type model in \Cref{def:core_periphery1} under \Cref{assumption:incoherence}. Suppose $\Delta \succeq \kappa g $, $| \lambda_{r} | \succeq \frac{np^*}{\sqrt{n}\norm{ \bm{U} }_{2,\infty}} $, and $n p^* \succeq \log{n}$. Furthermore, if we have
\begin{multline}\label{eq:strong_consistency_1}
h(n) \ge C \bigg[
\frac{\mu_0^2 r}{\sqrt{n}} |\lambda_1| \left( \frac{ \kappa g}{\Delta} + \frac{ R}{|\lambda_r|} \right) + \mu_0 |\lambda_1| \frac{\sqrt{r p^* R }}{|\lambda_r|} + \mu_0^2 r \sqrt{p^*} \\
+ {\left(\frac{\kappa g}{\Delta} + \frac{R}{|\lambda_r|}\right)}^2 \frac{\mu_0^2 r}{\sqrt{n}} (|\lambda_1| + \sqrt{np^*}) + \frac{ p^* R \sqrt{n} }{\lambda_r^2} (|\lambda_1| + \sqrt{n p^*}) \bigg] + 2|\lambda_{r+1}| + p^*,
\end{multline}
then, for sufficiently large $n$, \Cref{alg:cp_detection1} exactly identifies the core and periphery set with probability $1-(B(r)+2) n^{-\gamma}$.
\end{theorem}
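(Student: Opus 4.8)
The plan is to reduce exact recovery to a deterministic separation between the scores of core and periphery nodes. Since \Cref{alg:cp_detection1} declares the $N_{\cal C}$ largest scores to be core, it succeeds precisely when $\min_{i\in\ccal}S_i>\max_{i\in\pcal}S_i$, so it suffices to produce an upper bound on the right-hand side, a lower bound on the left-hand side, and verify that \eqref{eq:strong_consistency_1} makes the gap strictly positive.

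First I would analyze the ``oracle'' scores $\norm{\bm{P}_{i,*}\bm{H}}_2$. For $i\in\pcal$, \Cref{def:core_periphery1} forces $\bm{P}_{i,*}$ to be a constant vector off its $i$-th coordinate, so since $\bm{1}_n^t\bm{H}=\bm{0}$ the product $\bm{P}_{i,*}\bm{H}$ is proportional to $\bm{e}_i^t\bm{H}$ and hence $\norm{\bm{P}_{i,*}\bm{H}}_2\le p^*$ (the diagonal entry of $\bm{P}_{i,*}$ contributes only through a factor $\sqrt{1-1/n}$). For $i\in\ccal$, the definition of $h(n)$ gives $\norm{\bm{P}_{i,*}\bm{H}}_2\ge h(n)$. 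Next I would control the perturbation $\abs{S_i-\norm{\bm{P}_{i,*}\bm{H}}_2}$ uniformly in $i$: by the reverse triangle inequality for $\norm{\cdot}_2$ and $\norm{\bm{H}}_2=1$ it is at most $\norm{(\hat{\bm{P}}-\bm{P})_{i,*}}_2\le\norm{\hat{\bm{P}}-\bm{P}}_{2,\infty}$. Writing $\hat{\bm{P}}-\bm{P}=(\hat{\bm{U}}\hat{\bm{\Lambda}}\hat{\bm{U}}^t-\bm{U}\bm{\Lambda}\bm{U}^t)-\bm{U}_{\bot}\bm{\Lambda}_{\bot}\bm{U}_{\bot}^t$, the second term contributes at most $\norm{\bm{U}_{\bot}\bm{\Lambda}_{\bot}\bm{U}_{\bot}^t}_2=\abs{\lambda_{r+1}}$ in $\norm{\cdot}_{2,\infty}$, and the first term is bounded by the right-hand side of \eqref{eq:lemma_combined_incoherent_eq} in \Cref{lemma:combined}, whose hypotheses ($\Delta\succeq\kappa g$, $\abs{\lambda_r}\succeq np^*/(\sqrt n\,\norm{\bm{U}}_{2,\infty})$, $np^*\succeq\log n$, and \Cref{assumption:incoherence}) are exactly those assumed here.

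On the high-probability event of \Cref{lemma:combined} one therefore has $\abs{S_i-\norm{\bm{P}_{i,*}\bm{H}}_2}\le\delta$ for every $i$, where $\delta$ equals $\abs{\lambda_{r+1}}$ plus the bound in \eqref{eq:lemma_combined_incoherent_eq}. Hence $\max_{i\in\pcal}S_i\le p^*+\delta$ and $\min_{i\in\ccal}S_i\ge h(n)-\delta$, so the algorithm recovers the partition whenever $h(n)>p^*+2\delta$; this is precisely \eqref{eq:strong_consistency_1} once the constant $C$ is chosen to dominate twice the implicit constant of \Cref{lemma:combined}, noting that the terms of \eqref{eq:lemma_combined_incoherent_eq} match those of \eqref{eq:strong_consistency_1} and that the extra $R/\abs{\lambda_r}$ pieces in \eqref{eq:strong_consistency_1} are slack absorbing $\sqrt{Rp^*}/\abs{\lambda_r}$ (using $p^*\le 1\le R$). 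The stated failure probability $(B(r)+2)n^{-\gamma}$ is inherited verbatim from \Cref{lemma:combined}. The only genuinely technical input is the row-wise ($\norm{\cdot}_{2,\infty}$) perturbation bound for the rank-$r$ truncated SVD, which is imported from \citet{lei2019unified} together with the spectral-norm concentration of \citet{lei2015consistency}; given it, the argument above is short. The two points needing care are (i) handling the diagonal entry of $\bm{P}_{i,*}$ (equivalently the no-self-loop convention), so that the periphery bound is honestly $\le p^*$, and (ii) the bookkeeping that checks \eqref{eq:strong_consistency_1} really does imply $h(n)>p^*+2\delta$ with the stated $\delta$.
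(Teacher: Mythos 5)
Your proposal is correct and follows essentially the same route as the paper's proof: reduce exact recovery to the separation condition $h(n) > p^* + 2\|\bm{P}\bm{H}-\hat{\bm{P}}\bm{H}\|_{2,\infty}$ via the triangle inequality, split the error as $\|\bm{U}\bm{\Lambda}\bm{U}^t-\hat{\bm{U}}\hat{\bm{\Lambda}}\hat{\bm{U}}^t\|_{2,\infty}+|\lambda_{r+1}|$, and invoke \Cref{lemma:combined} (the \citet{lei2019unified} two-to-infinity bound combined with \Cref{thm:lei_rinaldo}) with the failure probability inherited verbatim. The only cosmetic difference is that you verify the periphery bound $\norm{\bm{P}_{i,*}\bm{H}}_2\le p^*$ explicitly via $\bm{1}_n^t\bm{H}=\bm{0}$, which the paper takes for granted.
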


\begin{proof}[Proof of \Cref{theorem:strong_consistency_1} and \Cref{cor:strong_consistency_1}]
To achieve an exact separation between the core and periphery, we need 
\begin{equation}\label{eq:strong_consistency_condition1}
\min_{i \in {\cal C}} \norm{ \hat{\bm{P}}_{i,*} \bm{H} }_{2} > \max_{i \in {\cal P}} \norm{ \hat{\bm{P}}_{i,*} \bm{H} }_{2}.
\end{equation}
By triangular inequality, we have the following:
$$\text{For $i \in {\cal C}$: } \norm{ \hat{\bm{P}}_{i,*} \bm{H} }_{2} \ge \norm{ \bm{P}_{i,*} \bm{H} }_{2} - \norm{ \bm{P}_{i,*} \bm{H} - \hat{\bm{P}}_{i,*} \bm{H} }_{2} \ge h(n) - \norm{ \bm{P} \bm{H} - \hat{\bm{P}} \bm{H} }_{2,\infty}. $$
$$\text{For $i \in {\cal P}$: } \norm{ \hat{\bm{P}}_{i,*} \bm{H} }_{2} \le \norm{ \bm{P}_{i,*} \bm{H} }_{2} + \norm{ \bm{P}_{i,*} \bm{H} - \hat{\bm{P}}_{i,*} \bm{H} }_{2} \le p^* + \norm{ \bm{P} \bm{H} - \hat{\bm{P}} \bm{H} }_{2,\infty}. $$
Therefore, to satisfy $\eqref{eq:strong_consistency_condition1}$, it is thus sufficient to ensure that
\begin{equation}\label{eq:strong_consistency_condition2}
\norm{ \bm{P} \bm{H} - \hat{\bm{P}} \bm{H} }_{2,\infty} \le \frac{1}{2}( h(n) - p^* ).
\end{equation}
By the basic properties of $\norm{\cdot}_{2,\infty}$, we have
\begin{multline}\label{eq:strong_consistency_condition3}
\norm{ \bm{P} \bm{H} - \hat{\bm{P}} \bm{H} }_{2,\infty} \le \norm{ \bm{P} - \hat{\bm{P}} }_{2,\infty} \norm{ \bm{H} }_{2} = \norm{ \bm{P} - \hat{\bm{P}} }_{2,\infty} \\
= \norm{ \bm{U} \bm{\Lambda} \bm{U}^t + \bm{U}_{\bot} \bm{\Lambda}_{\bot} \bm{U}_{\bot}^t - \hat{\bm{U}} \hat{\bm{\Lambda}} \hat{\bm{U}}^t }_{2,\infty}
\le \norm{ \bm{U} \bm{\Lambda} \bm{U}^t - \hat{\bm{U}} \hat{\bm{\Lambda}} \hat{\bm{U}}^t }_{2,\infty} + \norm{ \bm{U}_{\bot} \bm{\Lambda}_{\bot} \bm{U}_{\bot}^t }_{2,\infty} \\
\le \norm{ \bm{U} \bm{\Lambda} \bm{U}^t - \hat{\bm{U}} \hat{\bm{\Lambda}} \hat{\bm{U}}^t }_{2,\infty} + \norm{ \bm{U}_{\bot} \bm{\Lambda}_{\bot} \bm{U}_{\bot}^t }_{2} = \norm{ \bm{U} \bm{\Lambda} \bm{U}^t - \hat{\bm{U}} \hat{\bm{\Lambda}} \hat{\bm{U}}^t }_{2,\infty} + | \lambda_{r+1} | .
\end{multline}
By \eqref{eq:strong_consistency_condition3} and \eqref{eq:lemma_combined_eq} of \Cref{lemma:combined}, \eqref{eq:strong_consistency_condition2} holds with probability at least $1-(B(r)+2) n^{-\gamma}$ as long as

\begin{multline*}
h(n) \ge C \bigg[ \frac{\mu_0^2 r \kappa g}{\sqrt{n} \Delta} |\lambda_1| + \mu_0 |\lambda_1| \frac{\sqrt{r R p^*}}{|\lambda_r|} + \mu_0^2 r \sqrt{p^*} \\
+ {\left(\frac{\kappa g}{\Delta} \right)}^2 \mu_0^2 \frac{r}{\sqrt{n}} (|\lambda_1| + \sqrt{np^*}) + \frac{R p^*}{\lambda_r^2} (\sqrt{n}|\lambda_1| + n\sqrt{p^*}) \bigg] + 2|\lambda_{r+1}| + p^*.
\end{multline*}
as assumed by \Cref{theorem:strong_consistency_1}.

Now we proceed to prove \Cref{cor:strong_consistency_1}, under the additional conditions of $p^* \succeq \max\left\{ \frac{\mu_0^2 r \log{n} }{n}, \frac{\mu_0^2 r^2}{n} \right\} $, boundedness of $\left| \lambda_1 / \lambda_r \right|$ and \Cref{assumption:low_rank}. Combined with the fact that
$$g \preceq \sqrt{n p^*} + \log{n} + r,$$ 
these conditions lead to
$$|\lambda_r| \succ |\lambda_{r+1}|$$
and
 $$\Delta \simeq |\lambda_r| \succeq \frac{n p^*}{\mu_0 \sqrt{r}} \succeq g.$$ 
 Therefore the conditions of \Cref{eq:lemma_combined_eq} hold.  Inserting the result of  \Cref{lemma:combined_succinct} into the third step of \eqref{eq:strong_consistency_condition3} leads to
 $$\norm{ \bm{P} \bm{H} - \hat{\bm{P}} \bm{H} }_{2,\infty}  \preceq \mu_0 \sqrt{ r (\log{n} + r) p^* } + \mu_0^2 r \sqrt{ p^* }$$
 
Therefore, \eqref{eq:strong_consistency_condition2} is satisfied if
\begin{equation*}
h(n) \succeq \mu_0 \sqrt{ r (\log{n} + r) p^* } + \mu_0^2 r \sqrt{ p^* } .
\end{equation*}
as assumed in \Cref{cor:strong_consistency_1}.
\end{proof}

\begin{lemma}\label{lemma:combined_succinct}
Under the same conditions in \Cref{lemma:combined}, suppose \Cref{assumption:low_rank} and \Cref{assumption:incoherence} hold. If $p^* \succeq \max\left\{ \frac{\mu_0^2 r \log{n} }{n}, \frac{\mu_0^2 r^2}{n} \right\} $, and $\left| \lambda_1 / \lambda_r \right|$ is bounded, we have
\begin{equation}\label{eq:bound0}
\norm{ \hat{\bm{P}} - \bm{P} }_{2,\infty} \preceq \mu_0 \sqrt{ r (\log{n} + r) p^* } + \mu_0^2 r \sqrt{ p^* },
\end{equation}
with probability at least $1-(B(r)+2) n^{-\gamma}$.
\end{lemma}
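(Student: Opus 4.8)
The plan is to peel $\hat{\bm{P}}-\bm{P}$ apart using the eigen-structure of $\bm{P}$ and reduce everything to \Cref{lemma:combined}. Writing $\bm{P}=\bm{U}\bm{\Lambda}\bm{U}^t+\bm{U}_{\bot}\bm{\Lambda}_{\bot}\bm{U}_{\bot}^t$ and $\hat{\bm{P}}=\hat{\bm{U}}\hat{\bm{\Lambda}}\hat{\bm{U}}^t$, the triangle inequality for $\norm{\cdot}_{2,\infty}$ gives
$$\norm{\hat{\bm{P}}-\bm{P}}_{2,\infty}\le\norm{\bm{U}\bm{\Lambda}\bm{U}^t-\hat{\bm{U}}\hat{\bm{\Lambda}}\hat{\bm{U}}^t}_{2,\infty}+\norm{\bm{U}_{\bot}\bm{\Lambda}_{\bot}\bm{U}_{\bot}^t}_{2,\infty}.$$
The second term is at most $\norm{\bm{U}_{\bot}\bm{\Lambda}_{\bot}\bm{U}_{\bot}^t}_2=\abs{\lambda_{r+1}}$, and by \Cref{assumption:low_rank} this is $\preceq\sqrt{p^*\log n}\preceq\mu_0\sqrt{r(\log n+r)p^*}$ (using $\mu_0\ge1$, $r\ge1$), so it is already inside the target bound. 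For the first term I invoke the incoherent estimate \eqref{eq:lemma_combined_incoherent_eq} of \Cref{lemma:combined}; since the hypotheses of \Cref{lemma:combined} are assumed, this simultaneously delivers the probability $1-(B(r)+2)n^{-\gamma}$ claimed in the lemma, so no fresh probabilistic work is needed.

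The remaining task is to show that the five summands on the right of \eqref{eq:lemma_combined_incoherent_eq} collapse to $\mu_0\sqrt{r(\log n+r)p^*}+\mu_0^2 r\sqrt{p^*}$ under the extra hypotheses. First I would record their consequences: boundedness of $\abs{\lambda_1/\lambda_r}$ gives $\kappa\simeq1$ and $\abs{\lambda_1}\simeq\abs{\lambda_r}$; \Cref{assumption:low_rank} gives $\abs{\lambda_r}\succeq np^*/\sqrt{r}$ and $\abs{\lambda_{r+1}}\preceq\sqrt{p^*\log n}$, and combined with $p^*\succeq\mu_0^2 r(\log n+r)/n$ this forces $\abs{\lambda_{r+1}}=o(\abs{\lambda_r})$, hence $\Delta\simeq\abs{\lambda_r}$; also $R\simeq\log n+r$, $g\preceq\sqrt{np^*}+\log n+r$, and $\kappa g/\Delta\preceq1$ is part of the hypotheses. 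Then: the first summand obeys $\mu_0^2 r\kappa g\abs{\lambda_1}/(\sqrt{n}\Delta)\preceq\mu_0^2 r g/\sqrt{n}\preceq\mu_0^2 r\sqrt{p^*}+\mu_0^2 r(\log n+r)/\sqrt{n}$, and writing $\mu_0^2 r(\log n+r)/\sqrt{n}=\mu_0\sqrt{r(\log n+r)}\cdot\bigl(\mu_0\sqrt{r(\log n+r)}/\sqrt{n}\bigr)$ and bounding the parenthesized factor by $\sqrt{p^*}$ turns it into $\preceq\mu_0\sqrt{r(\log n+r)p^*}$; the second summand is $\mu_0\abs{\lambda_1}\sqrt{rRp^*}/\abs{\lambda_r}\simeq\mu_0\sqrt{r(\log n+r)p^*}$; the third is already $\mu_0^2 r\sqrt{p^*}$; the fourth, after using $\kappa g/\Delta\le1$ to discard one of the two factors, reduces to the same two pieces as the first; and the last, via $\abs{\lambda_r}\succeq np^*/\sqrt{r}$ and $\abs{\lambda_1}\simeq\abs{\lambda_r}$, is $\preceq\sqrt{r}(\log n+r)/\sqrt{n}+r(\log n+r)/(n\sqrt{p^*})$, and one more appeal to $p^*\succeq\mu_0^2 r(\log n+r)/n$ (plus $\mu_0\ge1$, $r\ge1$) bounds both pieces by $\mu_0\sqrt{r(\log n+r)p^*}$. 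Collecting the survivors and combining with the $\abs{\lambda_{r+1}}$ contribution from the first paragraph yields the stated bound.

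The main obstacle is entirely the bookkeeping: every parasitic factor of $r$, $\log n$, and $n^{-1/2}$ appearing in \eqref{eq:lemma_combined_incoherent_eq} must be traded against the density lower bound in the correct form — repeatedly through $\sqrt{p^*}\succeq\mu_0\sqrt{r(\log n+r)}/\sqrt{n}$ — while keeping track of which of the two density conditions ($p^*\succeq\mu_0^2 r\log n/n$ versus $p^*\succeq\mu_0^2 r^2/n$) is being consumed and of the elementary facts $\mu_0\ge1$ (forced by the incoherence floor $\norm{\bm{U}}_{2,\infty}\ge\sqrt{r/n}$), $r\ge1$, and $\log n\ge1$. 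There is no new analytic or probabilistic ingredient beyond \Cref{lemma:combined}; the content lies solely in verifying that the crude expression \eqref{eq:lemma_combined_incoherent_eq} simplifies to the two-term bound.
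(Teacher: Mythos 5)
Your proposal is correct and follows essentially the same route as the paper's proof: split off the residual spectrum via the triangle inequality so that $\norm{\bm{U}_{\bot}\bm{\Lambda}_{\bot}\bm{U}_{\bot}^t}_{2,\infty}\le\abs{\lambda_{r+1}}\preceq\sqrt{p^*\log n}$ is absorbed into the target, then feed the boundedness of $\abs{\lambda_1/\lambda_r}$, the eigengap, and the density lower bound $p^*\succeq\max\{\mu_0^2 r\log n/n,\,\mu_0^2 r^2/n\}$ into \eqref{eq:lemma_combined_incoherent_eq} term by term. Your bookkeeping for each of the five summands (in particular trading $\mu_0^2 r(\log n+r)/\sqrt n$ and the $R p^*/\lambda_r^2$ terms against $\sqrt{p^*}\succeq\mu_0\sqrt{r(\log n+r)}/\sqrt n$, and using $\mu_0\ge 1$ from the incoherence floor) matches the paper's simplification chain, so no gap.
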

\begin{proof}[Proof of \Cref{lemma:combined_succinct}]

Since $p^* \succeq \frac{\mu_0^2 r \log{n} }{n}\succ \frac{r\log{n}}{n^2}$,  \Cref{assumption:low_rank}  indicates that $|\lambda_r| \succ |\lambda_{r+1}|$. Together with the boundedness assumption of $\left| \lambda_1 / \lambda_r \right|$, we know that $\left| \lambda_1 / \Delta \right|$ is also bounded. In addition, $\kappa$ also becomes bounded.  \eqref{eq:lemma_combined_incoherent_eq} becomes
\begin{multline}\label{eq:bound3}
\norm{ \bm{U} \bm{\Lambda} \bm{U}^t - \hat{\bm{U}} \hat{\bm{\Lambda}} \hat{\bm{U}}^t }_{2,\infty} \preceq \frac{\mu_0^2 r g}{ \sqrt{n} } + \mu_0\sqrt{r R p^*} + \mu_0^2 r \sqrt{p^*} \\
+ {\left(\frac{ g }{\Delta} \right)}^2 \frac{ \mu_0^2 r }{\sqrt{n}} (|\lambda_1| + \sqrt{np^*}) + \frac{R p^*}{\lambda_r^2} (\sqrt{n}|\lambda_1| + n\sqrt{p^*}) .
\end{multline}
Note that $\norm{\bm{U}}_{2,\infty} \ge \sqrt{\frac{r}{n}}$, so we have $\mu_0 \ge 1$. Therefore, $ | \lambda_1 | \ge | \lambda_r | \ge \frac{n p^*}{\mu_0 \sqrt{r}} $. When $p^* \succeq \frac{\mu_0^2 r}{n} $, we have $|\lambda_1| \succeq \sqrt{n p^*}$, and \eqref{eq:bound3} becomes
\begin{equation}\label{eq:bound4}
\norm{ \bm{U} \bm{\Lambda} \bm{U}^t - \hat{\bm{U}} \hat{\bm{\Lambda}} \hat{\bm{U}}^t }_{2,\infty} \preceq \frac{\mu_0^2 r g}{ \sqrt{n} } + \mu_0\sqrt{r R p^*} + \mu_0^2 r \sqrt{p^*}
+ \frac{ g^2 }{\Delta} \frac{ \mu_0^2 r }{\sqrt{n}} + \frac{R p^*}{|\lambda_r|} \sqrt{n} .
\end{equation}
Furthermore, due to the assumption $r \le \sqrt{np^*}$, we have $g \preceq \Delta$ and $\frac{ g^2 }{\Delta} \frac{ \mu_0^2 r }{\sqrt{n}} \preceq  \mu_0^2 r \sqrt{p^*}$. So \eqref{eq:bound4} becomes
\begin{equation}\label{eq:bound5}
\norm{ \bm{U} \bm{\Lambda} \bm{U}^t - \hat{\bm{U}} \hat{\bm{\Lambda}} \hat{\bm{U}}^t }_{2,\infty} \preceq \frac{\mu_0^2 r g}{ \sqrt{n} } + \mu_0\sqrt{r R p^*} + \mu_0^2 r \sqrt{p^*} + \frac{R p^*}{|\lambda_r|} \sqrt{n} .
\end{equation}
Plugging in $|\lambda_{r}| \succeq \frac{n p^*}{\sqrt{r}} $, we get
\begin{multline}\label{eq:bound5}
\norm{ \bm{U} \bm{\Lambda} \bm{U}^t - \hat{\bm{U}} \hat{\bm{\Lambda}} \hat{\bm{U}}^t }_{2,\infty} \preceq \frac{\mu_0^2 r g}{ \sqrt{n} } + \mu_0\sqrt{r R p^*} + \mu_0^2 r \sqrt{p^*} + \frac{\sqrt{r}R}{\sqrt{n}} \\ 
\preceq \frac{\mu_0^2 r (\sqrt{np^*} + \log{n} + r)}{ \sqrt{n} } + \mu_0\sqrt{r (\log{n}+r) p^*} + \mu_0^2 r \sqrt{p^*} .
\end{multline}
Taking into account the condition $p^* \succeq \max\left\{ \frac{\mu_0^2 r \log{n} }{n}, \frac{\mu_0^2 r^2}{n} \right\} $, \eqref{eq:bound5} becomes
\begin{equation}\label{eq:bound6}
\norm{ \bm{U} \bm{\Lambda} \bm{U}^t - \hat{\bm{U}} \hat{\bm{\Lambda}} \hat{\bm{U}}^t }_{2,\infty} \preceq \mu_0 \sqrt{ r (\log{n} + r) p^* } + \mu_0^2 r \sqrt{ p^* } .
\end{equation}
Finally, we have
\begin{align*}
\norm{ \bm{P} - \hat{\bm{P}} }_{2,\infty} &\le \norm{ \bm{U} \bm{\Lambda} \bm{U}^t - \hat{\bm{U}} \hat{\bm{\Lambda}} \hat{\bm{U}}^t }_{2,\infty} + \norm{\bm{U}_{\bot} \bm{\Lambda}_{\bot} \bm{U}_{\bot}^t}_{2,\infty} \\
& \le \norm{ \bm{U} \bm{\Lambda} \bm{U}^t - \hat{\bm{U}} \hat{\bm{\Lambda}} \hat{\bm{U}}^t }_{2,\infty} + \norm{\bm{U}_{\bot} \bm{\Lambda}_{\bot} \bm{U}_{\bot}^t}_{2} \\
&\le \norm{ \bm{U} \bm{\Lambda} \bm{U}^t - \hat{\bm{U}} \hat{\bm{\Lambda}} \hat{\bm{U}}^t }_{2,\infty} + | \lambda_{r+1} | \\
&\preceq \mu_0 \sqrt{ r (\log{n} + r) p^* } + \mu_0^2 r \sqrt{ p^* } ,
\end{align*}
with probability at least $1-(B(r)+2)n^{-\gamma}$.
\end{proof}

\Cref{prop:threshold_1} can be proved by specifying a data-driven estimate of the separation order between the scores of the core and periphery components.

\begin{proof}[Proof of \Cref{prop:threshold_1}]
For any $i \in {\cal P}$, we have $\norm{\bm{P}_{i,*}\bm{H}} < p^*$. Under the event of \Cref{lemma:combined_succinct}, by the boundedness of $\mu_0$ and $r$, we have
\begin{align*}
S_i &= \norm{\hat{\bm{P}}_{i,*}\bm{H}}_2 \\
&\le \norm{\bm{P}_{i,*}\bm{H}}_2 + \norm{\hat{\bm{P}}_{i,*}\bm{H} - \bm{P}_{i,*}\bm{H}}_2 \\
&< p^* + \norm{ \hat{\bm{P}}_{i,*} - \bm{P}_{i,*} }_2 \\
&\preceq \sqrt{p^* \log{n}};
\end{align*}
Similarly for any $i \in {\cal C}$, since $\norm{\bm{P}_{i,*}\bm{H}} \ge h(n)$, we have
\begin{align*}
S_i &= \norm{\hat{\bm{P}}_{i,*}\bm{H}}_2 \\
&\ge h(n) - \norm{\hat{\bm{P}}_{i,*}\bm{H} - \bm{P}_{i,*}\bm{H}}_2 \\
&\succeq h(n) - \sqrt{p^* \log{n}}.
\end{align*}

Recall  that $\hat{p} = \frac{2}{n^2-n}\sum_{i<j} \bm{A}_{ij}$ and 
$\min_{1 \le i,j \le n} \bm{P}_{ij} \simeq \max_{1 \le i,j \le n} \bm{P}_{ij} = p^*$. By Hoeffding's inequality, we know that $\hat{p} \simeq p^*$ with probability greater than $1-2n^{-\gamma}$. Therefore, if $h(n) \succ \sqrt{p^{*(1-\epsilon)} \log{n}}$, for some small constant $\epsilon$, $ S_i \succ \sqrt{\hat{p}^{1-\epsilon} \log{n}} $ for $i \in {\cal C}$, and $S_i \preceq \sqrt{\hat{p} \log{n}}$ for $i \in {\cal P}$, for sufficiently large $n$ with probability greater than $1-(B(r)+4)n^{-\gamma}$.

\end{proof}

Finally, the weak consistency can be proved by concentration results with respect to Frobenius norm.
\begin{proof}[Proof of \Cref{thm:weak_consistency_1}]
First, we want to bound $\norm{ \bm{P}\bm{H}-\hat{\bm{P}}\bm{H} }_F^2$.
\begin{align*}
\norm{ \bm{P}\bm{H}-\hat{\bm{P}}\bm{H} }_F^2 &\preceq \rank(\bm{P}\bm{H}-\hat{\bm{P}}\bm{H}) \cdot \norm{ \bm{P}\bm{H}-\hat{\bm{P}}\bm{H} }_2^2 \\
&\preceq \max\{ \rank(\bm{P}\bm{H}), \rank(\hat{\bm{P}}\bm{H}) \} \cdot \norm{ \bm{P}-\hat{\bm{P}} }_2^2 \\
&\le \max\{ \rank(\bm{P}), \rank(\hat{\bm{P}}) \} \cdot \norm{ \bm{P}-\hat{\bm{P}} }_2^2 \\
&= \max\{ \rank(\bm{P}), r \} \cdot \norm{ \bm{P}-\hat{\bm{P}} }_2^2 \\
&= \max\{ \rank(\bm{P}), r \} \cdot \norm{ \bm{P} - \bm{A} + \bm{A} -\hat{\bm{P}} }_2^2 \\
&\preceq \max\{ \rank(\bm{P}), r \} \cdot {\left( \norm{ \bm{P} - \bm{A} }_2 + \norm{ \bm{A} -\hat{\bm{P}} }_2 \right)}^2 \\
&\preceq \max\{ \rank(\bm{P}), r \} \cdot {\left( \norm{ \bm{P} - \bm{A} }_2 + | \hat{\lambda}_{r+1} | \right)}^2 \\
&\preceq \max\{ \rank(\bm{P}), r \} \cdot {\left( \norm{ \bm{P} - \bm{A} }_2 + \norm{ \bm{P} - \bm{A} }_2 + | \lambda_{r+1} | \right)}^2 \\
&\preceq \max\{ \rank(\bm{P}), r \} \cdot {\left( \norm{ \bm{P} - \bm{A} }_2 + | \lambda_{r+1} | \right)}^2.
\end{align*}

Each misclassification necessarily involves a squared deviation of order at least ${\left(h(n)-p^*\right)}^2$. Given the total squared deviation bounded by the above inequality, we can show that the number of misclassified nodes is at most
$$ M \preceq \frac{\norm{ \bm{P}\bm{H}-\hat{\bm{P}}\bm{H} }_F^2}{{\left(h(n) - p^*\right)}^2} = C \cdot \max\{r, \rank(\bm{P})\} \cdot \frac{ {\left( \norm{ \bm{P} - \bm{A} }_2 + | \lambda_{r+1} | \right)}^2 }{ {\left(h(n)-p^*\right)}^2 }, $$
where $C$ is some constant. Applying \Cref{thm:lei_rinaldo}, we can get
$$ M \preceq \max\{r, \rank(\bm{P})\} \cdot \frac{ {\left( \max\{\sqrt{n p^*},\sqrt{\log{n}}\} + | \lambda_{r+1} | \right)}^2 }{ {\left(h(n)-p^*\right)}^2 }, $$
with probability at least $1-n^{-\gamma}$.
\end{proof}

\subsection{ Proofs under the configuration-type model}\label{appendix:proof_strong_consistency_2}
We first introduce the ancillary lemmas:
\begin{lemma}[\citet{qin2013regularized}]\label{lemma:qin_rohe}
Suppose $d_{\min} > 3(\gamma + 1) \log{n}$. Then,
$$ \norm{ \hat{\bm{D}} \bm{D}^{-1} - \bm{I} }_{2} < \sqrt{ \frac{3(\gamma + 1) \log{n}}{d_{\min}} } $$
with probability greater than $1-\frac{2}{n^{\gamma}}$.
\end{lemma}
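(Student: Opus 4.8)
The plan is to reduce the spectral-norm bound to a scalar concentration statement. Since $\hat{\bm{D}}\bm{D}^{-1}-\bm{I}$ is a diagonal matrix, its spectral norm equals $\max_{1\le i\le n}\abs{\hat{d}_i/d_i-1}=\max_{1\le i\le n}\abs{\hat{d}_i-d_i}/d_i$, so it suffices to show that each observed degree $\hat{d}_i=\sum_j \bm{A}_{ij}$ deviates from its mean $d_i=\sum_j \bm{P}_{ij}$ by at most a $\delta$-fraction, with $\delta=\sqrt{3(\gamma+1)\log n/d_{\min}}$, and then take a union bound over the $n$ nodes.

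First I would fix $i$ and note that $\hat{d}_i$ is a sum of independent Bernoulli variables with $\e\hat{d}_i=d_i$ (the diagonal term $\bm{P}_{ii}$ vanishes under the no-self-loop convention). The hypothesis $d_{\min}>3(\gamma+1)\log n$ is exactly what forces $\delta\in(0,1)$, which is the regime in which the multiplicative (relative) Chernoff bound is valid. Applying it gives
$$\p\!\left(\abs{\hat{d}_i-d_i}\ge \delta d_i\right)\le 2\exp\!\left(-\frac{\delta^2 d_i}{3}\right)\le 2\exp\!\left(-\frac{\delta^2 d_{\min}}{3}\right)=2n^{-(\gamma+1)},$$
where the middle inequality uses $d_i\ge d_{\min}$ and the final equality is the defining choice of $\delta$. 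On the complement of this event $\abs{\hat{d}_i/d_i-1}<\delta$.

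Next I would union-bound over $i=1,\dots,n$: the probability that $\abs{\hat{d}_i/d_i-1}\ge\delta$ holds for at least one $i$ is at most $n\cdot 2n^{-(\gamma+1)}=2n^{-\gamma}$. On the remaining event, $\norm{\hat{\bm{D}}\bm{D}^{-1}-\bm{I}}_2=\max_i\abs{\hat{d}_i/d_i-1}<\delta=\sqrt{3(\gamma+1)\log n/d_{\min}}$, which is the assertion.

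I do not expect a genuine obstacle here; this is a standard Chernoff-plus-union-bound argument, quoted essentially verbatim from \citet{qin2013regularized}, so in the paper I would just cite their lemma and include the short argument above for completeness. The only points requiring care are (i) using the multiplicative Chernoff bound rather than the additive one, so that the deviation is proportional to $d_i$ and cancels the $1/d_i$ normalization, and (ii) checking that the stated lower bound on $d_{\min}$ is precisely the condition that keeps the relative-error parameter $\delta$ below $1$; both are immediate from the arithmetic displayed above.
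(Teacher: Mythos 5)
Your argument is correct and follows essentially the same route as the paper: a per-node relative concentration bound for the degrees giving probability $2n^{-(\gamma+1)}$ for each $i$, followed by a union bound over the $n$ nodes. The only difference is that the paper imports the per-node bound directly from an intermediate step in the proof of Theorem 4.1 of \citet{qin2013regularized} (with their $\tau=0$ and $a=\sqrt{3(\gamma+1)\log n/d_{\min}}$), whereas you rederive it explicitly via the multiplicative Chernoff bound; both yield the identical conclusion.
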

\begin{proof}
This lemma is indirectly proved in the proof of Theorem 4.1 in \citet{qin2013regularized}.
By setting $\tau = 0$, $\epsilon = 4n^{-\gamma}$, and $a = \sqrt{ \frac{3(\gamma+1)\log{n}}{d_{\min}} }$ in their proof, for each $i$, we can have
$$ P{\left( \abs{\frac{\hat{\bm{D}}_{ii}}{\bm{D}_{ii}}-1} \ge \sqrt{ \frac{3(\gamma+1)\log{n}}{d_{\min}} } \right)} \le 2n^{-\gamma-1} . $$
Then,
\begin{align*}
P{\left( \norm{ \hat{\bm{D}} \bm{D}^{-1} - \bm{I} }_{2} \ge \sqrt{ \frac{3(\gamma+1)\log{n}}{d_{\min}} } \right)} &= P{\left( \max_{i} \abs{\frac{\hat{\bm{D}}_{ii}}{\bm{D}_{ii}}-1} \ge \sqrt{ \frac{3(\gamma+1)\log{n}}{d_{\min}} } \right)} \\
&= P{\left( \cup_{i}\left\{ \abs{\frac{\hat{\bm{D}}_{ii}}{\bm{D}_{ii}}-1} \ge \sqrt{ \frac{3(\gamma+1)\log{n}}{d_{\min}} } \right\} \right)} \\
&\le \sum_{i=1}^{n} P{\left( \abs{\frac{\hat{\bm{D}}_{ii}}{\bm{D}_{ii}}-1} \ge \sqrt{ \frac{3(\gamma+1)\log{n}}{d_{\min}} } \right)} \\
&\le 2n^{-\gamma}.
\end{align*}
\end{proof}

\begin{lemma}\label{lemma:bound_periphery}
Under \Cref{def:core_periphery2}, we have $$\max_{i \in \pcal} \norm{ \bm{P}_{i,*} \bm{D}^{-1} \bm{H} }_{2} \le \frac{d_{\max}}{(n-1) d_{\min}}.$$
\end{lemma}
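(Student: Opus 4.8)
The plan is to use the explicit form of the periphery rows of $\bm{P}\bm{D}^{-1}$ supplied directly by \Cref{def:core_periphery2}. Fix $i \in \pcal$ and set $c_i := d_i / \sum_{k=1}^n d_k$. By \eqref{eq:configuration-probability}, for every $j \ne i$ the $j$-th entry of the row vector $\bm{P}_{i,*}\bm{D}^{-1}$ equals $\bm{P}_{ij}/d_j = d_i/\sum_k d_k = c_i$, while its $i$-th entry is $\bm{P}_{ii}/d_i = 0$ under the no-self-loop convention $\bm{P}_{ii}=0$. Hence $\bm{P}_{i,*}\bm{D}^{-1} = c_i(\bm{1}_n - \bm{e}_i)^t$, a constant row with a single coordinate zeroed out, where $\bm{e}_i$ is the $i$-th standard basis vector.

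Next I would apply the centering matrix $\bm{H} = \bm{I}_n - \tfrac1n\bm{1}_n\bm{1}_n^t$, whose role is to annihilate the large constant direction $\bm{1}_n$. Using $\bm{1}_n^t\bm{H} = \bm{0}$ and $\bm{e}_i^t\bm{H} = \bm{e}_i^t - \tfrac1n\bm{1}_n^t$, this gives $\bm{P}_{i,*}\bm{D}^{-1}\bm{H} = c_i\left(\tfrac1n\bm{1}_n - \bm{e}_i\right)^t$. A one-line computation of the squared Euclidean norm then yields $\norm{\tfrac1n\bm{1}_n - \bm{e}_i}_2^2 = (n-1)/n^2 + (1-1/n)^2 = (n-1)/n \le 1$, so $\norm{\bm{P}_{i,*}\bm{D}^{-1}\bm{H}}_2 \le c_i$.

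Finally I would bound $c_i$ uniformly: since $\sum_{k=1}^n d_k \ge n\, d_{\min} \ge (n-1)d_{\min}$ and $d_i \le d_{\max}$, we get $c_i \le d_{\max}/\big((n-1)d_{\min}\big)$, and maximizing over $i \in \pcal$ gives the claim (in fact with a factor $n/(n-1)$ to spare). There is no substantial obstacle here — the whole argument is an explicit calculation; the only point needing a moment's care is that one must \emph{center} rather than merely bound by $\norm{\bm{H}}_2 = 1$ (the uncentered row has norm $c_i\sqrt{n-1}$, which is too large), and that the nonzero (but small) centered row comes precisely from the diagonal term. If instead one adopts the alternative convention $\bm{P}_{ii} = d_i^2/\sum_k d_k$, then $\bm{P}_{i,*}\bm{D}^{-1}$ is exactly constant, its centered version is zero, and the bound is trivial.
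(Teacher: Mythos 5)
Your proof is correct and follows essentially the same route as the paper's: both write out the periphery row of $\bm{P}\bm{D}^{-1}$ explicitly as a constant vector with the diagonal entry zeroed, compute the centered row entrywise, obtain the exact norm $c_i\sqrt{(n-1)/n}$, and then bound $c_i$ by $d_{\max}/((n-1)d_{\min})$. The only cosmetic difference is that the paper normalizes by $\sum_{k\ne i}d_k$ rather than $\sum_k d_k$ (reflecting the zero-diagonal convention), which does not affect the argument.
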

\begin{proof}
We assume the diagonal entries of $\bm{P}$ are $0$. By definition, for $i \in {\cal P}$ and $i \ne j$, $${[ \bm{P} \bm{D}^{-1} ]}_{ij} = \frac{d_i}{\sum_{k \ne i} d_k}.$$ So, $${[ \bm{P} \bm{D}^{-1} \bm{H} ]}_{ij} = \frac{d_i}{\sum_{k \ne i} d_k} - \frac{n-1}{n}\frac{d_i}{\sum_{k \ne i} d_k} = \frac{d_i}{n \sum_{k \ne i} d_k}$$ for $i \ne j$, and $$ {[ \bm{P} \bm{D}^{-1} \bm{H} ]}_{ii} = - \frac{n-1}{n}\frac{d_i}{\sum_{k \ne i} d_k}.$$
Therefore, we have
$$\norm{\bm{P}_{i,*} \bm{D}^{-1} \bm{H}}_{2} = \sqrt{(n-1){\left( \frac{1}{n} \right)}^2 + {\left(\frac{n-1}{n}\right)}^2 } \frac{d_i}{\sum_{k \ne i} d_k} = \sqrt{\frac{n-1}{n}} \frac{d_i}{\sum_{k \ne i} d_k} < \frac{d_{\max}}{(n-1) d_{\min}}.$$
\end{proof}

We give a more general theorem that includes \Cref{cor:strong_consistency_2} as a special case.
\begin{theorem}\label{theorem:strong_consistency_2}
Assume the network $\bm{A}$ is generated from the configuration-type model in \Cref{def:core_periphery2} under \Cref{assumption:incoherence}. Suppose $\Delta \succeq \kappa g $, $| \lambda_{r} | \succeq \frac{np^*}{\sqrt{n}\norm{ \bm{U} }_{2,\infty}} $, $d_{\min} \succ \log{n}$. If we have
\begin{multline}\label{eq:config_strong_condition_final}
h'(n) \succ \frac{\mu_0^2 r |\lambda_1|}{d_{\min} \sqrt{n}} \left( \frac{ \kappa g}{\Delta} + \frac{ R}{|\lambda_r|} \right) + \frac{ \mu_0 |\lambda_1| \sqrt{r p^* R}}{ d_{\min} |\lambda_r| } + \frac{\mu_0^2 r \sqrt{p^*}}{d_{\min}} \\
+ {\left(\frac{\kappa g}{\Delta} + \frac{R}{|\lambda_r|}\right)}^2 \frac{\mu_0^2 r}{d_{\min} \sqrt{n}} (|\lambda_1| + \sqrt{np^*}) + \frac{p^* R \sqrt{n}}{\lambda_r^2 d_{\min}} (|\lambda_1| + \sqrt{n p^*}) + \frac{|\lambda_{r+1}|}{d_{\min}} \\
+ \norm{\bm{P}\bm{D}^{-1}}_{2,\infty} \sqrt{\frac{3(\gamma + 1) \log{n}}{d_{\min}}} + \frac{d_{\max}}{n d_{\min}},
\end{multline}
then, \Cref{alg:cp_detection2} exactly identifies the core and periphery set with probability greater than $1-(B(r)+4)n^{-\gamma}$, for some positive constant $\gamma$.
\end{theorem}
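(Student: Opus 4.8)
The plan is to mirror the proof of \Cref{theorem:strong_consistency_1}, replacing $\bm{P}$ by its degree-corrected version $\bm{P}\bm{D}^{-1}$ throughout, and to supply the two extra ingredients specific to the configuration-type model: a uniform concentration bound for the empirical degrees, and the periphery-score bound of \Cref{lemma:bound_periphery}. As in the ER case, \Cref{alg:cp_detection2} recovers the partition exactly whenever
\[
\min_{i\in\ccal}\norm{\hat{\bm{P}}_{i,*}\hat{\bm{D}}^{-1}\bm{H}}_2 \;>\; \max_{i\in\pcal}\norm{\hat{\bm{P}}_{i,*}\hat{\bm{D}}^{-1}\bm{H}}_2 .
\]
Using the triangle inequality on the two sides, together with $\min_{i\in\ccal}\norm{\bm{P}_{i,*}\bm{D}^{-1}\bm{H}}_2 = h'(n)$ and $\max_{i\in\pcal}\norm{\bm{P}_{i,*}\bm{D}^{-1}\bm{H}}_2 \le d_{\max}/((n-1)d_{\min})$ from \Cref{lemma:bound_periphery}, it suffices to prove
\[
\norm{\bm{P}\bm{D}^{-1}\bm{H} - \hat{\bm{P}}\hat{\bm{D}}^{-1}\bm{H}}_{2,\infty} \;\le\; \tfrac12\Bigl(h'(n) - \tfrac{d_{\max}}{(n-1)d_{\min}}\Bigr),
\]
and since $\norm{\bm{H}}_2 = 1$ the left-hand side is dominated by $\norm{\bm{P}\bm{D}^{-1} - \hat{\bm{P}}\hat{\bm{D}}^{-1}}_{2,\infty}$.

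The next step is the decomposition
\[
\bm{P}\bm{D}^{-1} - \hat{\bm{P}}\hat{\bm{D}}^{-1} \;=\; (\bm{P}-\hat{\bm{P}})\hat{\bm{D}}^{-1} \;+\; \bm{P}\bigl(\bm{D}^{-1} - \hat{\bm{D}}^{-1}\bigr),
\]
which I would bound term by term via $\norm{\bm{M}_1\bm{M}_2}_{2,\infty}\le\norm{\bm{M}_1}_{2,\infty}\norm{\bm{M}_2}_2$. For the first term, \Cref{lemma:qin_rohe} (applicable since $d_{\min}\succ\log n$) shows $\hat{\bm{D}}\bm{D}^{-1}$ is within $o(1)$ of $\bm{I}$ in spectral norm, so $\norm{\hat{\bm{D}}^{-1}}_2 \preceq 1/d_{\min}$; and $\norm{\bm{P}-\hat{\bm{P}}}_{2,\infty} \le \norm{\bm{U}\bm{\Lambda}\bm{U}^t - \hat{\bm{U}}\hat{\bm{\Lambda}}\hat{\bm{U}}^t}_{2,\infty} + |\lambda_{r+1}|$, which is controlled by the incoherent estimate \eqref{eq:lemma_combined_incoherent_eq} of \Cref{lemma:combined}. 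This reproduces the bracketed spectral terms of \eqref{eq:strong_consistency_1}, now each carrying an extra factor $1/d_{\min}$, together with the term $|\lambda_{r+1}|/d_{\min}$. For the second term, write $\bm{D}^{-1} - \hat{\bm{D}}^{-1} = \bm{D}^{-1}\bigl(\hat{\bm{D}}\bm{D}^{-1} - \bm{I}\bigr)\bigl(\hat{\bm{D}}\bm{D}^{-1}\bigr)^{-1}$, hence
\[
\norm{\bm{P}\bigl(\bm{D}^{-1} - \hat{\bm{D}}^{-1}\bigr)}_{2,\infty} \le \norm{\bm{P}\bm{D}^{-1}}_{2,\infty}\,\norm{\hat{\bm{D}}\bm{D}^{-1}-\bm{I}}_2\,\norm{\bigl(\hat{\bm{D}}\bm{D}^{-1}\bigr)^{-1}}_2,
\]
and \Cref{lemma:qin_rohe} gives $\norm{\hat{\bm{D}}\bm{D}^{-1}-\bm{I}}_2 < \sqrt{3(\gamma+1)\log n/d_{\min}} = o(1)$, so that $\norm{(\hat{\bm{D}}\bm{D}^{-1})^{-1}}_2 = O(1)$. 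This produces the term $\norm{\bm{P}\bm{D}^{-1}}_{2,\infty}\sqrt{3(\gamma+1)\log n/d_{\min}}$.

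Adding the three contributions — the spectral error divided by $d_{\min}$, the degree-normalization error, and the periphery offset $d_{\max}/((n-1)d_{\min})\le d_{\max}/(nd_{\min})$ — yields exactly the right-hand side of \eqref{eq:config_strong_condition_final}, and a union bound over the event of \Cref{lemma:combined} (probability at least $1-(B(r)+2)n^{-\gamma}$) and the two uses of \Cref{lemma:qin_rohe} (total probability at least $1-2n^{-\gamma}$) gives the stated $1-(B(r)+4)n^{-\gamma}$. \Cref{cor:strong_consistency_2} then follows by specializing: under \Cref{assumption:low_rank} and boundedness of $|\lambda_1/\lambda_r|$, \Cref{lemma:combined_succinct} collapses the spectral terms to $\mu_0\sqrt{r(\log n+r)p^*} + \mu_0^2 r\sqrt{p^*}$, and dividing by $d_{\min}$ recovers \eqref{eq:strong_consistency_2}. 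The main obstacle I anticipate is disentangling the two sources of randomness — the rank-$r$ truncation error and the empirical-degree fluctuation — which enter as a product in $\hat{\bm{P}}\hat{\bm{D}}^{-1}$. The resolution is that $\hat{\bm{D}}\bm{D}^{-1}\to\bm{I}$ uniformly, precisely where $d_{\min}\succ\log n$ is used, so $\hat{\bm{D}}^{-1}$ and $(\hat{\bm{D}}\bm{D}^{-1})^{-1}$ act as well-conditioned spectral-norm multipliers and the $\norm{\cdot}_{2,\infty}$–$\norm{\cdot}_2$ submultiplicativity decouples the bound; a secondary subtlety is that $\norm{\bm{P}\bm{D}^{-1}}_{2,\infty}$ must be carried as an explicit quantity, since in the sparse regime it, rather than the spectral error, can be the binding term.
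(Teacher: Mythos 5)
Your proposal is correct and follows the same overall strategy as the paper: reduce exact recovery to the separation condition $h'(n) > 2\norm{\bm{P}\bm{D}^{-1}\bm{H} - \hat{\bm{P}}\hat{\bm{D}}^{-1}\bm{H}}_{2,\infty} + d_{\max}/((n-1)d_{\min})$ via \Cref{lemma:bound_periphery}, then control the two-to-infinity error using \Cref{lemma:combined} for the truncation error and \Cref{lemma:qin_rohe} for the degree fluctuation, with the same probability bookkeeping. The one place you genuinely diverge is the algebraic decomposition: the paper writes $\bm{P}\bm{D}^{-1} - \hat{\bm{P}}\hat{\bm{D}}^{-1} = (\bm{P}-\hat{\bm{P}})\bm{D}^{-1} + \hat{\bm{P}}\hat{\bm{D}}^{-1}(\hat{\bm{D}}\bm{D}^{-1}-\bm{I})$, which forces an absorption step --- the error term $\norm{\bm{P}\bm{D}^{-1}-\hat{\bm{P}}\hat{\bm{D}}^{-1}}_{2,\infty}\norm{\hat{\bm{D}}\bm{D}^{-1}-\bm{I}}_2$ reappears on the right and must be moved to the left and cancelled using $\norm{\hat{\bm{D}}\bm{D}^{-1}-\bm{I}}_2 = o(1)$ --- whereas your split $(\bm{P}-\hat{\bm{P}})\hat{\bm{D}}^{-1} + \bm{P}(\bm{D}^{-1}-\hat{\bm{D}}^{-1})$ bounds each piece directly, at the cost of needing $\norm{\hat{\bm{D}}^{-1}}_2 \preceq 1/d_{\min}$ and $\norm{(\hat{\bm{D}}\bm{D}^{-1})^{-1}}_2 = O(1)$, both of which follow from the same concentration event of \Cref{lemma:qin_rohe} under $d_{\min}\succ\log n$. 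The two routes land on the identical bound $\norm{\bm{P}-\hat{\bm{P}}}_{2,\infty}/d_{\min} + \norm{\bm{P}\bm{D}^{-1}}_{2,\infty}\sqrt{3(\gamma+1)\log n/d_{\min}}$; yours is marginally more transparent, the paper's avoids having to invert the empirical degree matrix explicitly. No gap.
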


\begin{proof}[Proof of \Cref{theorem:strong_consistency_2} and \Cref{cor:strong_consistency_2}]

First, we have $\norm{ \bm{P}_{i,*} \bm{D}^{-1} \bm{H} }_{2} \ge h'(n)$ for $i \in {\cal C}$. Also, by \Cref{lemma:bound_periphery}, we have that $\norm{ \bm{P}_{i,*} \bm{D}^{-1} \bm{H} }_{2} \le \frac{d_{\max}}{(n-1) d_{\min}}$ for $i \in {\cal P}$. To achieve strong consistency, we need to have 
\begin{equation}\label{eq:config_strong_condition_2}
h'(n) > 2 \norm{ \bm{P} \bm{D}^{-1} \bm{H} - \hat{\bm{P}} \hat{\bm{D}}^{-1} \bm{H}}_{2,\infty} + \frac{d_{\max}}{(n-1) d_{\min}}.
\end{equation}
In the following, we give a bound for $\norm{ \bm{P} \bm{D}^{-1} \bm{H} - \hat{\bm{P}} \hat{\bm{D}}^{-1} \bm{H} }_{2,\infty}$. Notice that
\begin{equation}\label{eq:config_bound_1}
\norm{ \bm{P} \bm{D}^{-1} \bm{H} - \hat{\bm{P}} \hat{\bm{D}}^{-1} \bm{H} }_{2,\infty}
\le \norm{ \bm{P} \bm{D}^{-1} - \hat{\bm{P}} \hat{\bm{D}}^{-1} }_{2,\infty} \norm{ \bm{H} }_2
\le \norm{ \bm{P} \bm{D}^{-1} - \hat{\bm{P}} \hat{\bm{D}}^{-1} }_{2,\infty}.
\end{equation}

Meanwhile, we have
\begin{multline*}\label{eq:config_bound_12}
\norm{ \bm{P} \bm{D}^{-1} - \hat{\bm{P}} \hat{\bm{D}}^{-1} }_{2,\infty}
 = \norm{ \bm{P} \bm{D}^{-1} - \hat{\bm{P}}\bm{D}^{-1} + \hat{\bm{P}}\bm{D}^{-1} - \hat{\bm{P}} \hat{\bm{D}}^{-1} }_{2,\infty} \\
 = \norm{ (\bm{P} - \hat{\bm{P}}) \bm{D}^{-1} + \hat{\bm{P}} \hat{\bm{D}}^{-1} (\hat{\bm{D}} \bm{D}^{-1} - \bm{I}) }_{2,\infty} \\
 = \norm{ (\bm{P} - \hat{\bm{P}}) \bm{D}^{-1} + ( \hat{\bm{P}} \hat{\bm{D}}^{-1} - \bm{P}\bm{D}^{-1} + \bm{P}\bm{D}^{-1} ) (\hat{\bm{D}} \bm{D}^{-1} - \bm{I}) }_{2,\infty} \\
 \le \norm{ \bm{P} - \hat{\bm{P}} }_{2,\infty} \norm{ \bm{D}^{-1} }_2 + \norm{\bm{P}\bm{D}^{-1}}_{2,\infty} \norm{\hat{\bm{D}} \bm{D}^{-1} - \bm{I}}_2 + \norm{ \bm{P} \bm{D}^{-1} - \hat{\bm{P}} \hat{\bm{D}}^{-1} }_{2,\infty} \norm{\hat{\bm{D}} \bm{D}^{-1} - \bm{I}}_2.
\end{multline*}
Moving the term $\norm{ \bm{P} \bm{D}^{-1} - \hat{\bm{P}} \hat{\bm{D}}^{-1} }_{2,\infty} \norm{\hat{\bm{D}} \bm{D}^{-1} - \bm{I}}_2$ from the right to the left, we get
\begin{equation}\label{eq:config_bound_2}
\left( 1 - \norm{\hat{\bm{D}} \bm{D}^{-1} - \bm{I}}_2 \right) \norm{ \bm{P} \bm{D}^{-1} - \hat{\bm{P}} \hat{\bm{D}}^{-1} }_{2,\infty}
\le \norm{ \bm{P} - \hat{\bm{P}} }_{2,\infty} \norm{ \bm{D}^{-1} }_2 + \norm{\bm{P}\bm{D}^{-1}}_{2,\infty} \norm{\hat{\bm{D}} \bm{D}^{-1} - \bm{I}}_2.
\end{equation}

By \Cref{lemma:qin_rohe}, if $d_{\min} \succ \log{n}$, $\norm{\hat{\bm{D}} \bm{D}^{-1} - \bm{I}}_2$ is vanishing for sufficiently large $n$ with high probability. Therefore, we have
\begin{multline}\label{eq:config_bound_3}
\norm{ \bm{P} \bm{D}^{-1} - \hat{\bm{P}} \hat{\bm{D}}^{-1} }_{2,\infty}
\preceq \norm{ \bm{P} - \hat{\bm{P}} }_{2,\infty} \norm{ \bm{D}^{-1} }_2 + \norm{\bm{P}\bm{D}^{-1}}_{2,\infty} \norm{\hat{\bm{D}} \bm{D}^{-1} - \bm{I}}_2 \\
\preceq \norm{ \bm{P} - \hat{\bm{P}} }_{2,\infty} \frac{1}{d_{\min}} + \norm{\bm{P}\bm{D}^{-1}}_{2,\infty} \sqrt{\frac{3(\gamma + 1) \log{n}}{d_{\min}}}
\end{multline}
with probability greater than $1-\frac{2}{n^{\gamma}}$.

Under \Cref{assumption:incoherence}, applying \Cref{lemma:combined} and \eqref{eq:config_bound_1}, we can see that \eqref{eq:config_strong_condition_2} are satisfied with probability greater than $1-(B(r)+4)n^{-\gamma}$, if
\begin{multline*}\label{eq:config_strong_condition_final}
h'(n) \succ \frac{\mu_0^2 r \kappa g |\lambda_1|}{ \Delta d_{\min} \sqrt{n} } + \frac{ \mu_0 |\lambda_1| \sqrt{r R p^*}}{|\lambda_r| d_{\min}} + \frac{\mu_0^2 r \sqrt{p^*}}{d_{\min}} \\
+ \frac{\mu_0^2 r \kappa^2 g^2}{\Delta^2 d_{\min} \sqrt{n}} (|\lambda_1| + \sqrt{np^*}) + \frac{R p^*}{\lambda_r^2 d_{\min}} (\sqrt{n}|\lambda_1| + n\sqrt{p^*}) + \frac{|\lambda_{r+1}|}{d_{\min}} \\
+ \norm{\bm{P}\bm{D}^{-1}}_{2,\infty} \sqrt{\frac{3(\gamma + 1) \log{n}}{d_{\min}}} + \frac{d_{\max}}{n d_{\min}}
\end{multline*}
as stated in the theorem. 

Furthermore, to see how this leads to \Cref{cor:strong_consistency_2}, suppose \Cref{assumption:low_rank} holds, and assume $p^* \succeq \max\left\{ \frac{\mu_0^2 r \log{n} }{n}, \frac{\mu_0^2 r^2}{n} \right\} $, and $\left| \lambda_1 / \lambda_r \right|$ is bounded. Applying \Cref{lemma:combined_succinct} to \eqref{eq:config_strong_condition_final} gives
\begin{equation*}\label{eq:config_strong_condition_final_lowrank}
h'(n) \succ \frac{ 1 }{ d_{\min} }\left( \mu_0 \sqrt{ r (\log{n} + r) p^* } + \mu_0^2 r \sqrt{ p^* } \right) + \norm{\bm{P}\bm{D}^{-1}}_{2,\infty} \sqrt{\frac{ \log{n} }{d_{\min}}}
\end{equation*}
as stated in  \Cref{cor:strong_consistency_2}.
\end{proof}

\begin{proof}[Proof of \Cref{prop:threshold_2}]
For any $i \in {\cal P}$, by \Cref{lemma:bound_periphery} and \Cref{lemma:combined_succinct}, and the boundedness of $\mu_0$ and $r$, \eqref{eq:config_bound_1} and \eqref{eq:config_bound_3} lead to
\begin{align*}
S_i &= \norm{\hat{\bm{P}}_{i,*} \hat{\bm{D}}^{-1} \bm{H}}_2 \\
&\le \norm{\bm{P}_{i,*}\bm{D}^{-1}\bm{H}}_2 + \norm{\hat{\bm{P}}_{i,*}\hat{\bm{D}}^{-1}\bm{H} - \bm{P}_{i,*}\bm{D}^{-1}\bm{H}}_2 \\
&\le \frac{d_{\max}}{(n-1) d_{\min}} + \norm{\hat{\bm{P}}_{i,*}\hat{\bm{D}}^{-1}\bm{H} - \bm{P}_{i,*}\bm{D}^{-1}\bm{H}}_2 \\
&\preceq \frac{ \sqrt{ p^* \log{n} }}{ d_{\min}} + \norm{\bm{P}\bm{D}^{-1}}_{2,\infty} \sqrt{\frac{\log{n}}{d_{\min}}};
\end{align*}
Similarly for any $i \in {\cal C}$, using the fact that $\norm{\bm{P}_{i,*}\bm{D}^{-1}\bm{H}} \ge h'(n)$ and \Cref{lemma:combined_succinct},
\begin{align*}
S_i &= \norm{\hat{\bm{P}}_{i,*}\hat{\bm{D}}^{-1}\bm{H}}_2 \\
&\ge h'(n) - \norm{\hat{\bm{P}}_{i,*}\hat{\bm{D}}^{-1}\bm{H} - \bm{P}_{i,*}\bm{D}^{-1}\bm{H}}_2 \\
&\succeq h'(n) - \frac{ \sqrt{ p^* \log{n} }}{ d_{\min}} - \norm{\bm{P}\bm{D}^{-1}}_{2,\infty} \sqrt{\frac{\log{n}}{d_{\min}}},
\end{align*}
for sufficiently large $n$ with probability $1-(B(r)+4)n^{-\gamma}$.

When $\min_{1 \le i,j \le n} \bm{P}_{ij} \simeq \max_{1 \le i,j \le n} \bm{P}_{ij} =p^*$, we have
\begin{align*}
\frac{ \sqrt{ p^* \log{n} }}{ d_{\min}} + \norm{\bm{P}\bm{D}^{-1}}_{2,\infty} \sqrt{\frac{\log{n}}{d_{\min}}} &\simeq \frac{ \sqrt{ p^* \log{n} }}{ np^* } + \norm{\bm{P}\bm{D}^{-1}}_{2,\infty} \sqrt{\frac{\log{n}}{ np^* }} \\
&\preceq \frac{ \sqrt{ \log{n} }}{ n\sqrt{p^*} } + \norm{\bm{P}}_{2,\infty}\norm{\bm{D}^{-1}}_2 \sqrt{\frac{\log{n}}{ np^* }} \\
&\simeq \frac{ \sqrt{ \log{n} }}{ n\sqrt{p^*} } + \frac{ \sqrt{n}p^* }{ np^* } \sqrt{\frac{\log{n}}{ np^* }} \\
&\simeq \frac{ \sqrt{ \log{n} }}{ n\sqrt{p^*} }.
\end{align*}

Furthermore, in the proof of \Cref{prop:threshold_1}, we have shown that $\hat{p} \simeq p^*$ with probability greater than $1-2n^{-\gamma}$. In this case, if $h'(n) \succ \frac{ \sqrt{ \log{n} }}{ n\sqrt{ {p^*}^{1+\epsilon} } }$, we have $ S_i \succ \frac{ \sqrt{ \log{n} }}{ n \sqrt{ \hat{p}^{1+\epsilon} } } $ for $i \in {\cal C}$, and $S_i \preceq \frac{ \sqrt{ \log{n} }}{ n \sqrt{ \hat{p} } }$ for $i \in {\cal P}$.

\end{proof}

\begin{proof}[Proof of \Cref{thm:weak_consistency_2}]
The key idea remains the same as in the proof of  \Cref{thm:weak_consistency_1}. We want to bound $\norm{ \bm{P} \bm{D}^{-1} \bm{H} - \hat{\bm{P}} \hat{\bm{D}}^{-1} \bm{H} }_{F}^2 $.
\begin{align*}
\norm{ \bm{P} \bm{D}^{-1} \bm{H} - \hat{\bm{P}} \hat{\bm{D}}^{-1} \bm{H} }_{F}^2 &\preceq \max\{ \rank(\bm{P} \bm{D}^{-1} \bm{H}), \rank(\hat{\bm{P}} \hat{\bm{D}}^{-1} \bm{H}) \} \cdot \norm{ \bm{P} \bm{D}^{-1} \bm{H} - \hat{\bm{P}} \hat{\bm{D}}^{-1} \bm{H} }_2^2 \\
&\le \max\{ \rank(\bm{P}), r \} \cdot \norm{ \bm{P} \bm{D}^{-1} - \hat{\bm{P}} \hat{\bm{D}}^{-1} }_2^2
\end{align*}
Using an argument similar to \eqref{eq:config_bound_2}, and \eqref{eq:config_bound_3}, with probability greater than $1-\frac{2}{n^\gamma}$ we have
\begin{displaymath}
\norm{ \bm{P} \bm{D}^{-1} - \hat{\bm{P}} \hat{\bm{D}}^{-1} }_2 \preceq \norm{ \bm{P} - \hat{\bm{P}} }_{2} \frac{1}{d_{\min}} + \norm{\bm{P}\bm{D}^{-1}}_{2} \sqrt{\frac{ \log{n} }{ d_{\min} }}.
\end{displaymath}
Meanwhile, by \Cref{thm:lei_rinaldo}, we also have
\begin{align*}
\norm{ \bm{P} - \hat{\bm{P}} }_{2} &= \norm{ \bm{P} - \bm{A} + \bm{A} - \hat{\bm{P}} }_{2} \\
&\le \norm{ \bm{P} - \bm{A} }_2 + \norm{ \bm{A} - \hat{\bm{P}} }_2 \\
&\le \norm{ \bm{P} - \bm{A} }_2 + |\hat{\lambda}_{r+1}| \\
&\le 2\norm{ \bm{P} - \bm{A} }_2 + | \lambda_{r+1} | \\
&\preceq \sqrt{np^*} + | \lambda_{r+1} |,
\end{align*}
with probability at least $1-\frac{1}{n^\gamma}$. Therefore, combining the above equations, we get
\begin{displaymath}
\norm{ \bm{P} \bm{D}^{-1} \bm{H} - \hat{\bm{P}} \hat{\bm{D}}^{-1} \bm{H} }_{F}^2 \preceq \max\{ \rank(\bm{P}), r \} \cdot \left[ \frac{ np^* + \lambda_{r+1}^2 }{ d_{\min}^2 } + \norm{\bm{P}\bm{D}^{-1}}_2^2 \frac{ \log{n} }{ d_{\min} } \right],
\end{displaymath}
with probability at least $1-\frac{3}{n^\gamma}$, and the number of misclassified nodes satisfies
$$
M' \preceq \max\{ \rank(\bm{P}), r \} \cdot \frac{ \left[ np^* + \lambda_{r+1}^2 + \norm{\bm{P}\bm{D}^{-1}}_2^2 \cdot d_{\min} \cdot \log{n} \right]}{ d_{\min}^2 \left[h'(n) - \frac{d_{\max}}{(n-1)d_{\min}} \right]^2 }.
$$

\end{proof}

\section{ Additional Simulation Results }\label{appendix:additional_simulation}

In this section, we include the additional simulation results, where the core size and the periphery size are different. We can see that our method achieves the best performance across different settings, which is consistent with the balanced cases. 

\begin{figure}
\centering
\begin{subfigure}{0.96\textwidth}
  \centering
  \includegraphics[width=1\textwidth]{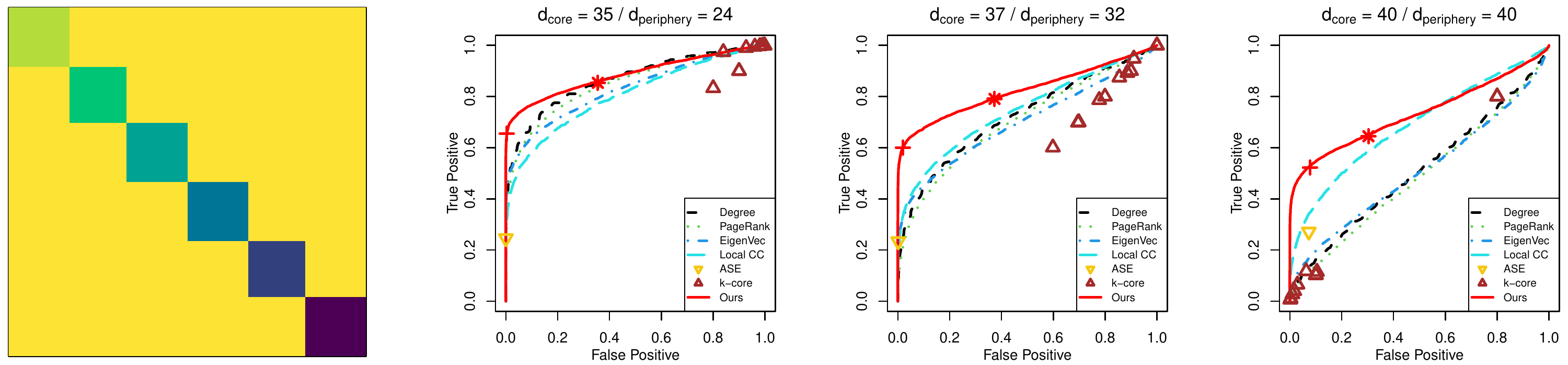}
  \caption{Graphon 1}
\end{subfigure} \\
\begin{subfigure}{0.96\textwidth}
  \centering
  \includegraphics[width=1\textwidth]{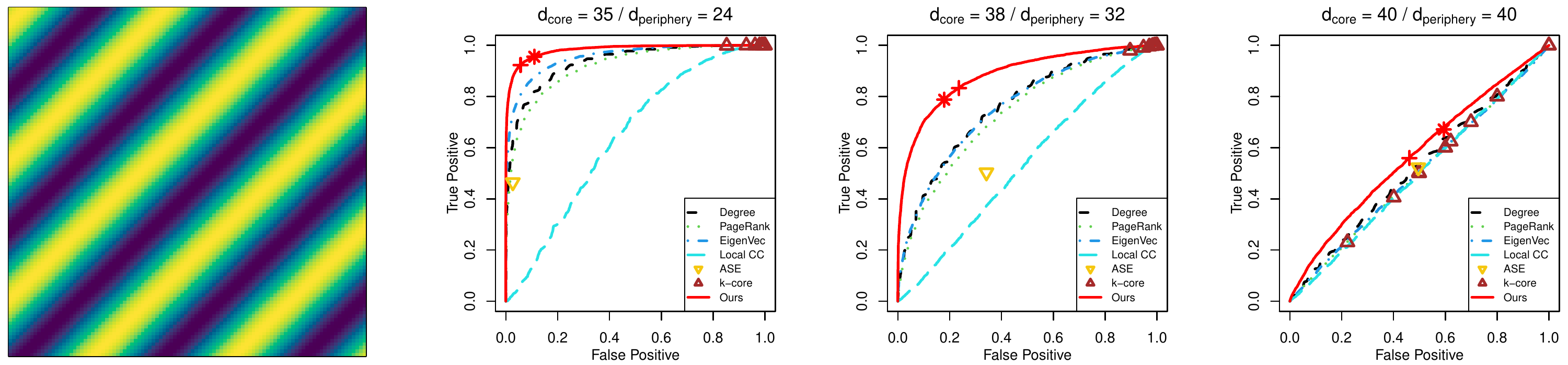}
  \caption{Graphon 2}
\end{subfigure} \\
\begin{subfigure}{0.96\textwidth}
  \centering
  \includegraphics[width=1\textwidth]{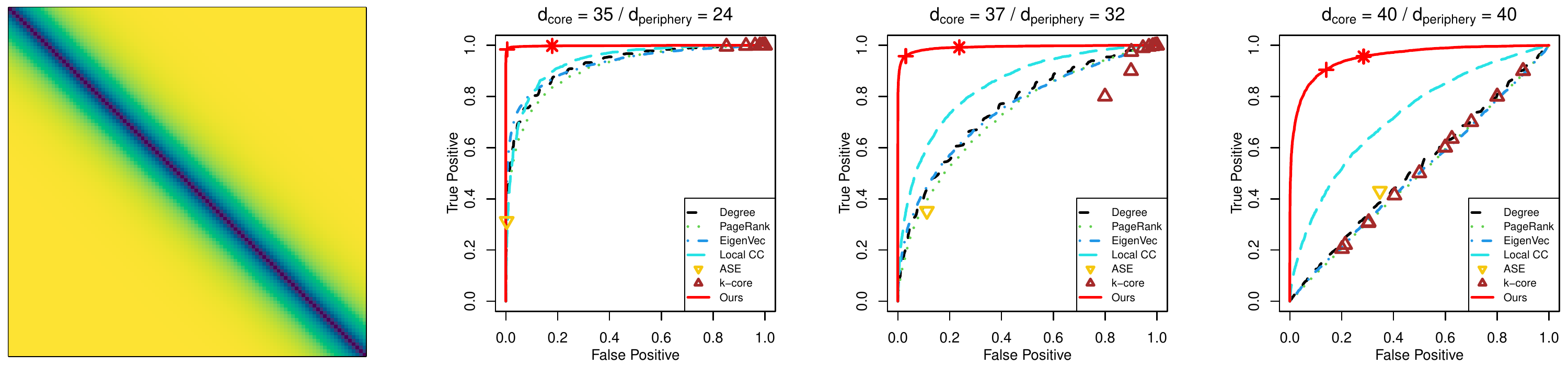}
  \caption{Graphon 3}
\end{subfigure} \\
\caption{ Erd\"{o}s-Renyi periphery. $N_{\cal C}=700$, $N_{\cal P}=1300$. }
\end{figure}

\begin{figure}
\centering
\begin{subfigure}{0.96\textwidth}
  \centering
  \includegraphics[width=1\textwidth]{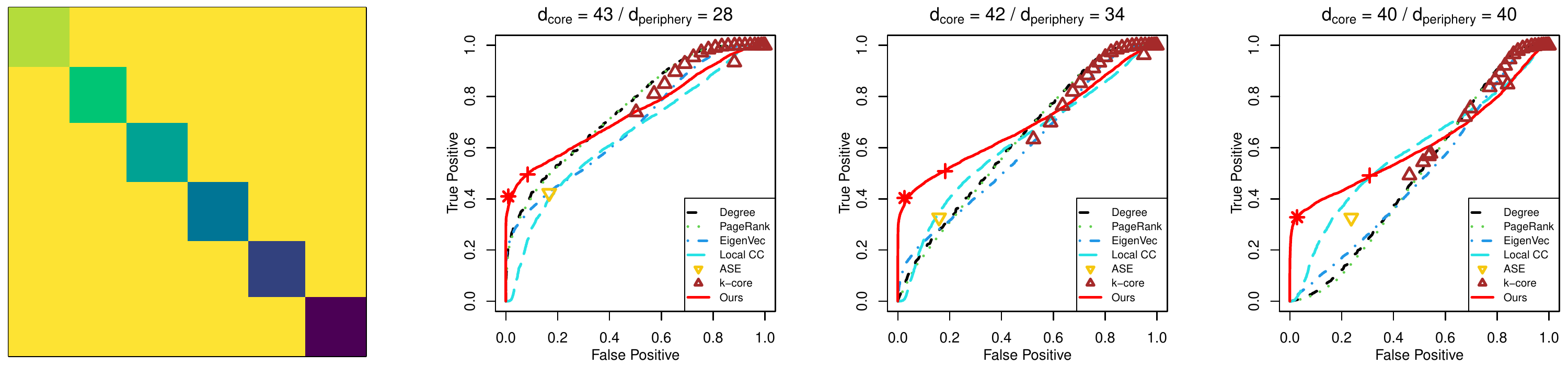}
  \caption{Graphon 1}
\end{subfigure} \\
\begin{subfigure}{0.96\textwidth}
  \centering
  \includegraphics[width=1\textwidth]{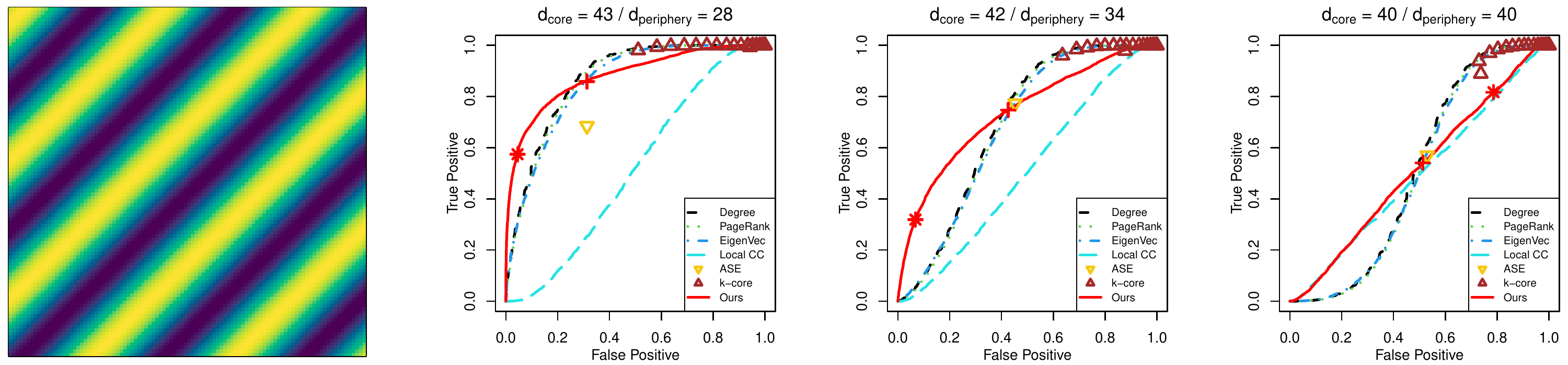}
  \caption{Graphon 2}
\end{subfigure} \\
\begin{subfigure}{0.96\textwidth}
  \centering
  \includegraphics[width=1\textwidth]{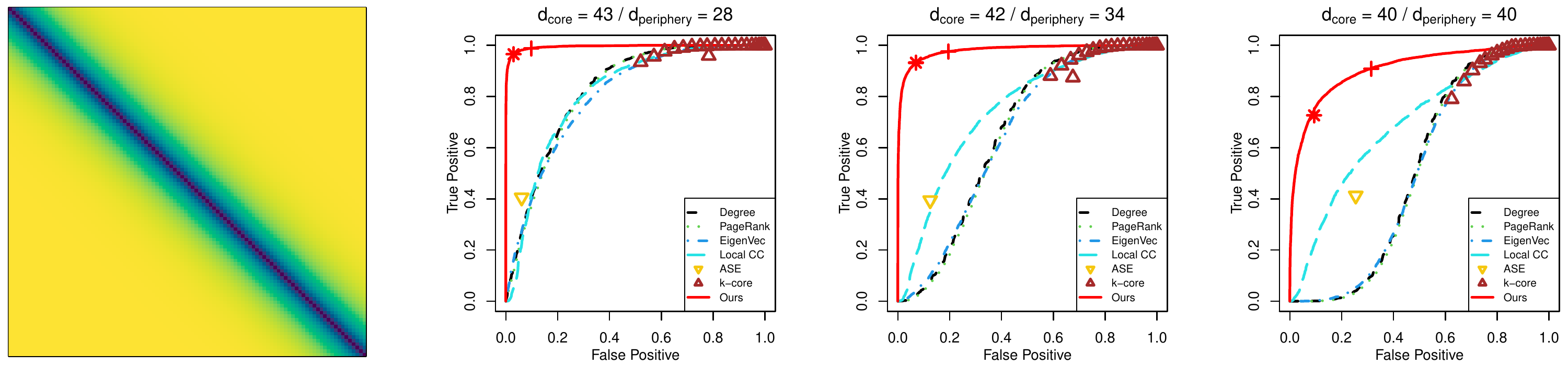}
  \caption{Graphon 3}
\end{subfigure} \\
\caption{ Configuration periphery. $N_{\cal C}=700$, $N_{\cal P}=1300$. }
\end{figure}

\begin{figure}
\centering
\begin{subfigure}{0.96\textwidth}
  \centering
  \includegraphics[width=1\textwidth]{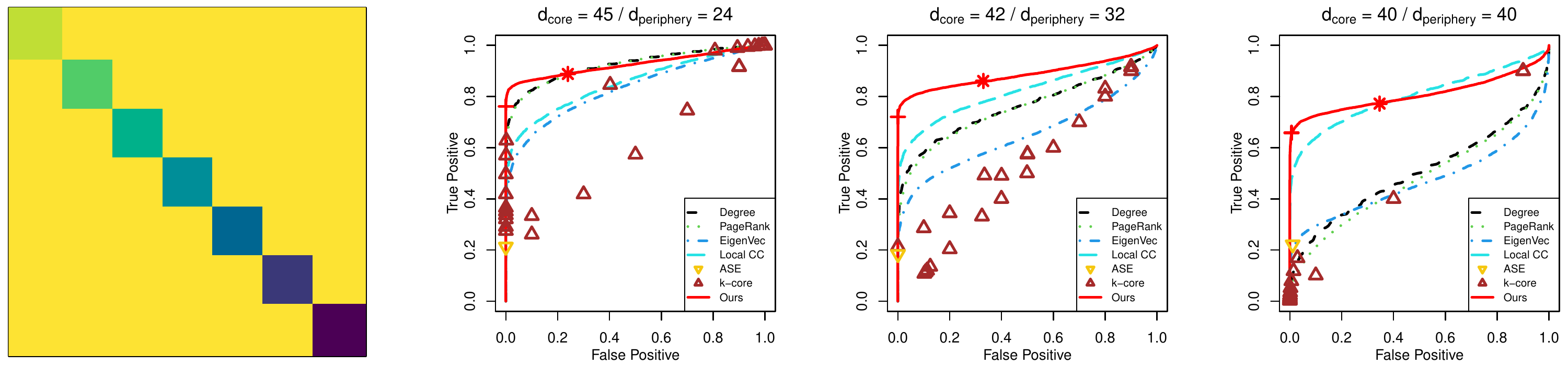}
  \caption{Graphon 1}
\end{subfigure} \\
\begin{subfigure}{0.96\textwidth}
  \centering
  \includegraphics[width=1\textwidth]{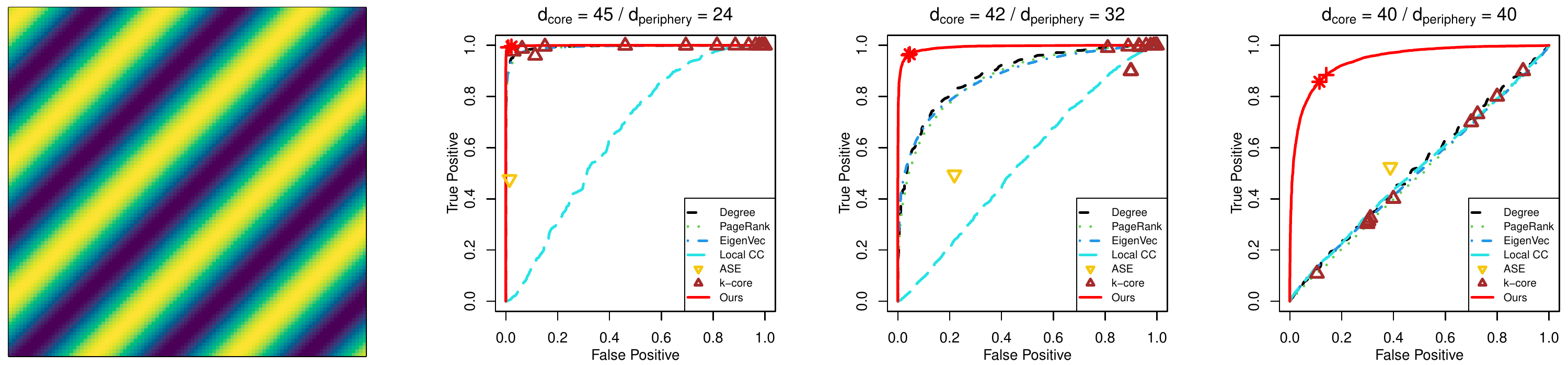}
  \caption{Graphon 2}
\end{subfigure} \\
\begin{subfigure}{0.96\textwidth}
  \centering
  \includegraphics[width=1\textwidth]{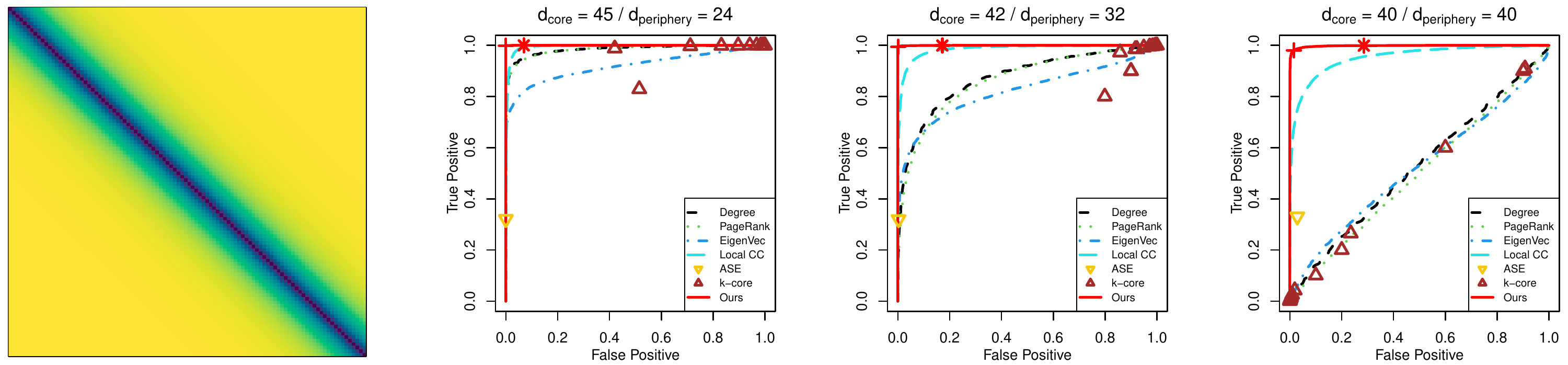}
  \caption{Graphon 3}
\end{subfigure} \\
\caption{ Erd\"{o}s-Renyi periphery. $N_{\cal C}=1300$, $N_{\cal P}=700$. }
\end{figure}

\begin{figure}
\centering
\begin{subfigure}{0.96\textwidth}
  \centering
  \includegraphics[width=1\textwidth]{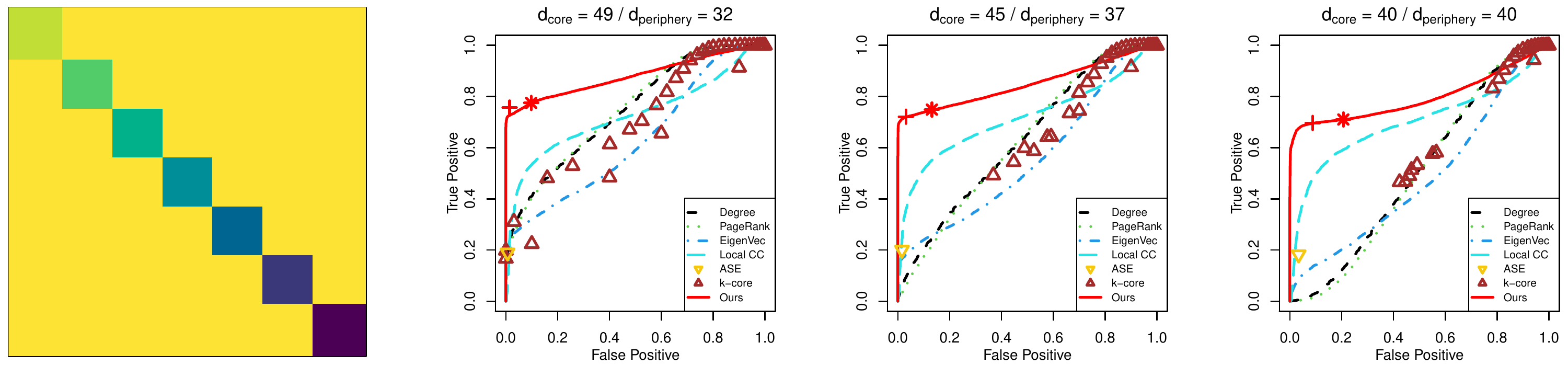}
  \caption{Graphon 1}
\end{subfigure} \\
\begin{subfigure}{0.96\textwidth}
  \centering
  \includegraphics[width=1\textwidth]{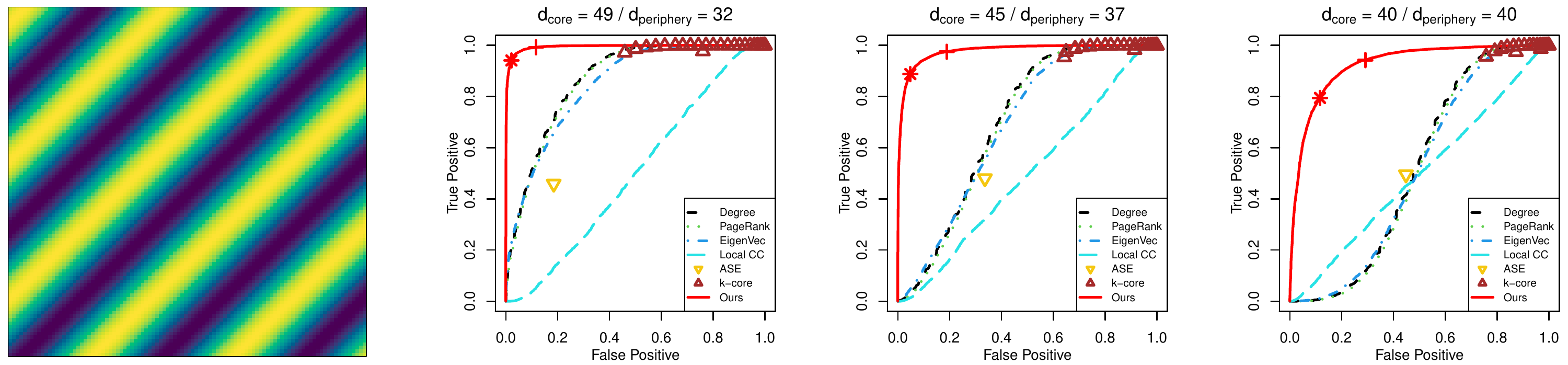}
  \caption{Graphon 2}
\end{subfigure} \\
\begin{subfigure}{0.96\textwidth}
  \centering
  \includegraphics[width=1\textwidth]{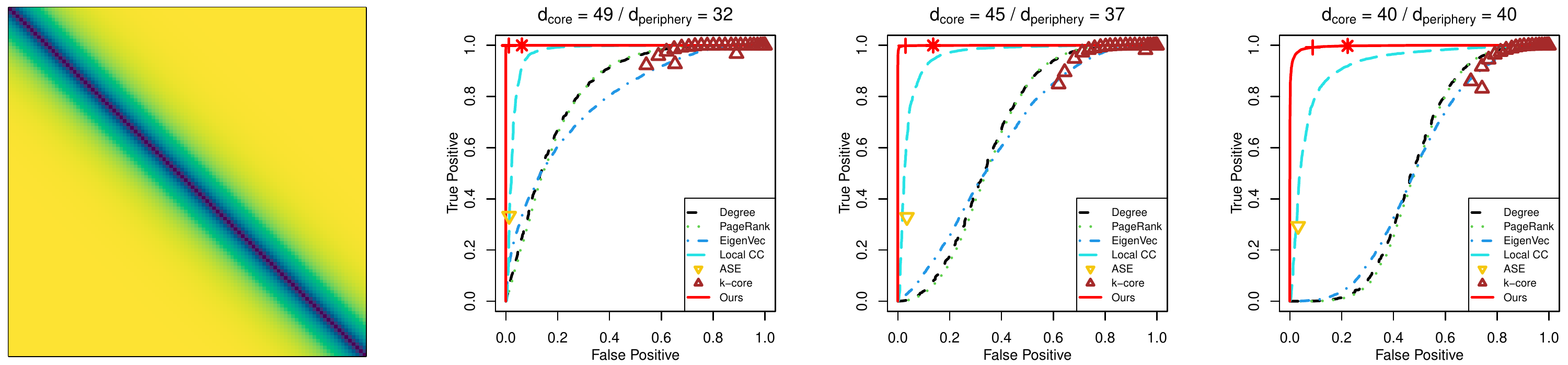}
  \caption{Graphon 3}
\end{subfigure} \\
\caption{ Configuration periphery. $N_{\cal C}=1300$, $N_{\cal P}=700$. }
\end{figure}

\end{appendix}

\end{document}